\newcommand{\red}[1]{\textcolor{red}{#1}}
\newcommand{\method}{Diffusion Forcing Transformer\xspace}
\newcommand{\mtd}{DFoT\xspace}
\newcommand{\HG}{\text{HG}\xspace}
\newcommand{\HGv}{\text{HG-v}\xspace}
\newcommand{\HGt}{\text{HG-t}\xspace}
\newcommand{\HGf}{\text{HG-f}\xspace}
\newcommand{\HGtf}{\text{HG-tf}\xspace}
\newcommand{\setfancyheader}[1]{
  \clearpage
  \pagestyle{fancy}
  \fancyhead{}
  \fancyhead[R]{\textbf{#1}}
}
\definecolor{xblue}{HTML}{4169E1}
\definecolor{xgreen}{HTML}{036C3A}
\definecolor{xpurple}{HTML}{9838B1}
\definecolor{xgray}{HTML}{808080}
\definecolor{xsienna}{HTML}{8B4512}
\definecolor{xslategray}{HTML}{70818F}
\definecolor{xred}{HTML}{FF2121}
\definecolor{xorange}{HTML}{FF8C00}
\definecolor{xorangeii}{HTML}{FF8200}
\definecolor{xgreenii}{HTML}{009F86}
\newcommand{\xblue}[1]{\textcolor{xblue}{#1}}
\newcommand{\xgreen}[1]{\textcolor{xgreen}{#1}}
\newcommand{\xpurple}[1]{\textcolor{xpurple}{#1}}
\newcommand{\xgray}[1]{\textcolor{xgray}{#1}}
\newcommand{\xsienna}[1]{\textcolor{xsienna}{#1}}
\newcommand{\xred}[1]{\textcolor{xred}{#1}}
\newcommand{\xorange}[1]{\textcolor{xorange}{#1}}
\newcommand{\xorangeii}[1]{\textcolor{xorangeii}{#1}}
\newcommand{\xgreenii}[1]{\textcolor{xgreenii}{#1}}
\theoremstyle{plain}
\newtheorem{theorem}{Theorem}[section]
\newtheorem{proposition}[theorem]{Proposition}
\newtheorem{lemma}[theorem]{Lemma}
\newtheorem{corollary}[theorem]{Corollary}
\theoremstyle{definition}
\newtheorem{definition}[theorem]{Definition}
\newtheorem{assumption}[theorem]{Assumption}
\newcommand{\tighteq}{\mkern-5mu = \mkern-5mu}
\newcommand{\tighttimes}{\mkern-5mu \times \mkern-5mu}
\newcommand{\tightsim}{\mkern-5mu \sim \mkern-5mu}
\theoremstyle{remark}
\newtheorem{remark}[theorem]{Remark}
\icmltitlerunning{History-Guided Video Diffusion}{}
\def\cD{{\mathcal{D}}}
\def\cT{{\mathcal{T}}}
\def\cH{{\mathcal{H}}}
\def\cG{{\mathcal{G}}}
\def\bx{{\mathbf{x}}}
\def\bc{{\mathbf{c}}}
\def\xtk{{\mathbf{x}_t^{k_t}}}
\def\xGk{{\mathbf{x}_\cG^{k}}}
\def\xH{{\mathbf{x}_\cH}}
\def\kH{k_\cH}
\def\xH{{\mathbf{x}_{\mathcal{H}}}}
\def\xG{{\mathbf{x}_{\mathcal{G}}}}
\def\vtheta{{\bm{\theta}}}
\def\rvc{{\mathbf{c}}}
\def\rvs{{\mathbf{s}}}
\def\rvv{{\mathbf{v}}}
\def\rvx{{\mathbf{x}}}
\def\score{{\nabla \log\,}}
\newcommand{\Epz}{\Exp_{p,\mathbf{z}_{1:T}}}
\newcommand{\rmd}{\mathrm{d}}
\newcommand{\Dkl}{\mathrm{D}_{\mathbb{KL}}}
\newcommand{\by}{\mathbf{y}}
\newcommand{\bz}{\mathbf{z}}
\newcommand{\Exp}{\mathbb{E}}
\newcommand{\beps}{\bm{\epsilon}}
\newcommand{\cN}{\mathcal{N}}
\titlespacing\section{0pt}{2pt plus 1pt minus 1pt}{1pt plus 1pt minus 1pt}
\titlespacing\subsection{0pt}{2pt plus 1pt minus 1pt}{1pt plus 1pt minus 1pt}
\begin{document}

\twocolumn[{ 
\icmltitle{History-Guided Video Diffusion}

\icmlsetsymbol{equal}{*}

\begin{icmlauthorlist}
\icmlauthor{Kiwhan Song*}{MIT}
\icmlauthor{Boyuan Chen*}{MIT}
\icmlauthor{Max Simchowitz}{CMU}
\icmlauthor{Yilun Du}{Harvard}
\icmlauthor{Russ Tedrake}{MIT}
\icmlauthor{Vincent Sitzmann}{MIT}
\end{icmlauthorlist}

\icmlaffiliation{MIT}{MIT}
\icmlaffiliation{Harvard}{Harvard University}
\icmlaffiliation{CMU}{Carnegie Mellon
University}

\icmlcorrespondingauthor{Kiwhan Song}{kiwhan@mit.edu}
\icmlcorrespondingauthor{Boyuan Chen}{boyuanc@mit.edu}

\icmlkeywords{diffusion, video, guidance, history}
\vskip 0.2in

}]

\printAffiliationsAndNotice{\icmlEqualContribution}

\begin{abstract}
Classifier-free guidance (CFG) is a key technique for improving conditional generation in diffusion models, enabling more accurate control while enhancing sample quality. It is natural to extend this technique to video diffusion, which generates video conditioned on a variable number of context frames,  collectively referred to as history. However, we find two key challenges to guiding with variable-length history: architectures that only support fixed-size conditioning, and the empirical observation that CFG-style history dropout performs poorly.  To address this, we propose the \method (\mtd), a video diffusion architecture and theoretically grounded training objective that jointly enable conditioning on a flexible number of history frames. We then introduce \emph{History Guidance}, a family of guidance methods uniquely enabled by \mtd. We show that its simplest form, \emph{vanilla history guidance}, already significantly improves video generation quality and temporal consistency. A more advanced method, 
\emph{history guidance across time and frequency} further enhances motion dynamics, enables compositional generalization to out-of-distribution history, and can stably roll out extremely long videos. Project website: \url{https://boyuan.space/history-guidance}

\end{abstract}
\vspace{-10pt}

\begin{figure*}
    \centering
    \captionsetup{type=figure}
    \includegraphics[width=\textwidth]{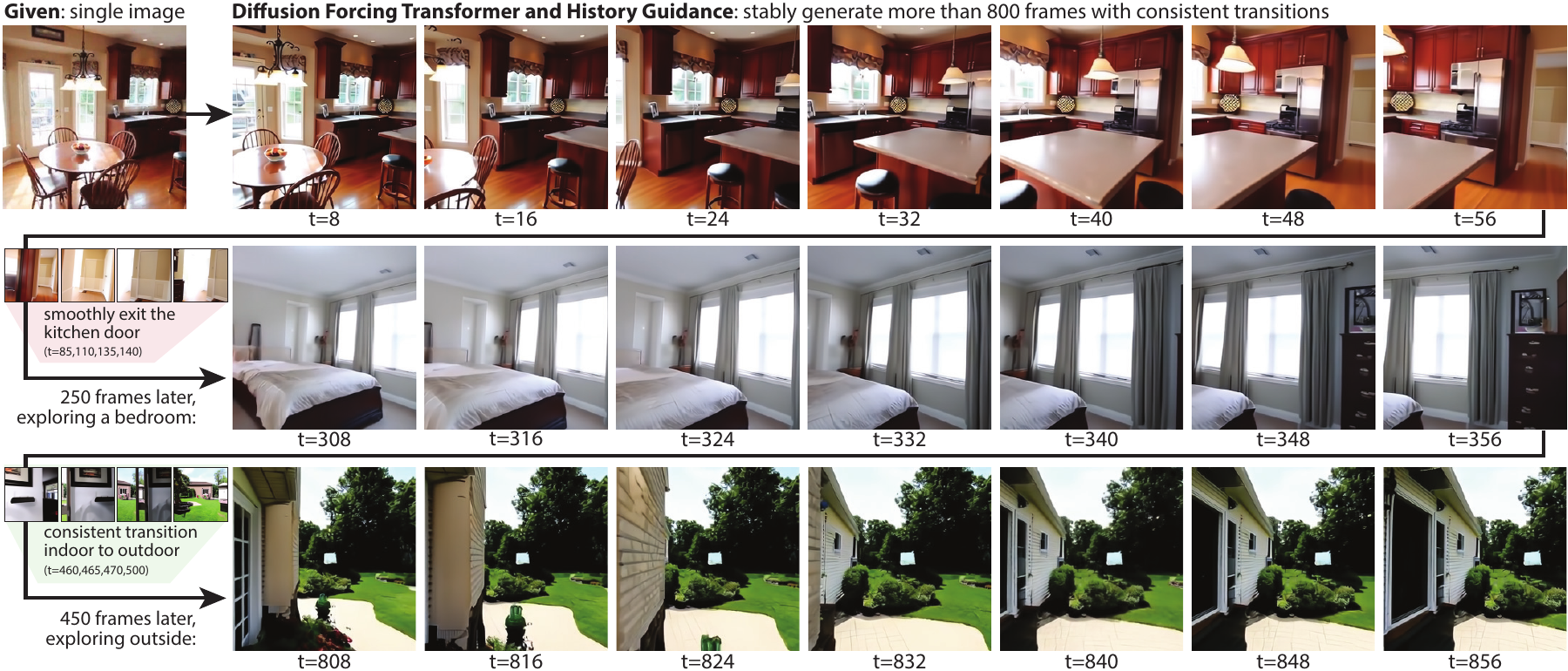}
    \vspace{-20pt}
    \captionof{figure}{
      \textbf{\method with history guidance enables stable rollout of extremely long videos}. We visualize 21 frames from an 862-frame long navigation video generated by our \mtd model from a \emph{single image} in a test set video that the model has never seen before. \textbf{{Best viewed as videos on our \href{https://boyuan.space/history-guidance}{{project website}}.}}
    }\label{fig:teaser}
    \vspace{-15pt}
\end{figure*}
\section{Introduction}
Diffusion models are effective generative models in  domains such as image, sound, and video. Critical to their success is classifier-free guidance (CFG) \cite{ho2022classifierfree}, which trades off between sample quality and diversity by jointly training a conditional and an unconditional diffusion model and combining their score estimates when sampling.

In the realm of video generative models, CFG commonly relies on either text or image prompts as conditioning variables. Yet, another conditioning variable, namely the entire collection of previous video frames, or \emph{history},  deserves further exploration. In this paper, we investigate the following question: \ul{Can we use different portions of history - variable lengths, subsets of frames, and even different image-domain frequencies - as a form of guidance for video generation?}
Importantly, CFG with flexible history is incompatible with existing diffusion model architectures and the most obvious fix significantly degrades sample quality   (see \Cref{sec:history_guidance_challenges}).

To address these limitations, we propose the \method (\mtd), a video diffusion framework that enables flexible conditioning on any portion of the input history. Extending the ``noising-as-masking" paradigm in Diffusion Forcing~\cite{chen2024diffusion} to non-causal transformers, \mtd trains video diffusion models by applying independent noise levels to each frame. During sampling, portions of the history can be selectively masked with noise, enabling flexible conditioning and guidance. For instance, in CFG, the unconditional score corresponds to our model with the entire history masked out. Notably, \mtd is compatible with existing architectures such as DiT~\cite{peebles2023scalable} and U-ViT~\cite{hoogeboom2023simple, hoogeboom2024simpler} and can be efficiently implemented through fine-tuning of pre-trained video diffusion models.

At sampling time, the \mtd facilitates a family of history-conditioned guidance methods, collectively referred to as \emph{History Guidance} (HG). The simplest of these, \emph{Vanilla History Guidance} (\HGv), uses an arbitrary length of history as the conditioning variable for CFG. Notably, even this simple method significantly enhances video quality. We further introduce two advanced methods enabled by the \mtd: \emph{Temporal History Guidance} (\HGt) and \emph{Fractional History Guidance} (\HGf)
. These extend history guidance beyond a special case of CFG. Temporal History Guidance combines scores from different history windows. Fractional History Guidance conditions on history windows corrupted by varying levels of noise, effectively acting as a ``low-pass filter'' on historical frames. With minor modifications, it can also target specific \emph{frequency bandwidths} to enhance the dynamic degree of generated videos (hence the frequency-based terminology). Together, we compose \HGt and \HGf to create a comprehensive history guidance paradigm, which we term \emph{history guidance across time and frequency} (\HGtf).

The \method and associated History Guidance methods dramatically improve the quality and consistency of video generation, enabling the creation of exceptionally long videos through autoregressive extension, outperforming the de facto standard DiT diffusion and performing on par with industry models trained with an order of magnitude more compute. In Fig.~\ref{fig:teaser}, we showcase our method by using history guidance across time and frequency with \mtd{} to generate an 862-frame navigation video from a single image—many times longer than prior results and the maximum video length in the training set.

Our contributions can be summarized as follows: 
\textbf{1.}~We propose the \emph{\method} (\mtd), a competitive video diffusion framework that enables sampling-time conditioning using \emph{any portion} of history, a capability that is difficult to achieve with existing models.
\textbf{2.}~We introduce \emph{History Guidance} (\HG), a family of history-conditioned guidance methods enabled by \mtd that significantly enhance sample consistency, motion dynamics, and visual quality in video diffusion.
\textbf{3.}~We empirically demonstrate the state-of-the-art performance and new capabilities enabled by our method, especially in long video generation. Additionally, we provide a theoretical justification of the training objective through a variational lower bound.

\section{Preliminaries and Related Work}

\begin{figure*}[t]
\centering
\begin{subfigure}[b]{0\textwidth}
\end{subfigure}
\begin{subfigure}[b]{0.95\textwidth}
    \includegraphics[width=\textwidth]{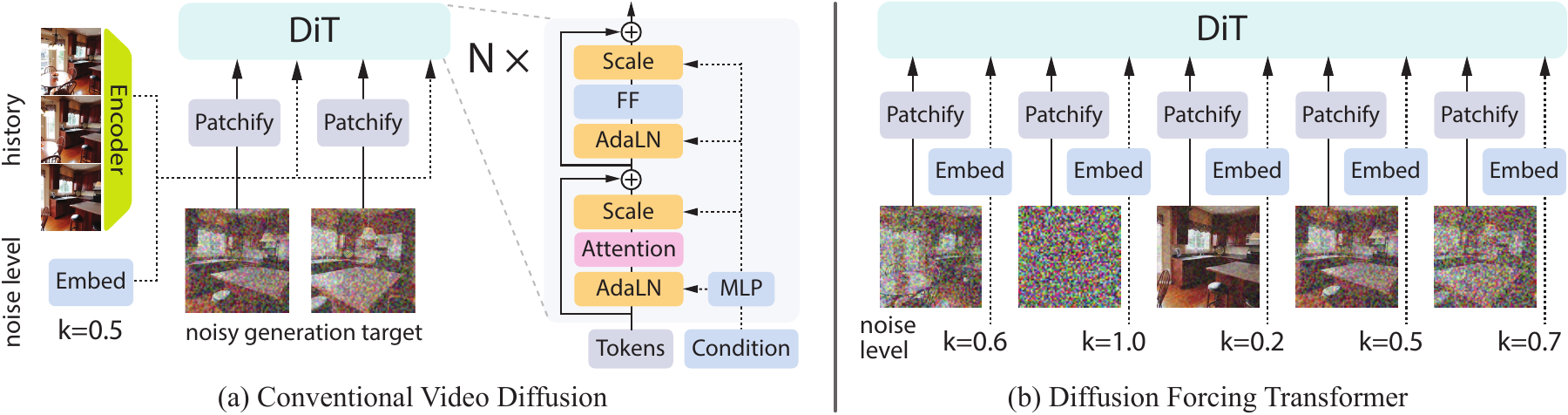}
    \phantomcaption
    \vspace{-10pt}
    \label{fig:architecture-conventional}
\end{subfigure}
\begin{subfigure}[b]{0\textwidth}
    \phantomcaption
    \label{fig:architecture-ours}
\end{subfigure}
\vspace{-10pt}
\caption{ \small \textbf{Comparison of the conventional conditional video diffusion models and Diffusion Forcing Transformer.}  At training time, conventional (a) approaches treat history as part of the conditioning input, first encoded by an \emph{separate} encoder and then injected to the DiT via Adaptive Layer Norm and scaling. The Diffusion Forcing Transformer (b) instead does not distinguish between history and generation target frames. It trains a DiT to denoise \emph{all} frames of a sequence, where frames have independently varying noise levels. }
\label{fig:architecture}
\vspace{-10pt}
\end{figure*}

\textbf{Diffusion Models.}
Diffusion models~\cite{sohl2015deep, ddpm, song2021scorebased} define a forward process that transforms a data distribution into white noise via a stochastic process over increasing \emph{noise levels} $k \in [0, 1]$: $\bx^k = \alpha_k \bx^0 + \sigma_k \beps$, where $\beps \tightsim \mathcal{N}(0, I)$. The goal of the model is to reverse this process by learning to estimate the \emph{score function} $s_\vtheta(\bx^k, k) \approx \score p_k(\bx^k)$~\cite{vincent2011connection}, which enables iterative denoising of a data point, gradually transforming it from white noise back to a sample from the original distribution. In practice, the score function is often parameterized as an affine function of alternative objectives such as the noise prediction $\beps_\theta(\bx^k, k)\approx 
 \beps$. 

\textbf{Video Diffusion Models (VDMs).} VDMs have enabled the generation of realistic, high-resolution videos~\cite{videoworldsimulators2024,yang2024cogvideox,zheng2024open,kong2024hunyuanvideo}. Their success is largely attributed to advancements such as transferring successful image diffusion models~\cite{singer2022make, guo2023animatediff}, scaling data and model~\cite{blattmann2023stable}, improving transformer-based architectures~\cite{peebles2023scalable, gupta2023photorealistic,jin2024pyramidal}, and enhancing computational efficiency through multi-stage approaches like latent VDMs~\cite{he2022latent, blattmann2023align, ma2024latte,yin2024slow}. 
Many of these models~\cite{blattmann2023stable, yang2024cogvideox} focus on generating videos from a single first image. In contrast, our model is designed to condition on arbitrary length histories, a crucial capability for autoregressively extending newly generated videos.

\textbf{Conditional Diffusion Sampling with Guidance.} \emph{Classifier-free guidance} (CFG)~\cite{ho2022classifierfree} is a crucial technique for improving sample quality in diffusion models. CFG jointly trains conditional and unconditional models $s_\vtheta(\bx, \rvc, k) \approx \score p_k(\bx^k | \rvc)$ and $s_\vtheta(\bx, \varnothing, k) \approx \score p_k(\bx^k)$ by randomly dropping out the conditioning $\rvc$. During sampling, the true conditional score $\score p_k(\bx^k | \rvc)$ is replaced with the weighted score
\vspace{-4pt}
\begin{equation}
\score p_k(\bx^k) + \omega \big[ \score p_k(\bx^k | \rvc) - \score p_k(\bx^k) \big],
\vspace{-4pt}
\end{equation}
where $\omega \geq 1$ is the \emph{guidance scale} that pushes the sample towards the conditioning. In VDMs, CFG is predominantly used for text guidance ~\cite{ho2022video,wang2023modelscope}. For frame conditioning, ``first frame'' guidance is commonplace in image-to-video models~\cite{blattmann2023stable,yang2024cogvideox}, or ``fixed set of few frames''~\cite{blattmann2023align,gupta2023photorealistic,watson2024controlling}, likewise in multi-view diffusion models~\cite{gao2024cat3d}. 

Our work generalizes CFG by enabling guidance with a variable number of conditioning frames and later extends beyond the conventional approach of subtracting an unconditioned score - similar to prior works in compositional generative models~\cite{du2024compositional, liu2022compositional, du2023reduce}, we compose score from multiple conditioning to combine their behaviors. Additionally, we eliminate the reliance on binary-dropout training, the default mechanism for enabling CFG, which we empirically show performs sub-optimally when extended to history guidance.

\textbf{Diffusion Forcing.}  Traditionally, diffusion models are trained using uniform noise levels across all tokens. Diffusion Forcing (DF)~\cite{chen2024diffusion} proposes training sequence diffusion models with independently varied noise levels per frame. Although DF provides theoretical and empirical support for this approach, their work focuses on causal, state-space models. CausVid~\cite{yin2024slow} builds on DF by scaling it to a causal transformer, creating an autoregressive video foundation model. Our work extends the flexibility of DF by developing both the theory and architecture for non-causal, state-free models, enabling new, unexplored capabilities in video generation.
\section{Challenges when Guiding with History}
\label{sec:history_guidance_challenges}

\begin{figure*}
   \centering
   \includegraphics[width=\textwidth]{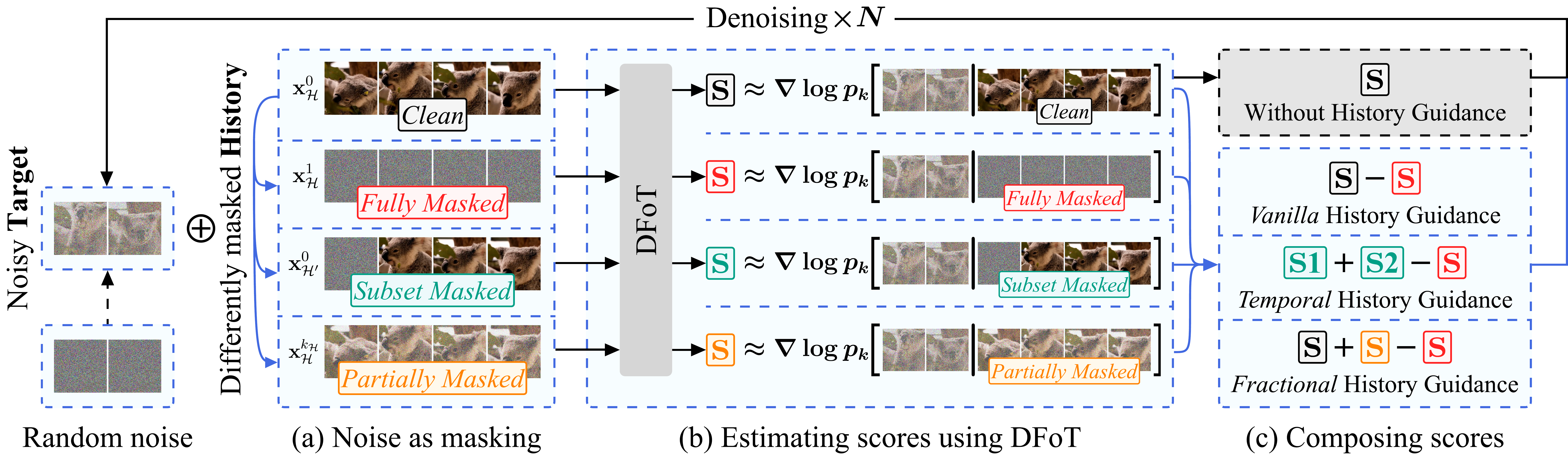}
   \vspace{-20pt}
   \caption{
      \textbf{Sampling with \mtd and History Guidance.} A \mtd can be used to estimate scores conditioned on differently masked histories using \emph{noise as masking}. This includes clean (full history), \xred{fully masked (unconditional)}, \xgreenii{subset masked (shorter history)}, or \xorangeii{partially masked (low-frequency history)}.  These scores can be composed when sampling to obtain a family of \emph{History Guidance} methods.
   }\label{fig:sampling}
   \vspace{-10pt}
\end{figure*}

Video diffusion models are conditional diffusion models  $p(\bx|\bc)$, where $\bx$ denotes frames to be generated, and $\bc$ represents the conditioning (e.g. text prompt, or a few observed prior frames). For simplicity, we refer to the latter as \emph{history}, even when the observed images could be e.g. a subset of keyframes that are spaced across time. Our discussion of $\bc$ will focus exclusively on history conditioning and exclude text or other forms of conditioning in notation. Formally, let $\bx_{\cT}$ denote a $T$-frame video clips with indices $\cT = \{1, 2, \ldots, T\}$. Define $\cH \subset \cT$ as the indices of history frames used for conditioning, and $\cG = \cT \setminus \cH$ as the indices of the frames to be generated. Our objective is to model the conditional distribution $p(\xG | \xH)$ with a diffusion model. 

We aim to extend classifier-free guidance (CFG) to this setting. Since the history $\xH$ serves as conditioning, sampling can be performed by estimating the following score: 
\begin{equation} 
\label{eq:history_guidance}
\score p_k(\xGk)
+ \omega \big[\score p_k(\xGk|\xH)  - \score p_k(\xGk)\big].
\end{equation}
This approach differs from conventional CFG in two ways: 1) The generation $\xG$ and conditioning history $\xH$ belong to the same signal $\bx_{\cT}$, differing only in their indices $\cG, \cH\subset \cT$; thus, the generated $\xG$ can be reused as conditioning $\xH$ for generating subsequent frames. 2) The history $\xH$ can be any subset of $\cT$, allowing its length to vary. Guiding with history, therefore, requires a model that can estimate both conditional and unconditional scores given arbitrary subsets of video frames. Below, we analyze how these differences present challenges for implementation within the current paradigm of video diffusion models (VDMs).

\textbf{Architectures with fixed-length conditioning.}
As shown in \cref{fig:architecture-conventional}, DiT~\cite{peebles2023scalable} or U-Net-based diffusion models~\cite{bao2023all,rombach2022high} typically inject conditioning using AdaLN~\cite{peebles2023scalable, perez2018film} layers or by concatenating the conditioning with noisy input frames along the channel dimension.
This design constrains conditioning to a fixed-size vector. While some models adopt sequence encoders for variable-length conditioning (e.g., for text inputs), these encoders are often pre-trained~\cite{yang2024cogvideox} and cannot share parameters with the diffusion model to encode history frames.
Consequently, guidance has been limited to fixed-length and generally short history~\cite{blattmann2023stable, xing2023dynamicrafter, yang2024cogvideox, watson2024controlling}.

\textbf{Framewise Binary Dropout performs poorly.} 
Classifier-free guidance is typically implemented using a single network that jointly represents the conditional and unconditional models. These are trained via \emph{binary dropout}, where the conditioning variable $\bc$ is randomly masked during training  with a certain probability. 
History guidance can, in principle, be achieved by randomly dropping out subsets of history frames during training.
However, our ablations (Sec.~\ref{sec:exp_ablation}) reveal that this approach performs poorly. We hypothesize that this is due to inefficient token utilization: although the model processes all $|\cT|$ frames via attention, only a random subset of $|\cG|$ frames contribute to the loss. This becomes more pronounced as videos grow longer, making framewise binary dropout a suboptimal choice.
\section{The Diffusion Forcing Transformer}
\label{sec:dft}

In this section, we introduce the \method (\mtd), a simple yet powerful video diffusion framework designed to model score functions associated with \emph{different portions of history}. This includes variable-length histories, arbitrary subsets of frames, and even history processed at different image-domain frequencies. \mtd improves video generation performance as a base model even without guidance. By addressing the challenges outlined in \cref{sec:history_guidance_challenges}, \mtd further enables guidance with flexible history and a more advanced family of history guidance methods described in \cref{sec:our_history_guidance}.

\textbf{Noise as Masking.}
The forward diffusion process turns the $t$-th frame $\bx_t$ of a video sequence into a noisy frame $\xtk$ at noise levels $k_t \in [0, 1]$.
One can interpret this as progressively masking $\bx_t$ with noise~\cite{chen2024diffusion} - $\bx_t^0$ is clean and hence unmasked, $\bx_t^1$ is \emph{fully masked} and contains no information about the original $\bx_t$. Intermediate noise levels $(0 < k_t < 1)$ yield a \emph{partially masked} frame $\xtk$,  retaining a noisy snapshot of the original frame's information.

\textbf{History as noise-free frames.} Denoising generated frames $\xGk$ conditioned on history $\xH$ can be unified under the noise-as-masking framework. Specifically, this involves denoising the entire sequence of frames $\xH \cup \xGk$ with noise levels $k_{\cT} = \left[k_1, k_2, \cdots, k_T\right]$ defined as:
\vspace{-0.09in}
\begin{equation}
    k_t = 
    \begin{cases} 
    0 & \text{if } t \in \cH \\
    k & \text{if } t \in \cG. \\
    \end{cases}
    \vspace{-5pt}
\label{eqn:history_as_clean}
\end{equation}
This formulation treats history and generated frames as parts of the same input to the transformer, rather than separating history as a distinct ``conditioning'' input (see \cref{fig:architecture} and \cref{sec:history_guidance_challenges}). This unification allows any full-sequence transformer to be fine-tuned into a history-conditional model with variable-length history, simply by varying the noise levels within each sequence.

\textbf{Training: Per-frame Independent Noise Levels.} As illustrated in \cref{fig:architecture-ours}, instead of setting noise levels to zero for all history frames, we adopt \emph{per-frame independent noise levels} introduced in Diffusion Forcing~\cite{chen2024diffusion}. Each frame $\bx_t \in \bx_\cT$ is assigned an independent noise level $k_t \in [0, 1]$, resulting in random sequences of noise levels $k_{\cT}$ in contrast with Equation~\ref{eqn:history_as_clean}.
{}
The \mtd model is then trained to minimize the following noise prediction loss, where $\beps_\cT$ denotes noise added to all frames: 
\vspace{-5pt}
\begin{equation}
\label{eq:train}
\mathop{\mathlarger{\mathbb{E}}}_{\substack{k_{\cT}, \bx_\cT, \beps_\cT}}
\Big[\| \beps_\cT - \beps_\theta(\bx_\cT^{k_{\cT}},k_{\cT}) \|^2\Big],
\vspace{-5pt}
\end{equation}
 Crucially, noise levels are selected independently for all frames without distinguishing the past and the future. This enables parallel training while also allowing \emph{non-causal} conditioning on partially masked future frames. In ~\cref{app:method_details_objective_causal}, we further discuss a simplified objective when $\max(|\cH|)\ll T$ and a causal adaptation of our model. 
{}
In Appendix~\ref{appendix:theory_elbo}, we justify this training objective as optimizing a (reweighted) valid Evidence Lower Bound (ELBO) on the expected log-likelihoods:
\begin{theorem}[Informal] 
    The \mtd training objective (\Cref{eq:train}) optimizes a reweighting of an Evidence Lower Bound (ELBO) on the expected log-likelihoods. 
    \label{theorem:informal}
\end{theorem}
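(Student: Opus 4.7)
The plan is to mimic the classical derivation of the DDPM loss as a reweighted ELBO (Ho et al.\ 2020; Kingma et al.\ 2021), but lifted to the setting of per-frame independent noise schedules. The single, crucial structural fact I will exploit is that even though the denoiser $\beps_\theta(\bx_\cT^{k_\cT},k_\cT)$ couples all frames through attention, the forward corruption $q(\bx_\cT^{k_\cT}\mid \bx_\cT^{0}) = \prod_{t=1}^{T} q(\bx_t^{k_t}\mid \bx_t^{0})$ factorizes across $t$. This means the posterior $q$ we use to bound $\log p_\theta(\bx_\cT^{0})$ is a product of $T$ independent Gaussian diffusion chains, one per frame, while the reverse parametrization is a single joint model.

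First I would introduce, for each frame index $t$, the standard discretized forward chain $\bx_t^{0}\to\bx_t^{1/N}\to\cdots\to\bx_t^{1}$ and let $q(\bx_\cT^{1:N/N}\mid \bx_\cT^{0})$ be the product chain. Because $q$ factorizes, the ``variational decomposition'' of $-\log p_\theta(\bx_\cT^{0})$ still collapses into a sum of KL terms, one per \emph{time index of the chain}:
\begin{equation*}
-\log p_\theta(\bx_\cT^{0}) \le \mathbb{E}_q\!\Bigl[D_{KL}(q(\bx_\cT^{1})\,\|\,p(\bx_\cT^{1})) + \sum_{i=1}^{N} D_{KL}\bigl(q(\bx_\cT^{(i-1)/N}\mid \bx_\cT^{i/N},\bx_\cT^{0})\,\|\,p_\theta(\bx_\cT^{(i-1)/N}\mid \bx_\cT^{i/N})\bigr) - \log p_\theta(\bx_\cT^{0}\mid \bx_\cT^{1/N})\Bigr].
\end{equation*}
Second, I would use the product form of $q$ to split each joint KL into a sum of $T$ per-frame KLs, and then parametrize the reverse transition for frame $t$ at level $k_t=i/N$ as the conditional Gaussian whose mean is the usual affine function of $\beps_\theta(\bx_\cT^{k_\cT},k_\cT)_t$. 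Standard Gaussian-KL algebra then reduces each per-frame KL to a squared-error term of the form $w(k_t)\,\|\beps_t-\beps_\theta(\bx_\cT^{k_\cT},k_\cT)_t\|^{2}$, identical in form to the DDPM computation but with the key difference that the network input carries a heterogeneous noise vector $k_\cT$.

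Third, I would let $N\to\infty$ (or simply stay in the discrete setting) and observe that summing over $i$ and over $t$ and then taking the expectation over $\bx_\cT^{0}$ and the fresh noises $\beps_\cT$ is exactly equal, up to the weight $w(k_t)$, to the Monte-Carlo estimator in \Cref{eq:train} when $k_\cT$ is sampled uniformly, independently per frame --- because uniform sampling of $k_t$ over $\{0,1/N,\ldots,1\}$ reproduces the inner sum over $i$ and the independence across $t$ reproduces the outer sum over $t$. Replacing the DDPM weights by $1$ is the ``reweighting'' alluded to in the theorem statement, and is harmless for the bound (it corresponds to a nonnegative rescaling of each ELBO term, as in Ho et al.).

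The main obstacle I anticipate is not the per-frame MSE reduction, which is routine, but rather justifying that the \emph{joint} reverse model is a legitimate parametrization of $p_\theta(\bx_\cT^{(i-1)/N}\mid \bx_\cT^{i/N})$ even though $\beps_\theta$ is evaluated at a \emph{single} heterogeneous noise vector $k_\cT$ rather than at a uniform timestep. To handle this cleanly I would define the joint reverse transition so that, given $\bx_\cT^{i/N}$, each coordinate of $\bx_\cT^{(i-1)/N}$ is produced by the usual Gaussian denoising step at the frame-specific level $k_t=i/N$, with mean computed by calling $\beps_\theta$ once on the whole sequence; then the product structure of $q$ matches this product-of-conditionals structure of $p_\theta$, and the KL decomposition goes through. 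After verifying this compatibility, the remainder is bookkeeping: relabel the discrete index $i/N$ as a continuous $k_t\in[0,1]$, absorb the Jacobian of the noise schedule into $w(k_t)$, and read off \Cref{eq:train} as a uniform-weight surrogate for the resulting ELBO, establishing \Cref{theorem:informal}.
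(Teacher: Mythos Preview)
Your decomposition in steps 1--2 is sound, but step 3 contains a genuine gap. The product chain you set up advances all frames in lock-step: at step $i$ every frame is at level $i/N$, so the reverse model is only ever evaluated at \emph{homogeneous} noise vectors $k_\cT=(i/N,\dots,i/N)$. The resulting ELBO is
\[
\sum_{i}\sum_{t} w(i/N)\,\bigl\|\beps_t-\beps_\theta\bigl(\bx_\cT^{(i/N)\mathbf{1}},(i/N)\mathbf{1}\bigr)_t\bigr\|^{2},
\]
whereas \Cref{eq:train} is $\mathbb{E}_{k_1,\dots,k_T\ \text{iid}}\sum_t\|\beps_t-\beps_\theta(\bx_\cT^{k_\cT},k_\cT)_t\|^{2}$. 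Your claim that ``uniform sampling of $k_t$ reproduces the inner sum over $i$'' silently swaps a single shared $i$ for $T$ independent ones; this would be harmless if the $t$-th summand depended only on $(\bx_t^{i/N},i/N)$, but since $\beps_\theta$ attends across frames, the $t$-th term depends on the noise levels of \emph{all} other frames. So your bound justifies full-sequence diffusion training, not the independent-per-frame objective in \Cref{eq:train}.

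The paper closes exactly this gap by abstracting away from a single chain to a distribution over \emph{paths} $\rho=(\bk^0,\dots,\bk^N)$, where each $\bk^j\in[0,K]^T$ is an arbitrary (monotone) noise vector. For any such path the standard ELBO machinery applies (their \Cref{thm:ELBO}), and since the bound holds path-by-path it holds in expectation over any distribution $\mathcal{D}_\rho$ of paths. Choosing $\mathcal{D}_\rho$ to range over paths that visit heterogeneous configurations---e.g.\ denoising frames in different orders or at different rates---makes the model get exercised at the independent $k_\cT$ configurations that appear in \Cref{eq:train}. Your derivation corresponds to the single ``full-sequence'' path in their taxonomy; what you are missing is the averaging over a richer family of paths, which is precisely the device that turns the bound into one for the per-frame-independent objective.
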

Compared to conventional video diffusion methods, where a single noise level $k \in [0,1]$  is uniformly applied to all generation frames $\xG$, our approach provides two key benefits: (1) token utilization is improved by computing a loss conditioned on all frames $\bx_{\cT}$ instead of a smaller subset; second, (2) this objective places variable history lengths ``in-distribution'' of the training objective, leading to more flexible use of history lengths as detailed below.  

\textbf{Sampling: Conditioning on Arbitrary History.} Unlike standard VDMs that require fixed-length history during sampling, \mtd allows conditioning on arbitrary history. To generate $\xG$ conditioned on $\xH$ at each sampling step with noise level $k$, we estimate the conditional score $\score p_k(\xGk|\xH)$ by feeding the model noisy $\xGk$ and clean history frames $\bx^0_{\cH}$. Sampling is then performed using standard score-based sampling schemes such as DDPM~\cite{ddpm} or DDIM~\cite{ddim}. This flexibility in conditioning enables history guidance and its more advanced variants, as described in the next section.

\newcommand{\apmb}[2]{#1\textcolor{black!60}{$\scriptstyle\pm$\scriptsize #2}}
\begin{table}[t]
    \vskip -0.1in
    \caption{
    \textbf{Comparison with generic diffusion models on Kinetics-600.} \xred{``\ding{55}''}, \xorange{``\ding{115}''}, and \xgreen{``\ding{52}''} indicate whether a model can condition on a ``single predefined,'' ``arbitrary under approximation,'' or ``arbitrary'' history. DFoT, both trained from \emph{scratch} and \emph{fine-tuned}, outperforms all generic diffusion baselines under the same architecture and is on par with industry models trained with more compute resources (see \cref{app:exp_details_benchmarks}).
    }
    \label{tab:comparison_quantitative}
    \vskip 0.05in
    \centering
    \begin{adjustbox}{max width=\linewidth}
    \begin{tabular}{l c l l }
    \toprule
    & Flexible? & Method & FVD $\downarrow$\\
    \midrule
    \multirow{6}{*}{\rotatebox{90}{\shortstack[c]{Industry size\\and compute}}} & \multirow{3}{*}{\red{\ding{55}}} & MAGVIT-v2~\cite{yu2023language} & \apmb{4.3}{0.1}\\
    & & W.A.L.T~\cite{gupta2023photorealistic} & \apmb{\textbf{3.3}}{0.1}\\
    & & Rolling Diffusion~\cite{ruhe2024rolling} & 5.2\\

    \cmidrule{2-4}
    & \xorange{\ding{115}} & Video Diffusion~\cite{ho2022video} & \apmb{16.2}{0.3}\\
    \cmidrule{2-4}
    & \xgreen{\ding{52}} & MAGVIT~\cite{yu2023magvit} & \apmb{9.9}{0.3}\\
    \midrule
    \multirow{6}{*}{\rotatebox{90}{\shortstack[c]{Same\\ Architecture}}} & \xred{\ding{55}} & SD & \apmb{4.8}{0.0} \\
    \cmidrule{2-4}
    & \xorange{\ding{115}} & FS & \apmb{95.5}{0.4} \\
    \cmidrule{2-4}
    & \multirow{3}{*}{\xgreen{\ding{52}}} & BD & \apmb{6.4}{0.1} \\
    & & \textbf{\mtd} (\emph{scratch}) & \apmb{\textbf{4.3}}{0.1} \\
    & & \textbf{\mtd} (\emph{fine-tuned from FS}) & \apmb{\underline{4.7}}{0.0}\\
    \bottomrule
    \end{tabular}
    \end{adjustbox}
    \vskip -0.2in
\end{table}

\section{History Guidance}
\label{sec:our_history_guidance}

Leveraging the flexibility of \method (\mtd), we introduce \emph{History Guidance} (\HG), a family of techniques for history-conditioned video generation. These methods enhance generation quality, improve motion dynamics, enable robustness to out-of-distribution (OOD) histories, and unlock novel capabilities such as compositional video generation. Please refer to \cref{fig:sampling} for an overview.

\textbf{Simplest \HG: Vanilla History Guidance.} The simplest form of \HG, referred to as \emph{Vanilla History Guidance} (\HGv), directly performs classifier-free guidance (CFG) with a chosen history length, following Equation~\ref{eq:history_guidance}. The conditional score for any history $\cH$ can be computed as described in the previous section. To perform CFG, we need to estimate the \emph{unconditional} score $\score p_k(\xGk)$. Notably, the unconditional score is a special case of the conditional score with $\cH \tighteq \varnothing$ and can be estimated by masking history frames $\xH$ with \emph{complete} noise. Even this simple form of \HG significantly improves generation quality and consistency.

\textbf{History Guidance Across Time and Frequency.} While history guidance has been presented as a special case of CFG so far, its full potential extends far beyond CFG.
Consider the following generalization of Equation~\ref{eq:history_guidance}:
\vspace{-5pt}
\begin{equation}
\scalebox{0.85}{$
\score p_k(\xGk) + \sum_i \omega_i \big[\score p_k(\xGk|\bx_{\cH_i}^{k_{\cH_i}}) - \score p_k(\xGk)\big]
$},
\label{eq:history_guidance_across_tf}
\vspace{-5pt}
\end{equation}
where the total score is a weighted sum of conditional scores, each conditioned on possibly \emph{different segments of history} $\{\cH_i\}$, and each masked with a possibly \emph{different noise level} $k_{\cH_i}$. This formulation enables better generalization than a single score function conditioned on a full long history. By composing scores, each individual score component operates on a restricted conditional context, reducing the likelihood of being out-of-distribution~\cite{du2024compositional}. Appendix~\ref{appendix:add_score} provides informal mathematical intuition on why summing conditional scores is permissible.

Equation~\ref{eq:history_guidance_across_tf} effectively allows us to compose the scores conditioned on 1) different history subsequences, and 2) history frames that are partially noisy. We refer to these two principal axes as \emph{time} and \emph{frequency}, which together form a 2D plane of options that we refer to as \emph{History Guidance across Time and Frequency}. For simplicity, we introduce composition along these two axes separately.

\begin{figure}[t]
    \centering
    \includegraphics[width=\columnwidth]{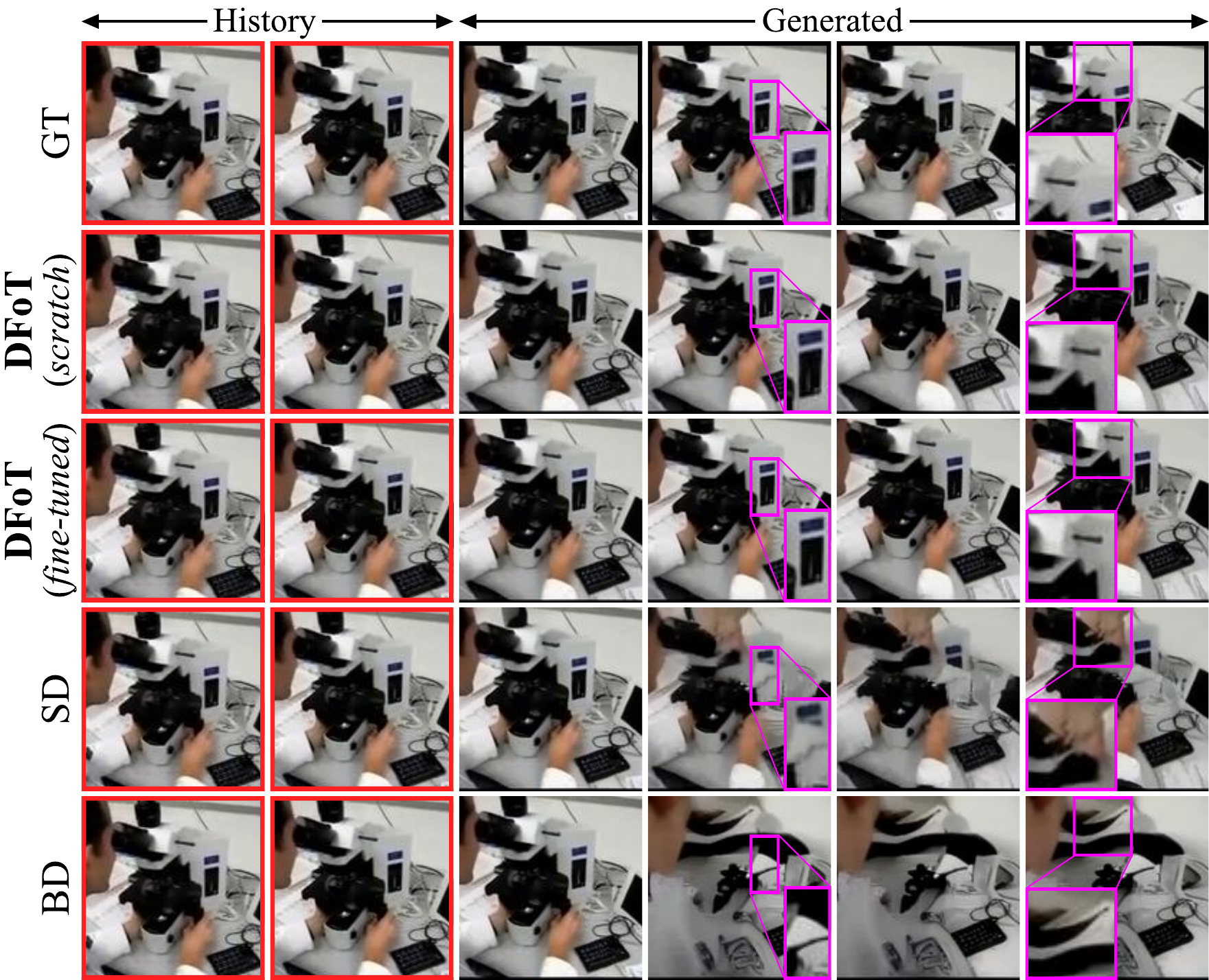}
    \vskip -0.06in
    \caption{
        \textbf{Qualitative comparison on Kinetics-600.}  DFoT (both \emph{scratch} and \emph{fine-tuned}) generates higher-quality samples consistent with the history than baselines. FS omitted for poor quality. We show 6 of 16 frames; see \cref{fig:comparison_qualitative_additional} for more comparisons. 
    }
    \label{fig:comparison_qualitative}
    \vspace{-15pt}
\end{figure}
\begin{figure*}[t]
\centering
\begin{minipage}[t]{1.0\textwidth}
    \begin{subfigure}[t]{0.24\textwidth}
        \centering
        \includegraphics[width=\textwidth]{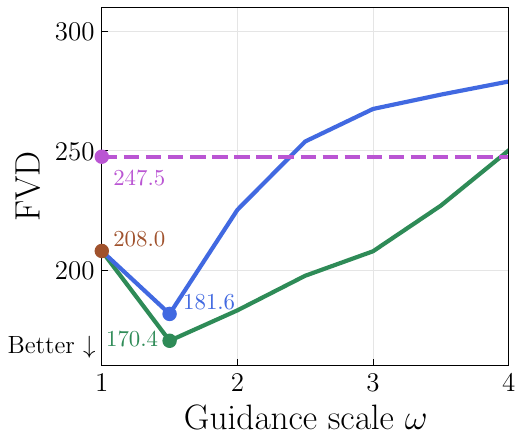}
        \phantomcaption
        \label{fig:history_guidance_fvd}
    \end{subfigure}
    \hfill
    \begin{subfigure}[t]{0.24\textwidth}
        \centering
        \includegraphics[width=\textwidth]{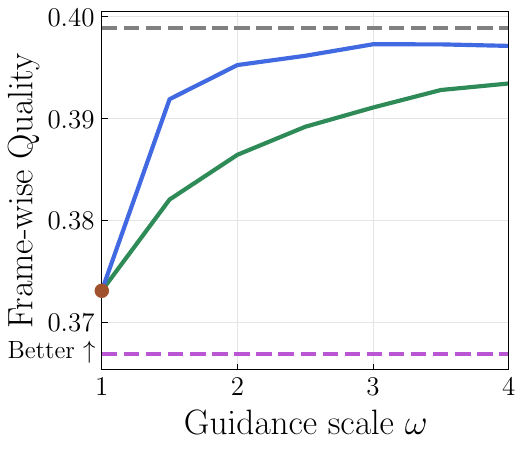}
        \phantomcaption
        \label{fig:history_guidance_quality}
    \end{subfigure}
    \hfill
    \begin{subfigure}[t]{0.24\textwidth}
        \centering
        \includegraphics[width=\textwidth]{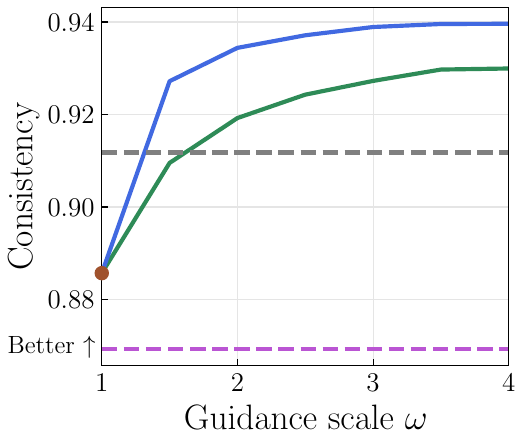}
        \phantomcaption
        \label{fig:history_guidance_consistency}
    \end{subfigure}
    \hfill
    \begin{subfigure}[t]{0.24\textwidth}
        \centering
        \includegraphics[width=\textwidth]{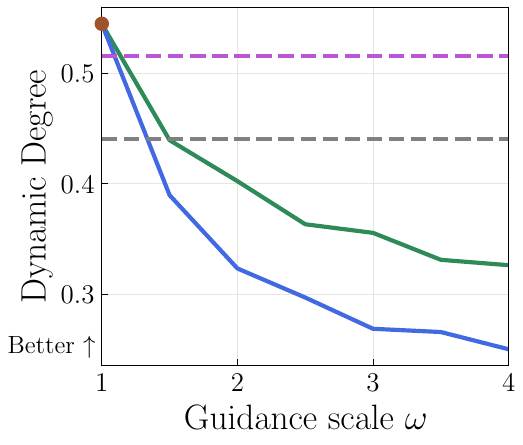}
        \phantomcaption
        \label{fig:history_guidance_dynamics}
    \end{subfigure}
    \vskip -0.17in
    \begin{subfigure}[t]{\textwidth}
        \centering
        \includegraphics[height=0.025\textheight]{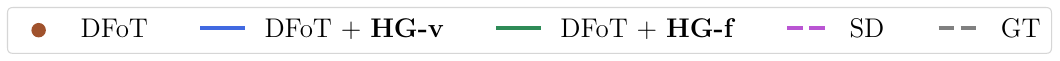}
        \phantomcaption
    \end{subfigure}
    \vskip -0.17in
    \caption{
        \textbf{Various metrics as a function of guidance scale $\boldsymbol{\omega}$ for \xblue{vanilla} and \xgreen{fractional} history guidance on Kinetics-600}, comparing against \xsienna{$\omega = 1$ (\underline{$\bullet$, \emph{w/o HG}})}, \xpurple{SD}, and \xgray{ground truth (GT)}. FS is omitted for poor performance (FVD = 1040). \xblue{Vanilla history guidance} trades off dynamics $\cdot$ diversity for quality $\cdot$ consistency. \xgreen{Fractional history guidance} better balances these trade-offs, achieving the best FVD.
    }
    \label{fig:history_guidance_metrics}
\end{minipage}
\vskip 0.12in
\begin{minipage}[t]{\textwidth}
    \begin{subfigure}[t]{\textwidth}
        \centering
        \includegraphics[width=0.96\textwidth]{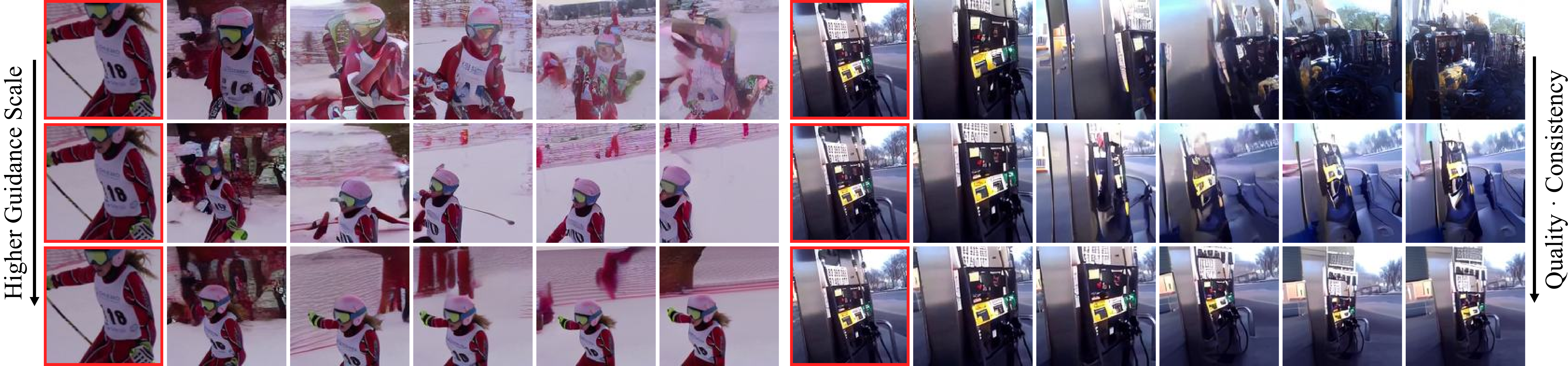}
        \caption{
            \textbf{Vanilla history guidance significantly improves frame quality and consistency with an increasing guidance scale.} We sample with varying guidance scales $\omega = 1$ (\ul{top, \emph{without history guidance}}), $1.5$ (middle), and $3$ (bottom).
        }
        \label{fig:vanilla_guidance}
    \end{subfigure}
    \vfill
    \vskip 0.05in
    \begin{subfigure}[t]{\textwidth}
        \centering
        \includegraphics[width=0.96\textwidth]{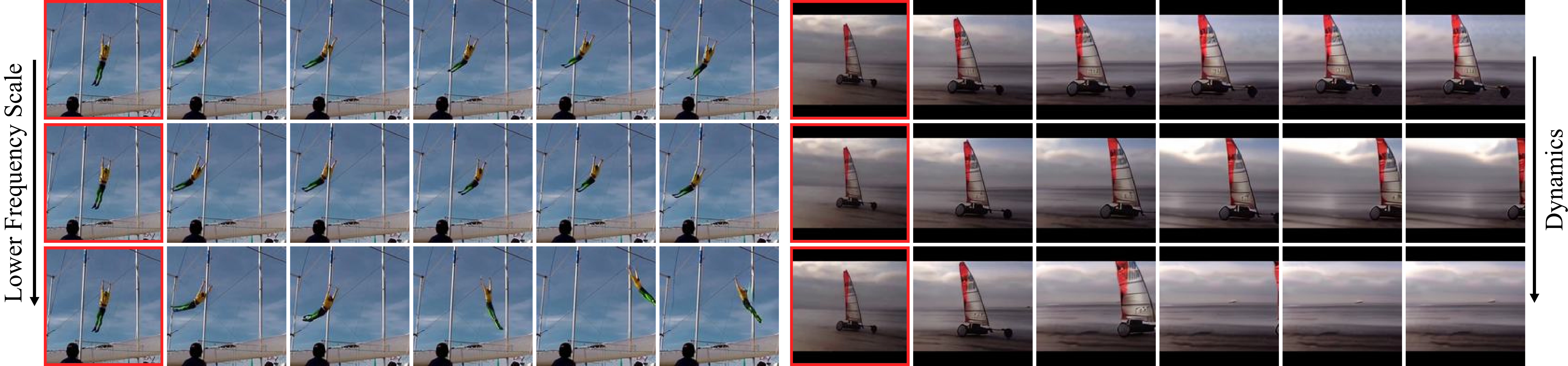}
        \caption{
            \textbf{Fractional history guidance resolves the issue of static videos, improving dynamics by guiding with lower frequencies.} We sample with varying frequency scales, with $\kH = 0$ (\ul{top, \emph{vanilla guidance leading to static videos}}), $0.3$ (middle), and $0.6$ (bottom).
        }
        \label{fig:fractional_guidance}
    \end{subfigure}
    \vskip -0.12in
    \caption{
        Qualitative results for vanilla $\cdot$ fractional history guidance on Kinetics-600. \emph{Best viewed zoomed in.} {\setlength{\fboxsep}{1.5pt}\textcolor{red}{\fbox{\textcolor{black}{Red box}}}} = history frames.
    }
    \label{fig:history_guidance_qualitative}
\end{minipage}
\vspace{-14pt}
\end{figure*}

\textbf{Time Axis: Temporal History Guidance.}
Due to the curse of dimensionality, the amount of data that we require to guarantee constant data support grows exponentially with the length of history we wish to condition on. As a result, history conditioned models are particularly prone to out-of-distribution (OOD) history without an inductive bias of sparse dependency. Common symptoms include blowing up or overfitting to irrelevant features. To address this, we propose \emph{Temporal History Guidance} (\HGt), which composes scores conditioned on different subsequences of history by setting $k_{\cH_i} = 0$ in Equation \ref{eq:history_guidance_across_tf}.
This composition can be performed with either: 1) long and short history $\{\cH_{\text{long}}, \cH_{\text{short}}\}$, aiming to trade-off between the two imperfect predictive models, reducing the likelihood of OOD while preserving both long and short-term dependencies, or 2) multiple short, overlapping in-distribution histories $\{\cH_{\text{short}_1}, \cH_{\text{short}_2}, \cdots\}$, to simulate the conditional distribution of the full history.

\newcommand{\coloredregion}[2]{\setlength{\fboxsep}{0pt}\textcolor{#1}{\fbox{\sethlcolor{#1!4}\hl{#2}}}}
\newcommand{\indregion}{\coloredregion{xslategray}{in-distribution}\xspace}
\newcommand{\slightlyoodregion}{\coloredregion{xorange}{slightly OOD}\xspace}
\newcommand{\oodregion}{\coloredregion{xred}{OOD}\xspace}

\begin{figure*}[t]
    \begin{subfigure}[t]{0.35\textwidth}
    \centering
    \includegraphics[width=\textwidth]{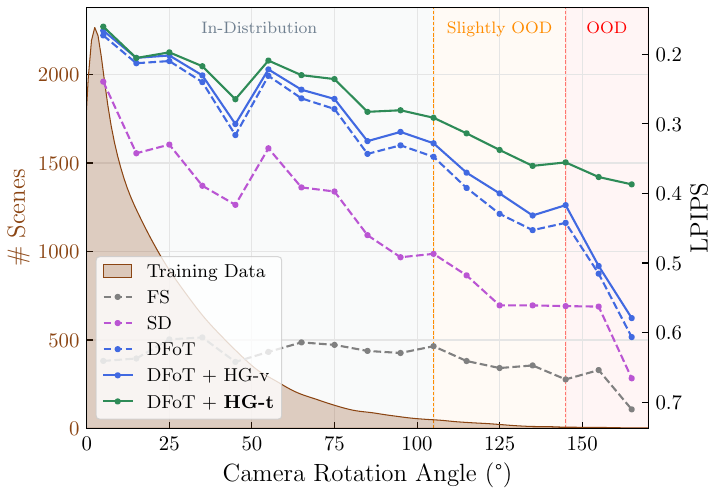}
    \label{fig:ood_history_quantitative}
    \end{subfigure}
    \hfill
    \begin{subfigure}[t]{0.647\textwidth}
    \centering
    \includegraphics[width=\textwidth]{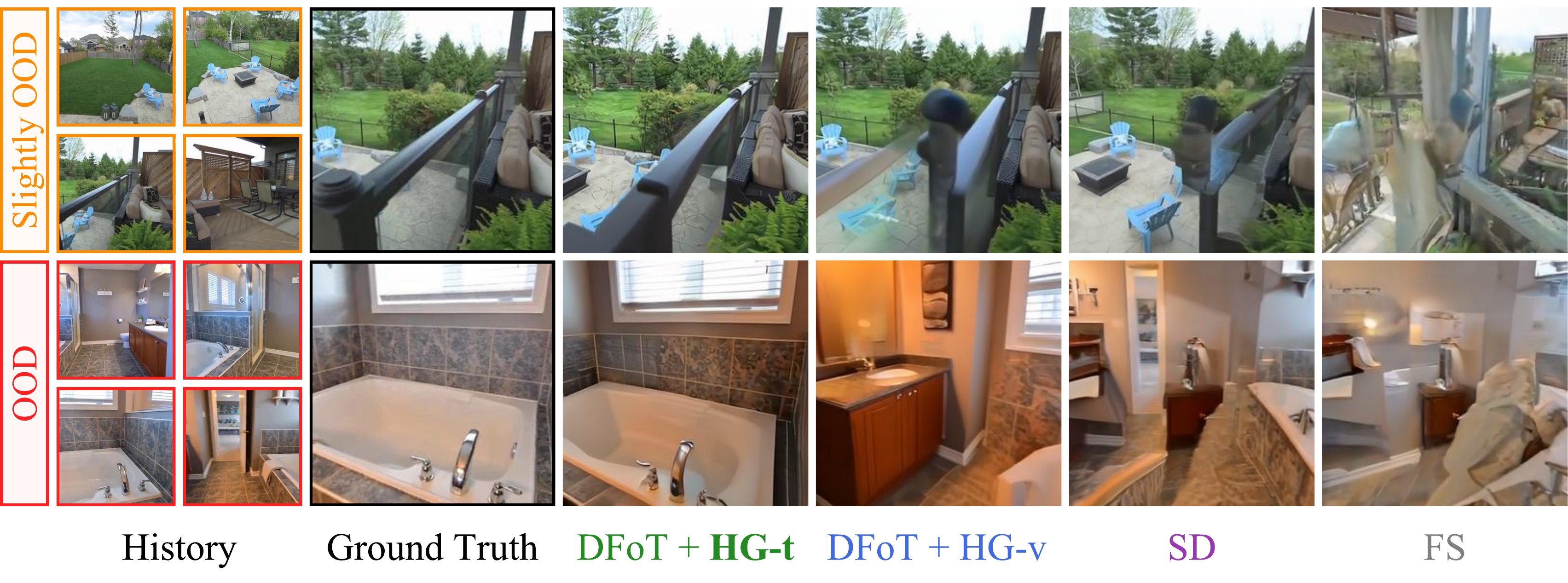}
    \label{fig:ood_history_qualitative}
    \end{subfigure}
    \vskip -0.28in
    \caption{\textbf{Robust performance of \xgreen{temporal history guidance} given OOD history unseen in the \xsienna{training data}.} \textbf{Left:} Baselines sharply lose performance transitioning from \indregion, \slightlyoodregion, to \oodregion tasks, while \mtd with \xgreen{\HGt} shows minimal drop. \textbf{Right:} Baselines produce blurry, inconsistent frames with artifacts on \slightlyoodregion history and unrecognizable frames on \oodregion history, whereas \mtd with \xgreen{\HGt} generates high-quality, accurate samples. Each frame shown is one of four generated; see \cref{fig:ood_history_qualitative_full} for full results.}
    \vskip -0.2in
    \label{fig:ood_history}
\end{figure*}

\textbf{Frequency Axis: Fractional History Guidance.} 
We observe that a major failure mode of \HGv under high guidance scales is the generation of overly static videos with minimal motion. This occurs because \HGv encourages consistency with history, leading to a trivial solution of simply copying the most recent history frame. To address this, we propose \emph{Fractional History Guidance} (\HGf), which guides the sampling process using \emph{fractionally masked history}. Fractionally masking history retains only low-frequency information~(\citet{dieleman2024spectral}, \cref{app:frequency_guidance}), allowing high-frequency details (e.g., fine textures and fast motions) to remain unconstrained by guidance. This approach makes videos more dynamic while maintaining consistency, which is mainly associated with low-frequency details.
Specifically, the \HGf score is given by:
\vspace{-4pt}
\begin{equation} 
\label{eq:fractional_guidance}
\scalebox{0.85}{$
\score p_k(\xGk | \xH) + \omega \big[\score p_k\big(\xGk | \bx_{\cH}^{k_{\cH}}\big) - \score p_k(\xGk)\big]
$},
\vspace{-2pt}
\end{equation}
where $\kH \in (0, 1)$ controls the degree of masking to focus on lower-frequency details, and $\omega$ is the guidance scale for the partially masked history $\bx_{\cH}^{k_{\cH}}$.
In principle, different history frames could contribute information at different frequency bands, such as high-frequency details from recent frames and low-frequency motion from earlier frames. While a detailed exploration of sophisticated sampling strategies is left to future work, our experiments show that even simple implementations of \HGf significantly improve motion dynamics without sacrificing consistency.
\section{Experiments}
\label{sec:experiments}

We empirically evaluate the performance of the \method and history guidance. We first validate the \mtd as a generic video model without history guidance (Sec.~\ref{sec:exp_ablation}), demonstrating the effectiveness of the modified training objective. Next, we examine the effectiveness and additional capabilities of history guidance~(Secs.~\ref{sec:exp_history_guidance} and \ref{sec:exp_temporal_guidance}). Finally, we showcase very long videos generated by \mtd with history guidance (Sec.~\ref{sec:exp_long_navigation}).

\subsection{Experimental Setup}
\label{sec:exp_setup}

\textbf{Datasets.} Throughout our experiments, we train and evaluate a separate \mtd model for each dataset as follows: Kinetics-600~(\citet{kay2017kinetics}, $128 \tighttimes 128$), a standard video prediction benchmark, RealEstate10K or RE10K~(\citet{zhou2018stereo}, $256 \tighttimes 256$), a dataset of real-world indoor scenes with camera pose annotations, and Minecraft~(\citet{yan2023temporally}, $256 \tighttimes 256$), a dataset of long-context Minecraft navigation videos with discrete actions. We employ Fruit Swapping, an imitation learning task adapted from Diffusion Forcing~\cite{chen2024diffusion} to test the combined ability to handle long-term memory and reactive behavior with a physical robot. Details are in \cref{app:exp_details_datasets}. We use Kinetics-600 for benchmarking and quantitative comparisons, and the other three for studying new applications.

\textbf{Baselines.} 
1) Standard Diffusion (SD): A single-task model trained for specific test history lengths following the standard conditional diffusion setup~\cite{gupta2023photorealistic, watson2024controlling}.
2) Binary-Dropout Diffusion (BD): An ablative baseline trained with framewise binary dropout for history guidance instead of independent per-frame noise levels. Note that BD requires \mtd's architecture as opposed to conditioning via adaptive LayerNorm to support flexible history lengths, effectively making it an ablation.
3) Full-Sequence Diffusion with Reconstruction Guidance (FS): An unconditional video diffusion model trained with maximum sequence length. Flexible-length conditioning is achieved during sampling via history replacement and reconstruction guidance~\cite{ho2022video}.

\textbf{Evaluation.}  To evaluate the overall video generation performance encompassing quality and diversity, we use Fr\'echet Video Distance~(FVD, \citet{unterthiner2018towards}). For a more detailed analysis of video quality, we use VBench~\cite{huang2024vbench}, which provides separate scores for different aspects such as frame quality, consistency, and dynamics. For highly deterministic tasks, we evaluate according to Learned Perceptual Image Patch Similarity~(LPIPS, \citet{zhang2018unreasonable}), computed frame-wise against the ground truth. Additional experimental details are provided in \cref{app:exp_details}.

\subsection{Evaluating the \method}
\label{sec:exp_ablation}
We validate \mtd as a competitive video generative model \emph{without} history guidance by answering the questions:
\begin{itemize}[noitemsep,leftmargin=*,topsep=0em]
    \item \textbf{Q1:} How does \mtd compare to the conventional video diffusion approach in standard video benchmarks?
    \item \textbf{Q2:} Does binary dropout diffusion (BD) perform competitively as an alternative training approach that also supports flexible history?
    \item \textbf{Q3:} Is \mtd empirically flexible enough to handle arbitrary sets of history frames?
    \item \textbf{Q4:} Can we fine-tine an existing model into \mtd?
\end{itemize}

We summarize quantitative and qualitative results in \cref{tab:comparison_quantitative} and \cref{fig:comparison_qualitative} respectively.

\textbf{Competitive Performance of \mtd (Q1) without Guidance.} 
\mtd outperforms all baselines, including single-task standard diffusion (SD), despite SD being optimized for the eval's specific history length. This demonstrates \mtd's flexibility without sacrificing task-specific performance, aligning with observations from~\cite{chen2024diffusion}.

\textbf{Limited Performance of Binary Dropout (Q2).} 
While BD enables flexible history conditioning, it suffers a significant performance drop compared to SD. Notably, BD produces artifacts and inconsistent generations (\cref{fig:comparison_qualitative}), highlighting its inefficiency as an alternative to \mtd’s training objective. 

\textbf{Flexibility of \mtd (Q3).} 
We demonstrate \mtd's flexibility by tasking it with various video generation tasks on RE10K, such as future prediction, frame interpolation, and mixed history setups. As shown in \cref{fig:flexibility}, \mtd generates consistent, high-quality samples across all tasks.

\textbf{Fine-tune existing models into \mtd (Q4).} 
As discussed in Sec.~\ref{sec:dft}, an \mtd can be obtained by fine-tuning an existing video diffusion model. We fine-tune the full-sequence model on Kinetics-600 into a \mtd using only 12.5\% of the training cost. The fine-tuned model surpasses all baselines and performs comparably to the \mtd trained from scratch~(see Appendix \ref{app:exp_finetune} for detailed analysis). This confirms the feasibility of fine-tuning large foundation models into \mtd to support history guidance.

\subsection{Improving Video Generation via History Guidance}
\label{sec:exp_history_guidance}
We examine the effect of history guidance on video quality in terms of frame-wise quality, frame-to-frame consistency and dynamic degree of generated video. We benchmark 64-frame video generation using sliding window rollout on Kinetics-600, a challenging setup that requires outstanding consistency to avoid blowing up. Note that this is a setup where conventional image-to-video models struggle since they can only condition on the final generated frame to extend the video. We present quantitative and quantitative results in Figures~\ref{fig:history_guidance_metrics} and  \cref{fig:history_guidance_qualitative} respectively.

\textbf{Vanilla History Guidance.} We visualize samples generated with vanilla history guidance with increasing guidance scale in \cref{fig:vanilla_guidance}. \emph{Stronger history guidance consistently improves frame quality and consistency}, which is also reflected in their corresponding VBench scores in \cref{fig:history_guidance_quality,fig:history_guidance_consistency}. In \cref{fig:history_guidance_fvd}, we obtain the best FVD result with a small guidance scale of $\omega \tighteq 1.5$. Beyond that, FVD increases sharply, indicating a loss of diversity with higher guidance scales, similar to the quality-diversity trade-off of CFG.

\textbf{Fractional History Guidance.} Despite notable quality improvements, we observe that vanilla history guidance tends to generate \emph{static videos} at high guidance scales ($\omega \geq 3$), as illustrated in the top rows of \cref{fig:fractional_guidance}, with significantly less motion than ground truth in \cref{fig:history_guidance_dynamics}. Fractional history guidance resolves this in the side-by-side visualization. We find that \emph{guiding with lower frequencies (higher $\kH$) consistently increases dynamics while maintaining quality}, as shown in \cref{fig:fractional_guidance}. This further lowers the best FVD of vanilla history guidance (181.6) to 170.4, surpassing FS (1040), SD (247.5), and \mtd without guidance (208.0).

\subsection{New Abilities via Temporal History Guidance}
\label{sec:exp_temporal_guidance}

Temporal history guidance brings new capabilities to \mtd, allowing it to solve tasks impossible for previous models. We discuss three representative tasks.

\textbf{Task 1. Robust to Out-of-Distribution (OOD) History.} 
We evaluate robustness to OOD histories on RealEstate10K by creating scenarios with extreme camera rotations between history frames and ask the model to interpolate. Baselines fail to generalize, producing incoherent generations. In contrast, \mtd with temporal history guidance splits OOD histories into shorter, in-distribution subsequences, composing their scores to maintain both local and global dependencies. This enables \mtd to handle OOD histories effectively, as shown in \cref{fig:ood_history}.

\textbf{Task 2. Long Context Generation.}
Minecraft is a video dataset that requires long context to achieve good FVD scores. We found generating coherent videos with long contexts often leads to OOD histories. Baselines prioritize consistency with the context at the expense of quality. Our hypothesis is that temporal guidance blends scores from long-context and short-context models, balancing memory retention with robustness to OOD. This strategy improves FVD scores from 97.63 to 79.19, achieving long-term coherent high-quality generations. See \cref{app:exp_results_minecraft} for details.

\textbf{Task 3. Long-horizon yet Reactive Imitation Learning.}
We test on a robotic manipulation task requiring both long-term memory for object rearrangement and short-term reactivity for disturbances. Each data point in the dataset contains either of these two behaviors but never both. Baselines fail to integrate the two behaviors, while \mtd combines full-history scores (for memory) with single-frame scores (for reactivity) using temporal history guidance. This allows the robot to recover from disturbances and complete tasks, achieving a success rate of 83\% while baselines fail to perform the task completely. See Appendix~\ref{app:exp_results_robot} for details.

\subsection{Ultra Long Video Generation}
\label{sec:exp_long_navigation}
In \cref{fig:teaser}, we present a showcase that utilizes all of this paper's contributions - we extend a single image to an 862-frame video in RE10K. Even the most high-performing prior methods can only roll out for dozens of frames under the same setup. This is made possible by enhanced quality, consistency, and rollout stability through history guidance, plus \mtd's flexibility that enables this. See \cref{appendix:long_rollout_details} for more details and \cref{app:exp_results_navigation} for more samples (\cref{fig:navigation_1,fig:navigation_2,fig:navigation_3,fig:navigation_4}), including notable failures of other models.

\section{Conclusion}
Enabling flexible conditioning on different portions of history, the \method not only establishes itself as a competitive video diffusion framework but also gives rise to History Guidance, a family of powerful history-conditioned guidance methods that significantly enhances video quality, consistency, and motion degree. Additionally, we demonstrate that \mtd can be efficiently fine-tuned from existing models, suggesting future potentials of integrating History Guidance with current foundation models.

\section*{Acknowledgements}

This work was supported by the National Science Foundation under Grant No. 2211259, by the Singapore DSTA under DST00OECI20300823 (New Representations for Vision, 3D Self-Supervised Learning for Label-Efficient Vision), by the Intelligence Advanced Research Projects Activity (IARPA) via Department of Interior/ Interior Business Center (DOI/IBC) under 140D0423C0075, by the Amazon Science Hub, and by the MIT-Google Program for Computing Innovation.
\section*{Impact Statement}

This paper aims to advance the field of video generative modeling. As a video generative model, our approach may enable the creation of longer, higher-quality videos, with potential applications in robotics and other fields. However, we acknowledge the potential risks associated with misuse, such as the generation of harmful or unethical content. We emphasize the importance of ethical considerations and responsible use of this work.

\bibliography{bibliography}
\bibliographystyle{style_files_and_such/icml2025}

\newpage
\appendix
\onecolumn

\section{Proofs, Explanations, and Extensions}
\subsection{Derivation of an ELBO}
\label{appendix:theory_elbo}
\newcommand\numberthis{\addtocounter{equation}{1}\tag{\theequation}}
\newcommand{\floor}[1]{\lfloor#1\rfloor}
\newcommand{\veck}{\mathbf{k}}
\newcommand{\bbK}{\mathbb{K}}
\newcommand{\ptheta}{p_{\bm{\theta}}}
\newcommand{\bk}{\mathbf{k}}

This section includes a derivation of an ELBO corresponding to the \mtd{} training objective. By taking a sequence modeling perspective, the derivation below streamlines that of the Diffusion Forcing ELBO in \cite{chen2024diffusion}.

Let $\cT$ denote the index set associated with a  sequence $\bx$, so that $\bx_{\cT} = (\bx_t)_{t \in \cT}$ is the whole sequence. We use the notation $\bk =(k_t)_{t \in \cT}$ for the sequence of noise levels. A \emph{path} $\rho$ is a sequence of noising steps that transition from an unnoised sequence to a noised one. Specifically,
\begin{definition}[Path] We define a \textbf{path} $\rho$ as a sequence $(\bk^j)_{0   \le j \le N}$ that begins at zero noise $\bk^0 = (0,0,\dots,0)$, and terminates at full noise $\bk^N = (K, K,\dots, K)$.
\end{definition}
Given a path $\rho$, we let $\bx^{\rho} = \bx^{\bk^{0:N}}$ denote the sequence with $(\bx_t^{\bk_t})_{t \in \cT}$. Note that there is nothing intrinsically causal or temporal about the indices $t$; indeed, we can define noising paths on other objects like trees or graphs. Examples of paths include:
\begin{itemize}
    \item Autoregressive diffusion, where $k^{j}_t$ is equal to $K$ if $t \le \floor{j/K}$, equal to $0$ if $t > \floor{j/K} + 1$, and equal to $j - K\floor{j/K}$ otherwise. This path looks like $(0,\dots,0)$, $(1,0,\dots,0)$,$\dots$, $(K,0,\dots,0)$, $(K,1,0,\dots,0)$, increasing lexicographically.
    \item Full-sequence diffusion, where $k^j_t = j$ and $N = K$; i.e. all points are denoised together. 
    \item We can accomodate skips in noiseless, e.g. DDIM, or paths with linearly increasing noise, such as those considered in \cite{chen2024diffusion}. 
\end{itemize}
Typically, we assume that $k^j_t$ is non-decreasing in $j$ (the noise level is monotonic up to $\bk^N = (K,\dots,K)$).

The essential property that we require is that our learned model and forward process factor nicely along such paths. It is straightforward to check that this is indeed the case for \method with these monotonic paths:

\begin{definition}[Factoring Property]
\label{defn:factoring} We say that a model $\ptheta$ and forward process $q$ factor along a path $\rho$ if for any path, $\rho = (\bk^1,\dots,\bk^N)$ be a path,  $q(\bx^{\bk^{1:N}}
 \mid \bx^{\bk^0})$ factors as $q(\bx^{\bk^{1:N}}
 \mid \bx^{\bk^0}) = \prod_{j=1}^n q(\bx^{\bk^j}\mid \bx^{\bk^{j-1}})$, and  $\ptheta $ factors as $\ptheta(\bx^{\bk^{0:N}}) = \prod_{j=1}^N \ptheta(\bx^{\bk^{j-1}}\mid \bx^{\bk^{j}})\ptheta(\bx^{\bk^N})$, with $\ptheta(\bx^{\bk^N})$ not depending on $\theta$. 
\end{definition}
When the model factors along paths, a general ELBO holds. We first state the general form, then specialize to Diffusion via Gaussian forward processes, and conclude with the proof of the general result.
\begin{theorem}\label{thm:ELBO} Suppose that $(\ptheta,q)$ factor along a path $\rho = (\bk^1,\dots,\bk^N)$. Then, for some constant $C$ not depending on $\theta$, we have
 \begin{align}
     \ln p(\bx^{\bk^0}) \ge C + \Exp_{\bx^{\bk^{1:N}} \sim q(\bx^{\bk^{1:N}}\mid \bx^{\bk_0})}\left[\ln {\ptheta(\bx^{\bk^0} \mid \bx^{\bk^1})} + \sum_{j=1}^{N-1} \Dkl({\ptheta(\bx^{\bk^{j}} \mid \bx^{\bk^{j+1}})}\parallel {q(\bx^{\bk^{j}} \mid \bx^{\bk^{j+1}},\bx^{\bk^0})})\right].
 \end{align}
 Consequently, if $\Exp_{\bk^{1:N} \sim \cD_{p}}$ denotes an expectation over paths $\rho = (\bk^1,\dots,\bk^K)$ along which $(\ptheta,q)$ factor, then 
    \begin{align*}
     \ln p(\bx^{\bk^0}) \ge C + \Exp_{\bk^{1:N} \sim \cD_{p}}\Exp_{\bx^{\bk^{1:N}} \sim q(\bx^{\bk^{1:N}}\mid \bx^{\bk_0})}\left[\ln {\ptheta(\bx^{\bk^0} \mid \bx^{\bk^1})} + \sum_{j=1}^{N-1} \Dkl({\ptheta(\bx^{\bk^{j}} \mid \bx^{\bk^{j+1}})}\parallel {q(\bx^{\bk^{j}} \mid \bx^{\bk^{j+1}},\bx^{\bk^0})})\right].
 \end{align*}
\end{theorem}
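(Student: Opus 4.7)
The plan is to adapt the classical DDPM-style ELBO derivation to the path-based formulation, where the canonical chain $1,2,\ldots,K$ is replaced by a general path $\bk^0, \bk^1, \ldots, \bk^N$; the factoring property of Definition \ref{defn:factoring} is designed precisely to make this substitution go through. First I would write $\ln p(\bx^{\bk^0}) = \ln \int \ptheta(\bx^{\bk^{0:N}})\, d\bx^{\bk^{1:N}}$, introduce $q(\bx^{\bk^{1:N}} \mid \bx^{\bk^0})$ as an importance weight, and apply Jensen's inequality to obtain
\[
\ln p(\bx^{\bk^0}) \;\ge\; \Exp_{q(\bx^{\bk^{1:N}} \mid \bx^{\bk^0})}\!\left[\ln \ptheta(\bx^{\bk^{0:N}}) - \ln q(\bx^{\bk^{1:N}} \mid \bx^{\bk^0})\right].
\]

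Next I would substitute the two factorizations from Definition \ref{defn:factoring}: the reverse chain for $\ptheta$ expands as $\ln \ptheta(\bx^{\bk^N}) + \ln \ptheta(\bx^{\bk^0} \mid \bx^{\bk^1}) + \sum_{j=1}^{N-1} \ln \ptheta(\bx^{\bk^j} \mid \bx^{\bk^{j+1}})$, and the forward chain for $q$ expands as $\sum_{j=1}^{N} \ln q(\bx^{\bk^j} \mid \bx^{\bk^{j-1}})$. On each forward transition I apply Bayes' rule, using the Markov-along-path identity $q(\bx^{\bk^j} \mid \bx^{\bk^{j-1}}) = q(\bx^{\bk^j} \mid \bx^{\bk^{j-1}}, \bx^{\bk^0})$ that is implied by the factoring of $q$, to rewrite each forward transition as a reverse posterior $q(\bx^{\bk^{j-1}} \mid \bx^{\bk^j}, \bx^{\bk^0})$ times the ratio $q(\bx^{\bk^j} \mid \bx^{\bk^0}) / q(\bx^{\bk^{j-1}} \mid \bx^{\bk^0})$. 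These ratios telescope across $j=1,\ldots,N$, collapsing to $\ln q(\bx^{\bk^N} \mid \bx^{\bk^0})$, which, together with $\ln \ptheta(\bx^{\bk^N})$ (independent of $\theta$ by assumption), I absorb into the constant $C$. After reindexing, the surviving terms pair $\ln \ptheta(\bx^{\bk^j} \mid \bx^{\bk^{j+1}})$ with $\ln q(\bx^{\bk^j} \mid \bx^{\bk^{j+1}}, \bx^{\bk^0})$ for $j = 1, \ldots, N-1$, and the outer expectation over $\bx^{\bk^{j+1}} \sim q$ converts each such pair into a KL divergence term. The reconstruction term $\ln \ptheta(\bx^{\bk^0} \mid \bx^{\bk^1})$ survives unpaired. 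The second statement then follows by taking an outer expectation over paths $\rho \sim \cD_{p}$ and swapping expectation and sum via Fubini.

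The main obstacle is bookkeeping rather than conceptual difficulty: because the path $\rho$ is arbitrary and may, as in the autoregressive example, hold some coordinates fixed at noise level $0$ or $K$ while denoising others, I must first verify that Definition \ref{defn:factoring} really does imply the Markov-along-path conditional independence used in the Bayes-rule step (specifically, that $\bx^{\bk^j}$ is independent of $\bx^{\bk^0}$ given $\bx^{\bk^{j-1}}$). This falls out directly by reading off the factoring of $q$ as a Markov chain in $j$ starting from $\bx^{\bk^0}$. A second, smaller subtlety is to check that the degenerate $j=1$ term $\ln q(\bx^{\bk^0} \mid \bx^{\bk^1}, \bx^{\bk^0}) = 0$ vanishes correctly so that the reindexing leaves exactly $N-1$ KL terms and one reconstruction term. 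Once these two points are handled, the remainder is a routine generalization of the standard ELBO argument, and the telescoping structure is robust to the heterogeneity of the noise schedule along $\rho$.
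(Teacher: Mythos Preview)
Your proposal is correct and follows essentially the same route as the paper: Jensen's inequality to get the variational bound, expansion via the factoring property, Bayes' rule on the forward $q$-transitions (using the Markov-along-path structure you correctly flag), telescoping of the marginal ratios, and absorption of $\ln \ptheta(\bx^{\bk^N})$ and $\ln q(\bx^{\bk^N}\mid \bx^{\bk^0})$ into the $\theta$-independent constant. The only cosmetic difference is bookkeeping: the paper peels off the $j=1$ term before applying Bayes' rule on $j=1,\dots,N-1$ (so the telescoping leaves a $\ln q(\bx^{\bk^1}\mid \bx^{\bk^0})$ that cancels explicitly), whereas you apply Bayes on all $j$ and let the telescoping collapse to $\ln q(\bx^{\bk^N}\mid \bx^{\bk^0})$ directly; both land on the same expression.
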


We now specialize \Cref{thm:ELBO} to Gaussian diffusion. For now, we focus on the ``$\bx$-prediction'' formulation of diffusion. The ``$\beps$-prediction'', used throughout the main body of the text and the ``$\rvv$-prediction formalism, which is the one used in our implementation, can be derived similarly (see Section 2 of \cite{chan2024tutorial} for a clean exposition). The following theorem is derived directly by applying standard likelihood and KL-divergence computations for the DDPM \cite{ddpm,chan2024tutorial} to \Cref{thm:ELBO}.  
\newcommand{\xthet}{\hat{\bx}_{\bm\theta}}
\newcommand{\mutheta}{\mu_{\bm{\theta}}}

For simplicity, we focus on paths with a single increment (e.g. DDPM), but extending to jumps (e.g. DDIM) is straightforward (albeit more notationally burdensome). 
\begin{corollary}\label{cor:elbo} Consider only paths $\rho$ for which $\bk^j \ge \bk^{j-1}$ entrywise, and for any $t$ and $j$ for which $ k^j_t > k^{j-1}_t$, $k^j_t = k^{j-1}_t + 1$ increments by one. 
\begin{align}
q(\bx^{\bk^{j+1}} \mid \bx_t^{\bk^j}) = \prod_{t:k^j_t < k^{j+1}_{t}}\mathcal{N}(\bx_t^{k^j_t}; \sqrt{1-\beta_{k^j_t}}\bx^{k^{j-1}_t}, \beta_{k^j_t} \mathbf{I}), \label{eq:Q_form}
\end{align}
and define $\alpha_k = (1-\beta_k)$, $\bar{\alpha}_k = \prod_{j=1}^k \alpha_j$.  Suppose that we parameterize $\ptheta(\bx^{\bk^j}; \bx^{\bk^{j+1}},\bk^j) = \cN(\mutheta(\bx^{\bk^j}; \bx^{\bk^{j+1}},\bk^j),\sigma_j^2)$, where further, 
\begin{align*}
\mutheta(\bx^{\bk^j}; \bx^{\bk^{j+1}},\bk^j) = \frac{(1 - \bar{\alpha}_{j-1})\sqrt{{\alpha}_j}}{1-\bar{\alpha}_j} \bx^{\bk_j} +  \frac{(1 - {\alpha}_{j})\sqrt{\bar{\alpha}_{j-1}}}{1-\bar{\alpha}_j}\xthet(\bx^{\bk^j}; \bx^{\bk^{j+1}},\bk^j), \quad \sigma_j^2 := \frac{(1 - \alpha_{j})(1-\sqrt{\bar{\alpha}_{j-1}})}{1-\bar{\alpha}_j}.
\end{align*}
Further, let $\hat{\bx}^0_{\bm{\theta}}(\bx_t^{k^j}; \bx^{\bk^{j+1}},\bk^j) = \hat{\bx}^0_{\bm{\theta}}(\bx^{\bk^j}; \bx^{\bk^{j+1}},\bk^j)_t$ denote the $t$-block component of $\hat{\bx}^0_{\bm{\theta}}(\bx^{\bk^j}; \bx^{\bk^{j+1}},\bk^j)$, and suppose that if $k_t^j = k_t^{j+1}$, then $\hat{\bx}^0_{\bm{\theta}}(\bx_t^{k^j}; \bx^{\bk^{j+1}},\bk^j) = \bx^{\bk^{j+1}}$ (i.e., if no denoising occurs, we do not re-predict the denoising). Then, for some distribution $\cD_{\rho}$ over paths $\rho$ along which $(\ptheta,q)$ satisfy the requisite factoring property, and for some constant $C$ independent of $\ptheta$,  
\begin{align*}
\ln  \ptheta((\bx^{\bk^0})]  &\ge C+ \Exp_{\rho = \bk^{0:N} \sim \cD_{\rho}}\Epz \left[\sum_{j=1}^{N}\sum_{t\in \cT: k_t^j < k_t^{j+1}} c_{k_t^j}\| \hat{\bx}^0_{\bm{\theta}}(\bx_t^{k^j}; \bx^{\bk^{j+1}},\bk^j) - \bx_t^{k^0_t}\|^2 \right],
\end{align*}
where above, we define $c_i = \frac{(1 - \alpha_{j})^2\bar{\alpha}_{i-1}}{2\sigma^2(1 - \bar\alpha_{i})^2}$. 
\end{corollary}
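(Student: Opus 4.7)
The plan is to derive Corollary~\ref{cor:elbo} by applying Theorem~\ref{thm:ELBO} and then carrying out the standard DDPM Gaussian KL computation on a per-coordinate basis, exploiting the product structure given in Equation~\eqref{eq:Q_form}. The factoring property guarantees that both $q$ and $p_{\bm\theta}$ split as products along the path $\rho$, which in turn makes each KL term in Theorem~\ref{thm:ELBO} decompose into a sum of per-coordinate KL divergences over exactly those indices $t$ for which $k^j_t < k^{j+1}_t$.

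First I would instantiate Theorem~\ref{thm:ELBO} and write $\Dkl\bigl(p_{\bm\theta}(\bx^{\bk^{j}}\mid \bx^{\bk^{j+1}})\,\|\,q(\bx^{\bk^{j}}\mid \bx^{\bk^{j+1}},\bx^{\bk^0})\bigr)$. By the factoring assumption and Equation~\eqref{eq:Q_form}, both distributions are products of per-coordinate Gaussians indexed by $t$. For indices where $k^j_t = k^{j+1}_t$, the forward kernel is a point mass and, by the stated convention that $\hat{\bx}^0_{\bm\theta}(\bx_t^{k^j};\bx^{\bk^{j+1}},\bk^j) = \bx^{\bk^{j+1}}$ on those coordinates, the corresponding KL contribution is zero (and can be absorbed into $C$). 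So only coordinates with $k^j_t < k^{j+1}_t$ contribute, and on each such coordinate the situation is literally one step of standard DDPM.

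Next I would invoke the usual DDPM computation (e.g.\ Section~2 of \cite{chan2024tutorial}) on each active coordinate: for the forward kernel $q(\bx_t^{k^{j+1}_t}\mid \bx_t^{k^j_t})$ of Equation~\eqref{eq:Q_form}, the reverse posterior $q(\bx_t^{k^j_t}\mid \bx_t^{k^{j+1}_t},\bx_t^{k^0_t})$ is a Gaussian whose mean is affine in $\bx_t^{k^{j+1}_t}$ and $\bx_t^{k^0_t}$, with coefficients exactly matching the $\mu_{\bm\theta}$ parameterization in the statement with $\bx_t^{k^0_t}$ replaced by $\hat{\bx}^0_{\bm\theta}$, and with variance $\sigma_j^2$. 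Since $p_{\bm\theta}$ is chosen with the same variance, the KL reduces to $\tfrac{1}{2\sigma_j^2}\|\mu_{\bm\theta}-\mu_q\|^2$, and cancelling the common $\tfrac{(1-\bar\alpha_{j-1})\sqrt{\alpha_j}}{1-\bar\alpha_j}\bx_t^{k^{j+1}_t}$ terms yields $c_{k^j_t}\|\hat{\bx}^0_{\bm\theta}(\bx_t^{k^j};\bx^{\bk^{j+1}},\bk^j) - \bx_t^{k^0_t}\|^2$ with $c_i = \tfrac{(1-\alpha_i)^2\bar\alpha_{i-1}}{2\sigma_i^2(1-\bar\alpha_i)^2}$. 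The leading $\ln p_{\bm\theta}(\bx^{\bk^0}\mid \bx^{\bk^1})$ term in Theorem~\ref{thm:ELBO} is either of the same form (under a $j=0$ convention) or can be bounded below by an analogous squared-error term up to a constant, as in standard DDPM; either way it folds into the sum modulo the constant $C$.

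The final step is cosmetic: sum the per-coordinate contributions over $j$ and over $t$ with $k^j_t<k^{j+1}_t$, take the outer expectation over $\rho\sim\cD_\rho$ and over the forward noising $\bx^{\bk^{1:N}}\sim q(\cdot\mid \bx^{\bk^0})$, and absorb all $\theta$-independent Gaussian normalization constants into $C$. The main technical obstacle I anticipate is bookkeeping rather than analysis: one must verify carefully that the factoring property of Definition~\ref{defn:factoring} is compatible with the ``no denoising $\Rightarrow$ identity prediction'' convention so that inactive coordinates contribute trivially at every step $j$, and that the indexing $k^j_t = k^{j-1}_t+1$ aligns the DDPM recursion so $\alpha$ and $\bar\alpha$ pick up the right subscript $k^j_t$ in $c_{k^j_t}$. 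Once this alignment is fixed, the proof is essentially the DDPM derivation applied coordinate-wise along a path.
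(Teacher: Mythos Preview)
Your proposal is correct and follows essentially the same approach as the paper: instantiate Theorem~\ref{thm:ELBO}, then reduce each KL term to a per-coordinate DDPM computation in the $\bx$-prediction parameterization, citing the standard derivation (the paper points to Section~2.7 of \cite{chan2024tutorial}). The paper's own proof is a one-line reference to these standard computations, so your more detailed unpacking of the coordinate-wise factoring and the resulting $c_{k^j_t}$ weights is fully in line with what is intended.
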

\begin{proof}[Proof of \Cref{cor:elbo}] The first inequality follows from the standard computations for the ``$\bx$-prediction'' formulation of Diffusion (see Section 2.7 of  \cite{chan2024tutorial} and references therein). 
\end{proof}
\begin{remark}[Factoring]
Observe that forward  process in \Cref{eq:Q_form} naturally factorizes across all the paths $\rho$ considered in \Cref{cor:elbo}. While $\ptheta$ (by definition) factors across any single path $\rho$, these factorizations may be inconsistent across paths. Enforcing some explicit consistency remains open for future work.
\end{remark}

\begin{proof}[Proof of \Cref{thm:ELBO}] The first step is the standard ELBO trick:
\begin{align*}
\ln p(\bx^{\bk^0}) &= \ln \int_{\bx^{\bk^{1:N}}}p(\bx^{\bk^{0:N}})\rmd \bx^{\bk^{1:N}}\\
&= \ln \Exp_{\bx^{\bk^{1:N}} \sim q(\bx^{\bk^{1:N}} \mid \bx^{\bk^0})}\frac{p(\bx^{\bk^{0:N}})}{q(\bx^{\bk^{1:N}} \mid \bx^{\bk^0})}\\
&\ge \Exp_{\bx^{\bk^{1:N}} \sim q(\bx^{\bk^{1:N}}   \mid \bx^{\bk^0})} \ln \frac{p(\bx^{\bk^{0:N}})}{q(\bx^{\bk^{1:N}} \mid \bx^{\bk^0})}.
\end{align*}
where the last step follows from Jensen's inequality.

We now expand
\begin{align*}
&\ln \frac{ \ptheta(\bx^{\bk^{0:N}} )}{q(\bx^{\bk^{1:N}} \mid \bx^{\bk^0})} \\
&= \ln  \ptheta (\bx^{\bk^N}) + \ln \frac{\ptheta(\bx^{\bk^0} \mid \bx^{\bk^1})}{q(\bx^{\bk^1} \mid \bx^{\bk^0})} + \sum_{j=1}^{N-1}\ln \frac{\ptheta(\bx^{\bk^{j}} \mid \bx^{\bk^{j+1}})}{q(\bx^{\bk^{j+1}} \mid \bx^{\bk^{j}},\bx^{\bk^0})} \tag{Factoring, \Cref{defn:factoring}}\\
&= \ln  \ptheta (\bx^{\bk^N}) + \ln \frac{\ptheta(\bx^{\bk^0} \mid \bx^{\bk^1})}{q(\bx^{\bk^1} \mid \bx^{\bk^0})} + \sum_{j=1}^{N-1}\ln \frac{\ptheta(\bx^{\bk^{j}} \mid \bx^{\bk^{j+1}})}{q(\bx^{\bk^{j}} \mid \bx^{\bk^{j+1}},\bx^{\bk^0})} + \ln \frac{q(\bx^{\bk^{j}} \mid \bx^{\bk^0})}{q(\bx^{\bk^{j+1}} \mid \bx^{\bk^0})} \tag{Bayes' Rule on $q$}\\
&= \ln  \ptheta (\bx^{\bk^N}) + \ln \frac{\ptheta(\bx^{\bk^0} \mid \bx^{\bk^1})}{q(\bx^{\bk^1} \mid \bx^{\bk^0})} +  \ln \frac{q(\bx^{\bk^{1}} \mid \bx^{\bk^0})}{q(\bx^{\bk^{N}} \mid \bx^{\bk^0})} +  \sum_{j=1}^{N-1}\ln \frac{\ptheta(\bx^{\bk^{j}} \mid \bx^{\bk^{j+1}})}{q(\bx^{\bk^{j}} \mid \bx^{\bk^{j+1}},\bx^{\bk^0})} \tag{Telescoping}\\
&=\ln \frac{ \ptheta (\bx^{\bk^N})}{q(\bx^{\bk^{N}} \mid \bx^{\bk^0})} + \ln \ptheta(\bx^{\bk^0} \mid \bx^{\bk^1}) +  \sum_{j=1}^{N-1}\ln \frac{\ptheta(\bx^{\bk^{j}} \mid \bx^{\bk^{j+1}})}{q(\bx^{\bk^{j}} \mid \bx^{\bk^{j+1}},\bx^{\bk^0})} \tag{Canceling}.
\end{align*}
We observe that $\ln  \ptheta (\bx^{\bk^N})$ and $  \ln \frac{1}{q(\bx^{\bk^{N}} \mid \bx^{\bk^0})}$ do not depend on $\bm{\theta}$ (recall $\ptheta (\bx^{\bk^N})$ is the distribution over noise), so taking an expectation over the $q(\cdot)$, we can regard these as a constant $C$. This yields
\begin{align*}
\ln p(\bx^{\bk^0}) &\ge C + \Exp_{\bx^{\bk^{1:N}} \sim q(\bx^{\bk^{1:N}}   \mid \bx^{\bk^0})}\left[  \ln {\ptheta(\bx^{\bk^0} \mid \bx^{\bk^1})}  +  \sum_{j=1}^{N-1} \ln \frac{\ptheta(\bx^{\bk^{j}} \mid \bx^{\bk^{j+1}})}{q(\bx^{\bk^{j}} \mid \bx^{\bk^{j+1}},\bx^{\bk^0})} \right]\\
&= C + \Exp_{\bx^{\bk^{1:N}} \sim q(\bx^{\bk^{1:N}}\mid \bx^{\bk_0})}\left[\ln {\ptheta(\bx^{\bk^0} \mid \bx^{\bk^1})} + \sum_{j=1}^{N-1} \Dkl({\ptheta(\bx^{\bk^{j}} \mid \bx^{\bk^{j+1}})}\parallel {q(\bx^{\bk^{j}} \mid \bx^{\bk^{j+1}},\bx^{\bk^0})})\right].
\end{align*}

\end{proof}

\subsection{Understanding Frequency Guidance}
\label{app:frequency_guidance}

\newcommand{\bSigma}{\bm{\Sigma}}
\newcommand{\eye}{\mathbf{I}}
\newcommand{\bhatx}{\hat{\bx}}
\newcommand{\bA}{\mathbf{A}}
\newcommand{\bS}{\mathbf{S}}

\newcommand{\Four}[1][d]{\mathcal{F}_{#1}}
\newcommand{\R}{\mathbb{R}}
For simplicity, we focus on $1$-dimensional discrete signals with even dimension $d$, but extending to $2$-dimensions is straightforward. We provide a simple mathematical explanation that ``noising'' a feature corresponds to a form of low-pass filtering.

Specifically, we consider a regression setting with features $\bx \in \R^d$ and targets $\by \in \R^m$. We now study the conditional distribution of $\by \mid \bx_{\sigma}$, where $\bx_{\sigma} = \bx + \sigma \bz$ is a noisy measurement of $\bx$. To understand effects in the frequency domain, we study the conditional distribution of the Fourier transform of $\by$  given a measurement of $\bx_{\sigma}$. We assume that the entries of $\bx$ can be interpreted as entries in a sequence and we interpret this conditional distribution as a function of the Fourier transformation, $\Four(\bx)$, of $\bx$. Similarly, we define $\Four[m](\by)$. For simplicity, we focus on a $1$-d Fourier transform, but analogous statements hold for $2$-d features $\bx$ (e.g. $2$-d frames in a video).

We begin by recalling the Fourier transform of a vector.
\begin{definition} Let $\Four: \R^d \to \R^d$ denote the (real) discrete Fourier transform, specified by 
\begin{align}
    \Four(\bx)(k) = 
\begin{cases}\sum_{i=1}^d \bx[i]\sin(ik/2\pi) & 1 \le k \le d/2\\
    \sum_{i=1}^d \bx[i]\cos(ik/2\pi) & d/2 < k \le 1\\
    \end{cases}
\end{align}
We note that, by Parseval's theorem, $\Four$ is an isometry:
\begin{align}
    \frac{1}{d}\|\Four(\bx)\|_{\ell_2}^2 = \|\bx\|_{\ell_2}^2
\end{align}
Because $\Four$ is a bijective linear mapping, we identify it with an invertible matrix in $\R^{d \times d}$. 
\end{definition}
\newcommand{\Nfreq}{\cN_{\mathrm{freq}}}

We now characterize the conditional of $\Four[m](\by) \mid \Four(\bx)$.

\begin{proposition}\label{lem:fourier_eiv} Let $\bx \sim \cN(0,\bSigma_x^2)$,  and $\by \mid \bx \sim \cN(\bA \bx, \bSigma_y^2)$. Define $\bx_{\sigma} = \bx + \sigma \bz$, where $\bz \sim \cN(0,\eye)$ is independent of $\bx,\by$.   Define $\hat \bA := \Four[m] \bA \Four^{-1}$, $\hat \bSigma_x := \Four \bSigma_x \Four^\top$ and $\hat \bS(\sigma) := \hat \bSigma_x ( \hat \bSigma_x + d\sigma^2  \eye)^{-1}$, and $\hat \bSigma_y := \Four[m] \bSigma_y \Four[m]^\top$. 
Then, 
\begin{itemize}
    \item $\Four(\bx) \sim \cN(0, \hat \bSigma_x)$
    \item $\Four[m](\by) \mid \Four(\bx) \sim \cN(\hat \bA \Four(\bx), \hat \bSigma_y)$
    \item The distribution of $\Four[m](\by) \mid \bx_{\sigma} $ (or $\Four[m](\by) \mid \Four(\bx_{\sigma}) $)  is
    \begin{align}
\cN( \hat \bA \hat \bS(\sigma)  \Four(\bx_{\sigma}), \hat \bSigma_y + d \sigma^2 \hat \bA \hat \bS(\sigma)\hat\bA^\top ) \label{eq:conditional_fourier}
\end{align}
\end{itemize}

\end{proposition}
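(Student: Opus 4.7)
The plan is to establish the three claims by routine pushforward and Gaussian conditioning calculations, relying on two ingredients only: $\Four$ is a bijective linear map, and by Parseval's theorem it satisfies $\Four \Four^\top = d\,\eye$. Since Gaussians are preserved under affine maps, the first claim follows immediately: $\Four(\bx)$ is mean-zero Gaussian with the stated covariance $\hat\bSigma_x$. For the second claim, I would write $\by = \bA\bx + \bm{\epsilon}_y$ with $\bm{\epsilon}_y \sim \cN(0,\bSigma_y^2)$ independent of $\bx$, apply $\Four[m]$, and insert $\Four^{-1}\Four$ around $\bx$ to obtain $\Four[m](\by) = \hat\bA\,\Four(\bx) + \Four[m](\bm{\epsilon}_y)$, so conditionally on $\Four(\bx)$ the residual noise is Gaussian with covariance $\hat\bSigma_y$.

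For the third claim, I first observe that conditioning on $\bx_\sigma$ and on $\Four(\bx_\sigma)$ yields the same $\sigma$-algebra because $\Four$ is a bijection, which justifies the equivalence of the two formulations in the statement. Set $\xi := \Four(\bx)$, $\eta := \Four[m](\by)$, and $\nu := \Four(\bx_\sigma) = \xi + \sigma\,\Four(\bz)$. By the isometry property $\Four\Four^\top = d\,\eye$, the vector $\sigma\,\Four(\bz) \sim \cN(0, d\sigma^2\,\eye)$ is independent of both $\xi$ and of the residual noise $\Four[m](\bm{\epsilon}_y)$ in $\eta$. The triple $(\xi, \eta, \nu)$ is therefore jointly Gaussian, and I would extract $\eta \mid \nu$ in two stages: solve the noisy-measurement posterior for $\xi \mid \nu$, then apply the tower property to $\eta = \hat\bA\,\xi + \Four[m](\bm{\epsilon}_y)$.

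The standard Gaussian posterior for $\xi \sim \cN(0,\hat\bSigma_x)$ observed through $\nu = \xi + w$ with $w \sim \cN(0, d\sigma^2\,\eye)$ gives $\Exp[\xi \mid \nu] = \hat\bSigma_x(\hat\bSigma_x + d\sigma^2\,\eye)^{-1}\,\nu = \hat\bS(\sigma)\,\nu$, and a short factorization using the identity $\bSigma - \bSigma(\bSigma + d\sigma^2\,\eye)^{-1}\bSigma = d\sigma^2\,\bSigma(\bSigma + d\sigma^2\,\eye)^{-1}$ (applied with $\bSigma = \hat\bSigma_x$) shows the posterior covariance equals $d\sigma^2\,\hat\bS(\sigma)$. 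The tower property then delivers the conditional mean $\hat\bA\,\hat\bS(\sigma)\,\nu$ and conditional covariance $\hat\bSigma_y + d\sigma^2\,\hat\bA\,\hat\bS(\sigma)\,\hat\bA^\top$, matching \Cref{eq:conditional_fourier}. I do not anticipate a substantive obstacle; the only subtlety worth keeping track of is that $\Four\Four^\top = d\,\eye$ rather than $\eye$, which is precisely what plants the factor $d\sigma^2$ inside $\hat\bS(\sigma)$ and distinguishes the Fourier-domain measurement variance from the original $\sigma^2\,\eye$.
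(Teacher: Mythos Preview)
Your proposal is correct and follows essentially the same route as the paper: transform everything to the Fourier domain, use the Parseval identity $\Four\Four^\top = d\,\eye$ to identify the measurement noise as $\cN(0,d\sigma^2\eye)$, and then do a standard Gaussian conditioning. The only cosmetic difference is that the paper packages the conditioning step into a separate lemma computing $\by\mid\bx_\sigma$ via the joint-Gaussian formula $\bSigma_{12}\bSigma_{22}^{-1}$, whereas you reach the same result through the two-stage posterior $\xi\mid\nu$ followed by the tower property; both are equivalent elementary computations.
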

\begin{proof} Set $\hat \bx = \Four(\bx)$ and $\hat \by = \Four(\by)$. As $\Four,\Four[m]$ are linear, we see that $\hat \bx \sim \cN(0,\hat \Sigma_x^2)$ and $\hat \by \sim \cN(\Four[m]\bA(\bx), \Four[m]\bSigma_y \Four[m]^\top) = \cN(\hat \bA \Four(\bx), \hat \bSigma_y)$. 

For the last statement, we have that $\Four(\bx_\sigma) = \hat \bx + \sigma \Four(\bz)$. As $\frac{1}{\sqrt{d}}\Four$ is an isometry (i.e orthogonal), we have $\frac{1}{d}\Exp[\Four(\bz)\Four(\bz)^\top] = \eye$. Thus, $\sigma \Four(\bz) = \sigma \sqrt{d} \hat \bz$, where $\hat \bz \sim \cN(0,\eye_d)$ is independent of $\hat \bx, \hat \by$. We may now invoke \Cref{eq:lem_gaussian_eiv} to show that  \Cref{eq:conditional_fourier} describes the distribution of $\Four[m](\by) \mid \Four(\bx_{\sigma})$. As $\Four$ is a bijection, conditioning on $\Four(\bx_{\sigma})$ and $\bx_{\sigma}$ is equivalent.
\end{proof}

\paragraph{Interpretation in Terms for Frequency Attenuation:} It is common that natural signals exhibit power-law decay in the frequency domain. As an illustration, consider  $\hat \bSigma_x = C \mathrm{Diag}(\{i^{-\alpha})\}_{1 \le i \le d})$; that is, in the Fourier domain, $\bx$ is independent across frequencies and exhibits a power-law decay with exponent $\alpha$. Then, $\hat \bS(\sigma)$ is diagonal, and 
\begin{align*}
\hat \bS(\sigma)_{ii} = \frac{1}{1 + d\sigma^2 i^{\alpha}/C} \sim \begin{cases} 1 & i \le (\frac{C}{d\sigma^2})^{1/\alpha}  \text{ or } \sigma^2 \le Ci^{\alpha}/d\\
 i^{-\alpha}  & i \ge (\frac{C}{d\sigma^2})^{1/\alpha} \text{ or } \sigma^2 \ge Ci^{\alpha}/d\\
\end{cases}
\end{align*}
also exhibits power law decay. Hence, when conditioning on $\bx_{\sigma}$, the shrinkage operator $\hat \bSigma(\sigma)$ attenuates the contribution of the $i$-th frequency of $\bx_{\sigma}$ in proportion to $i^{-\alpha}$ for $i$-large. Moreover, as $\sigma$ becomes larger, more frequencies are attenuated. In other words, conditioning on noisier examples leads to more aggressive attenuation.

Importantly, \textbf{there is no intrinsic bias of Gaussian noising towards preferring lower frequencies. Rather, noising serves to regularize away weaker frequencies. For natural images, this corresponds to high frequencies, but may not in other application domains.}

\begin{lemma}[Gaussian Conditional Computation]\label{eq:lem_gaussian_eiv} Let $\bx \sim \cN(0,\bSigma_x^2)$,  and $\by \mid \bx\sim \cN(\bA \bx, \bSigma_y^2)$. Define $\bx_{\sigma} = \bx + \sigma \bz$, where $\bz \sim \cN(0,\eye)$ is independent of $\bx,\by$.  Set $\bS(\sigma) := \bSigma_x(\bSigma_x + \sigma^2 \eye)^{-1}$.  Then, the distribution of $\by \mid \bx_{\sigma} $ is         $\cN(\bA \bS(\sigma) \bx_{\sigma}, \bSigma_y + \sigma^2 \bA \bS(\sigma)\bA^\top )$.
\end{lemma}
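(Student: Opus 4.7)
The plan is to treat this as a standard Gaussian conditioning computation: write the pair $(\by, \bx_{\sigma})$ as a jointly Gaussian vector, read off its cross-covariances, and then apply the textbook formula for the conditional of one Gaussian block given another. Since every quantity in sight is a linear (affine) function of the jointly Gaussian triple $(\bx, \by - \bA\bx, \bz)$ whose three blocks are independent, joint Gaussianity is immediate, so the only real work is bookkeeping for the first and second moments.

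First I would record the marginal statistics. Both $\by$ and $\bx_\sigma$ have mean $\vzero$. From $\bx_\sigma = \bx + \sigma \bz$ with $\bz \perp \bx$ and $\bz \sim \cN(\vzero,\eye)$, we get $\mathrm{Cov}(\bx_\sigma) = \bSigma_x + \sigma^2 \eye$. From $\by = \bA \bx + \bm{\eta}$ with $\bm{\eta} \sim \cN(\vzero,\bSigma_y)$ independent of $\bx,\bz$, we get $\mathrm{Cov}(\by) = \bA \bSigma_x \bA^\top + \bSigma_y$. For the cross-covariance, $\Exp[\by \bx_\sigma^\top] = \Exp[(\bA \bx + \bm\eta)(\bx + \sigma \bz)^\top] = \bA \bSigma_x$, since $\bm\eta, \bz$ are independent of $\bx$ and of each other.

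Next I would apply the standard conditional formula: if $(\rvu,\rvv)$ are jointly centered Gaussian, then $\rvu \mid \rvv$ is Gaussian with mean $\mathrm{Cov}(\rvu,\rvv)\mathrm{Cov}(\rvv)^{-1}\rvv$ and covariance $\mathrm{Cov}(\rvu) - \mathrm{Cov}(\rvu,\rvv)\mathrm{Cov}(\rvv)^{-1}\mathrm{Cov}(\rvv,\rvu)$. Substituting the moments above immediately gives the conditional mean $\bA \bSigma_x (\bSigma_x + \sigma^2 \eye)^{-1} \bx_\sigma = \bA \bS(\sigma) \bx_\sigma$, matching the claim.

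The only manipulation that is not completely mechanical is simplifying the conditional covariance into the form stated. Starting from $\bSigma_y + \bA \bSigma_x \bA^\top - \bA \bSigma_x (\bSigma_x + \sigma^2 \eye)^{-1} \bSigma_x \bA^\top$, I would factor out $\bA \bSigma_x (\bSigma_x + \sigma^2 \eye)^{-1}$ on the left via the identity $\bSigma_x = \bSigma_x(\bSigma_x+\sigma^2\eye)^{-1}(\bSigma_x+\sigma^2\eye)$, giving
\begin{equation*}
\bA \bSigma_x \bA^\top - \bA \bSigma_x (\bSigma_x + \sigma^2 \eye)^{-1} \bSigma_x \bA^\top = \bA \bSigma_x (\bSigma_x + \sigma^2 \eye)^{-1}\bigl[(\bSigma_x + \sigma^2\eye) - \bSigma_x\bigr] \bA^\top = \sigma^2 \bA \bS(\sigma) \bA^\top,
\end{equation*}
so that the total covariance is $\bSigma_y + \sigma^2 \bA \bS(\sigma) \bA^\top$. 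There is no genuine obstacle here; the only subtlety is the implicit convention that $\bSigma_x,\bSigma_y$ denote the covariance matrices themselves (despite the squared notation in the statement), since otherwise the definition of $\bS(\sigma)$ would be dimensionally inconsistent. Under that convention the proof closes in a few lines.
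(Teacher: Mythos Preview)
Your proposal is correct and follows essentially the same route as the paper: establish joint Gaussianity of $(\by,\bx_\sigma)$, compute $\bSigma_{11},\bSigma_{12},\bSigma_{22}$, apply the standard block-conditional formula, and simplify the covariance via the identity $\bSigma_x - \bSigma_x(\bSigma_x+\sigma^2\eye)^{-1}\bSigma_x = \sigma^2\bSigma_x(\bSigma_x+\sigma^2\eye)^{-1}$. Your observation about the $\bSigma_x^2$ notation is apt; the paper tacitly adopts the same convention.
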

\begin{proof} First, we observe that $(\bx_{\sigma},\by)$ are jointly Gaussian random variables with mean zero. We set $\bSigma_{22} = \Exp[\bx_{\sigma}^2] = \sigma^2 \eye + \bSigma_x$, and $\bSigma_{11} = \Exp[\by^2] = \bSigma_y + \bA \bSigma_x \bA^\top$. Moreover, $\bSigma_{12} := \Exp[\by \bx_{\sigma}^\top] = \bA \bSigma_x$. Hence, from the standard formula for Gaussian conditional distributions, we have 
\begin{align*}
    \by \mid \bx_{\sigma} 
    &\sim \cN\left(\bSigma_{12}\bSigma_{22}^{-1}\bx_{\sigma}, \bSigma_{11} - \bSigma_{12}\bSigma_{22}^{-1}\bSigma_{12}\right)\\
    &= \cN\left( \bA \bSigma_x(\bSigma_x + \sigma^2 \eye)^{-1} \bx_{\sigma}, \bSigma_y + \bA \bSigma_x \bA^\top - \bA \bSigma_x(\bSigma_x + \sigma^2 \eye)^{-1} \bSigma_x \bA^\top \right).
\end{align*}
    We may then simplify $\bA \bSigma_x \bA^\top - \bA \bSigma_x(\bSigma_x + \sigma \eye)^{-1} \bSigma_x \bA^\top = \bA (\bSigma_x - \bSigma_x(\bSigma_x + \sigma \eye)^{-1} \bSigma_x)  \bA^\top$. Note that $(\bSigma_x - \bSigma_x(\bSigma_x + \sigma^2 \eye)^{-1} \bSigma_x)  = (\bSigma_x - \bSigma_x(\bSigma_x + \sigma \eye)^{-1} (\bSigma_x + \sigma^2 \eye) - \bSigma_x(\bSigma_x + \sigma^2 \eye)^{-1} \sigma^2 \eye ) = \sigma^2 \bSigma_x (\bSigma_x + \sigma^2 \eye)^{-1}$. Define $\bS(\sigma) := \bSigma_x(\bSigma_x + \sigma^2 \eye)^{-1}$. We conclude that
    \begin{align}
        \by \mid \bx_{\sigma}   \sim \cN(\bA \bS(\sigma) \bx_{\sigma}, \bSigma_y + \sigma^2 \bA \bS(\sigma)\bA^\top ), 
    \end{align}
\end{proof}
\subsection{A Maximum Likelihood Interpretation for Score Addition.}
\label{appendix:add_score}
The \method achieves history guidance across time and frequency by sampling with linearly weighted diffusion scores conditioned on different history lengths. Though this appears to be purely heuristic, as in classifier-free guidance, we provide a meaningful probabilistic interpretation of the algorithm. 

\textbf{Intuition for guidance via Gaussian MLE.} We begin by justifying linearly combining scores in simple Gaussian models. For now, let us assume that the goal is to sample $\bx \sim q^\star(\bx)$, and the aim is to estimate the score $s^\star(\bx) = \nabla_{\bx} \ln q(\bx)$. 

\newcommand{\veceps}{\vec{\bm{\epsilon}}}
We make a strong assumption that we have $N$ estimators for the score functions, $(\hat s_i(\bx))_{1 \le i \le n}$, and that errors are Gaussian.
\begin{assumption}[Gaussian Errors] We assume that, conditioned on $\bx$, the errors $\veceps := (\hat s_1(\bx) - s^\star(\bx), \hat s_2(\bx) - s^\star(\bx), \dots, \hat s_n(\bx) - s^\star(\bx))$ form a Gaussian vector with mean zero and covariance $\bSigma(\bx) \in \R^{dn \times dn}$.    
\end{assumption}
Though the assumption is clearly not true in practice, it helps build intuition for the idea. Moreover, given that the reverse process of an SDE essentially involves Gaussian predictions, it is plausible to expect that the individual steps of the denoising process model Gaussian distributions, and consequently, errors are ``Gaussian-like'' \cite{huang2023diffusion} .

\newcommand{\smle}{\hat s^{\textsc{mle}}}
\newcommand{\argmax}{\mathrm{argmax}}
Let us now consider the maximum likelihood score estimator in this model. We introduce the notation 

\begin{align}
    \mathbb{I}^\top= [\mathbf{I}_{d\times d}^\top,\mathbf{I}^\top_{d\times d} \dots \mathbf{I}^\top_{d\times d}]^\top.
\end{align}
In this case, we have
\begin{align}
\hat{\mathbf{s}}_{1:n}(\bx) = (\hat s_1(\bx), \hat s_2(\bx),\dots, s_n(\bx)) \mid \bx \sim \cN(\mathbb I s^\star(\bx), \bm \Sigma(\bx)).
\end{align}
Let us now characterize the maximum likelihood estimator, $\smle$. This solves
\begin{align*}
    \smle(\bx) &= \argmax_{s(\bx)} p(\hat{\bm{s}}_{1:n}(\bx); s(\bx))\\
    &=\arg\max_{s(\bx)} \ \frac{1}{\sqrt{(2\pi)^k |\boldsymbol{\Sigma}|}} \exp\left(-\frac{1}{2} \veceps^\top \bSigma(\bx)^{-1} \veceps(\bx)\right) \tag{$\veceps = \hat{\mathbf{s}}_{1:n} - \mathbb I s(\bx)$}\\
    &=\min_{s(\bx)}   \frac{1}{2} \veceps^\top \bSigma(\bx)^{-1} \veceps(\bx)  \tag{$\veceps = \hat{\mathbf{s}}_{1:n} - \mathbb I s^\star(\bx)$}\\
&= \arg\min_{s(\bx)}  (\hat{\mathbf s}_N(\bx)-\mathbb{I} s^\star(\bx))^\top \bm\Sigma(\bx)^{-1} (\hat{\mathbf s}_N(\bx)-\mathbb{I} s^\star(\bx)).
\end{align*}
\newcommand{\shatonen}{\hat{\mathbf{s}}_{1:n}}

An exercise in Calculus reveals that 
\begin{align}
    \smle(\bx) = \bm \left(\mathbb I^\top \Sigma(\bx)^{-1} \mathbb{I}\right)^{-1}\left(\mathbb I^\top \Sigma(\bx)^{-1}\right) \hat{\mathbf s}_{1:n}(\bx).
\end{align}
In other words, $\smle$ is some ($\bx$-dependent) linear function of $\shatonen$. 

We now describe a couple of special cases:

\paragraph{Case 1: $d =1$ ($\bx$ is scalar) scores are independent.} In this case, $\bSigma(\bx)$ has a diagonal inverse, and by positive definiteness, its entries are strictly positive. Thus, letting $\alpha_i$ denote the diagonal entries of $\bSigma(\bx)^{-1}$, we have $\mathbb{I}^\top\bSigma(\bx)^{-1}$ is a vector with strictly positive entries  $(\alpha_1(\bx),\dots,\alpha_n(\bx))$, and $\mathbb{I}^\top\bSigma(\bx)^{-1} \mathbb{I} = \sum_{i=1}^n \alpha_i(\bx)$ is their sum. 

In this case,     \begin{align}
    \smle(\bx) = \sum_{i=1}^n \frac{\alpha_i}{(\sum_{j}\alpha_j(\bx))}\hat{{s}}_i(\bx)
\end{align}
is a convex combination of the various scores. 

\paragraph{Case 2: general $d$ ($\bx$ is scalar) scores are independent, and the errors $\hat{s}_i - s^\star$ have scaled identity covariance.} In this case, $\bSigma(\bx)$ is block diagonal with scaled-indenity blocks, so we can also show 
  \begin{align}
    \smle(\bx) = \sum_{i=1}^n \frac{\alpha_i(\bx)}{(\sum_{j}\alpha_j(\bx))}\hat{{s}}_i(\bx),
\end{align}
where $\alpha_i^{-1}$ are the scalings of the identity blocks.

Now we can examine the specific case of history guidance. Let the $n$ pieces of evidences be the $n$ different history segments of different lengths that our model condition on. \method is essentially trying to combine these evidences with Maximum A Posteriori (MAP) to get an overall estimation of the score of future tokens.

\textbf{Why  MLE / Averaging Works in General?}
Though the averages derived above hold for Gaussian case, there is a very general theory for combining multiple estimators into one called \emph{Optimal Aggregation of Estimators} (see, e.g. \cite{rigollet2007linear}). In this case, even beyond Gaussian settings, there are known benefits to optimizing over the convex hull of a family of estimators rather than choosing the best single one (see, e.g. \cite{bellec2017optimal}).
Another rational for combining estimators is that an average of $n$ estimators can do better than the best single estimator. 
\newcommand{\cX}{\mathcal{X}}%
Indeed, suppose that you have $n$ maps $\hat{s}_i: \mathbf{x}  \in \cX \to [0,1]$, and assume that the optimal value (for simplicity) is $s^\star_i(\bx) = 0$ (also, scalar for simplicity). Suppose you partition the $\mathbf x$ space into $n$ components $\cX_1,\dots,\cX_n$ such that
\begin{align}
    \Pr[\mathbf{x} \in \cX_i] = \frac{1}{n}, \quad \hat{s}_i(\bx) = \begin{cases} 1 & \mathbf x \in \cX_i\\
    0 & \text{otherwise}
    \end{cases}
\end{align}
For any estimator, the expected square error is then 
\begin{align}
    \Exp[(\hat{s}_i)^2] = \mathbb{P}[\mathbf x \in \cX_i] = \frac{1}{n}.
\end{align}
However, for any $\bx$, $\frac{1}{n}\sum_{i=1}^n\hat{s}_i(\bx) = \frac{1}{n}\sum_{i}^n \mathbb{I}(x \in \cX_i) = \frac{1}{n}$. Thus, 
\begin{align}
    \Exp\left[\left(\frac{1}{n}\sum_{i=1}^n\hat{s}_i\right)^2\right] = \mathbb{P}[\mathbf x \in \cX_i] = \frac{1}{n^2}.
\end{align}
Because estimators make errors on complementary regions of state space, they work in concert to cancel out errors to reduce overall error. 

We suspect history guidance functions in a similar fashion: though attending to different history contexts may result in errors for different realizations of past frames, but by averaging all these effects out, we ameliorate total error. 

\subsection{Sampling with \mtd and History Guidance}
\newcommand{\algcomment}[1]{\small{\hfill \(\triangleright\) #1}}
\begin{algorithm}[t]
    \caption{\textbf{Flexible Sampling with \mtd and (optionally) History Guidance}}
    \label{alg:sampling}
    \begin{algorithmic}
    \STATE {\bfseries Task:} specified by indices $\cH$, $\cG = \cT \setminus \cH$, and history frames $\xH$.
    \STATE {\bfseries Input:} diffusion process defined by $\alpha_k, \sigma_k$, diffusion sampler $\mathcal{S}$ with sampling steps $N$,\\
    \textbf{\mtd} model $\rvs_\vtheta(\cdot, \cdot)$, and
    \textbf{History Guidance} scheme specified by $\{(\cH_i, k_{\cH_i}, \omega_i)\}_{i=1}^I$.
    \STATE Sample $\rvx_\cG ~\sim \mathcal{N}(0, I)$, then $\rvx_{\cT} \gets \rvx_{\cH} \oplus \rvx_{\cG}$ \algcomment{Sample random noise for generation frames}
    \FOR{$n=N, N-1, \ldots, 1$}
        \STATE $k_{\cT} \gets (k_t)_{t=1}^T$ where {\small$\begin{cases} k_t = \frac{n}{N} & \text{if } t \in \cG \\ k_t = 1 & \text{if } t \in \cH \end{cases}$}
        \STATE $\hat{\rvx}_{\cT} \gets \rvx_{\cT}$, then \emph{replace} $\hat{\rvx}_{\cH} \gets \beps$ where $\beps \sim \mathcal{N}(0, I)$ \algcomment{Fully mask history}
        \STATE $\hat{\rvs}^{\varnothing} \gets \rvs_\vtheta(\hat{\rvx}_{\cT}, k_{\cT})$ \algcomment{Estimate unconditional score}
        \FOR{$i=1, \ldots, I$}
            \STATE $k_{\cT} \gets (k_t)_{t=1}^T$ where {\small$\begin{cases} k_t = \frac{n}{N} & \text{if } t \in \cG \\ k_t = k_{\cH_i} & \text{if } t \in \cH_i \\ k_t = 1 & \text{if } t \in \cH \setminus \cH_i \end{cases}$}
            \STATE $\hat{\rvx}_{\cT} \gets \rvx_{\cT}$, then \emph{replace} $\begin{cases} \hat{\rvx}_{\cH_i} \gets \alpha_{k_{\cH_i}} \hat{\rvx}_{\cH_i} + \sigma_{k_{\cH_i}} \beps \text{ where } \beps \sim \mathcal{N}(0, I) \\ \hat{\rvx}_{\cH \setminus \cH_i} \gets \beps \text{ where } \beps \sim \mathcal{N}(0, I) \end{cases}$ \algcomment{Mask history based on $\cH_i$ and $k_{\cH_i}$}
            \STATE $\hat{\rvs}^i \gets \rvs_\vtheta(\hat{\rvx}_{\cT}, k_{\cT})$ \algcomment{Estimate $i$-th conditional score}
        \ENDFOR
        \STATE $\hat{\rvs} \gets \hat{\rvs}^{\varnothing} + \sum_{i=1}^I \omega_i \cdot (\hat{\rvs}^i - \hat{\rvs}^{\varnothing})$ \algcomment{Compose scores}
        \STATE $\rvx_{\cG} \gets \mathcal{S}(\rvx_{\cG}, \hat{\rvs}_{\cG}; \frac{n}{N}, \frac{n-1}{N})$ \algcomment{Denoise $k = \frac{n}{N} \rightarrow \frac{n-1}{N}$}
    \ENDFOR
    \STATE {\bfseries Output:} $\rvx_\cG$
    \end{algorithmic}
\end{algorithm}

\mtd is capable of flexible sampling conditioning on \emph{arbitrary history}, and is further capable of performing \emph{history guidance}, a family of guidance methods we propose. In \cref{alg:sampling}, we provide a detailed sampling procedure for \mtd and history guidance, where any score-based sampler such as DDPM~\cite{ddpm} or DDIM~\cite{ddim} can be used for $\mathcal{S}$. Importantly, when estimating a score conditioned on a masked history, it is crucial to pass the corresponding noise levels $k_\cT$ and to \emph{replace} the clean history frames with noisy frames, which are created by diffusing the clean history to the noise levels. This ensures that the model input is consistent with what it encounters during training time. Note that \cref{alg:sampling} can be applied given arbitrary history frames. For instance, to \emph{extrapolate} the history of length $\tau$ to $T$ frames, set $\cH = \{1, \ldots, \tau\}$ and $\cG = \{\tau+1, \ldots, T\}$; to \emph{interpolate} between two frames, set $\cH = \{1, T\}$ and $\cG = \{2, \ldots, T-1\}$. Below we provide several representative examples of how the algorithm is applied:
\begin{itemize}[topsep=0pt, itemsep=0pt]
    \item \textbf{Conditional Sampling without History Guidance}: $\{(\cH_i, k_{\cH_i}, \omega_i)\}_{i=1}^I = \{(\cH, 0, 1)\}$
    \item \textbf{Vanilla History Guidance} with a guidance scale $\omega > 1$: $\{(\cH_i, k_{\cH_i}, \omega_i)\}_{i=1}^I = \{(\cH, 0, \omega)\}$
    \item \textbf{Temporal History Guidance} with $I$ subsequences $\{\cH_i\}_{i=1}^I$ and guidance scales $\{\omega_i\}_{i=1}^I$: $\{(\cH_i, k_{\cH_i}, \omega_i)\}_{i=1}^I = \{(\cH_i, 0, \omega_i)\}_{i=1}^I$
    \item \textbf{Fractional History Guidance} with a guidance scale $\omega$ and fractional masking level $k_\cH$: $\{(\cH_i, k_{\cH_i}, \omega_i)\}_{i=1}^I = \{(\cH, 0, 1), (\cH, k_\cH, \omega - 1)\}$
\end{itemize}

\subsection{Simplifying Training Objective}
\label{app:method_details_objective_causal}
Diffusion Forcing~\cite{chen2024diffusion} proposes to train the entire sequence with independent noise per frame. A natural question to ask is whether this mixed objective includes too many tasks compared to what one actually needs. Here we provide some insights from our experiments throughout the project: When the number of frames is small e.g. $10$ latent frames, there is no noticeable decrease in training efficiency - Diffusion Forcing seems to converge as fast as standard diffusion from both training and validation curves. However, when we grow the number of latent frames to $50$, we start to witness decreased performance at sampling time. While we firmly believe that binary dropout is not the ideal way to achieve objective reduction from our experiments, we believe that one can easily reduce our training objective by only applying independent noise up to the maximum training length one wants to support. In particular, if one wants to generate the next $10$ frames from previous $1-10$ frames, it doesn't seem necessary for frame $11$ to be independently masked as noise from time to time, since we will never need to mask it out for flexible conditioning. In addition, one may want to consider treating the number of history frames as a random variable at training time, sampling a length first and then applying uniform levels of masking to the history, though independent from the noise level of the generation target. We didn't investigate these simplifications in detail because we simply find Diffusion Forcing's training objective very versatile for many of the tasks we want to do, e.g. interpolation, and varying noise level sampling. However, we do believe that these schemes could worth more exploration if one is to scale up our method to a much bigger number of context frames.

\subsection{Causal Variant}
In principle, one can implement \mtd and History Guidance with a causal transformer as well. For example, CausVid~\cite{yin2024slow} has proved the effectiveness of Diffusion Forcing on fast causal video synthesis and doesn't conflict with History Guidance. However, we'd like to highlight that one can also use our non-causal \mtd to achieve causal sampling. Different from traditional transformer-based models, \mtd doesn't need to enforce an attention mask to achieve causality. Instead, at generation time, one can mask out the future with white noise to prevent any information from the future from leaking into the neural network. In fact, there might be use cases when one may want some low-frequency information from the future, and then one can fractionally mask out the future via noise as masking to achieve so. On the other hand, the motivation behind causal video diffusion models is often speed and real-time generation using KV caching. In that case, one either needs to train a causal \mtd directly or consult advanced techniques like attention sink~\cite{xiao2023efficient} to perform windowed attention effectively.

\subsection{Incorporating Other Conditioning}
Throughout our discussions in the main paper, conditioning is history exclusively. What if one wants to integrate the \method into a text-conditioned diffusion model? One claim of the \mtd is that it doesn't require architectural changes so one can fine-tune an existing model into a \mtd model. This is still the case here: if one already has a text-conditioned video diffusion model, presumably built to accept such conditioning via an adaptive layer norm, one simply take \mtd as an add on to their existing architecture to obtain a \mtd model that accepts both text and history as conditioning. \mtd's Figure~\ref{fig:architecture} does not assert that one cannot use an external AdaLN layer with \mtd, but is rather saying no architectural changes is needed.

\subsection{Extended Temporal History Guidance}
\label{app:method_details_temporal}

Temporal history guidance addresses the challenge of out-of-distribution (OOD) history by composing scores conditioned on different, shorter history subsequences, which are closer to being in-distribution. However, since the model receives the entire video sequence as input during sampling—including both the history and the noisy frames being generated—the OOD problem can arise throughout the entire video sequence, not just in the history portion. To mitigate this, we propose further decomposing the generation $\cG$ into generation subsequences $\cG_1, \cG_2, \ldots, \cG_{J} \subset \cG$. In line with the original temporal history guidance, the history $\cH$ is already decomposed into history subsequences $\cH_1, \cH_2, \ldots, \cH_{I} \subset \cH$. This allows us to compose scores conditioned on even shorter, and thus more in-distribution, subsequences in $\{\cH_i\}_{i=1}^{I} \times \{\cG_j\}_{j=1}^{J}$. Specifically, the composed score is given by:
\begin{equation}
    \scalebox{1.0}{$
    \bigoplus_{j=1}^{J} \sum_{i=1}^{I} \score p_k(\rvx_{\cG_j}^k | \rvx_{\cH_i})$}
\end{equation}
where $\bigoplus$ denotes a frame-wise averaging operation. We refer to this method as \emph{Extended Temporal History Guidance}, as it extends the concept of temporal history guidance by composing both history and generation subsequences. Empirically, we find this method to be more effective than the original temporal history guidance when the video sequence is clearly OOD (e.g., RealEstate10K OOD history experiment), and thus requires shorter subsequences to be in-distribution.

\setfancyheader{\textbf{Ultra Long Video Generation on RealEstate10K, \emph{with \HGv and \HGf}}}
\section*{Supplementary Visuals}

Before delving into further details, we list extensive figures~(\cref{fig:navigation,fig:navigation_comparison,fig:ood_history_qualitative_full,fig:flexibility,fig:vanilla_re10k,fig:minecraft_vis,fig:comparison_qualitative_additional}) that supplement the main paper's content. Detailed descriptions for these figures can be found in \cref{app:exp_results}.

\begin{figure}[h]
    \centering
    \vskip 0.1in
    \begin{subfigure}[h]{\textwidth}
        \includegraphics[width=\textwidth]{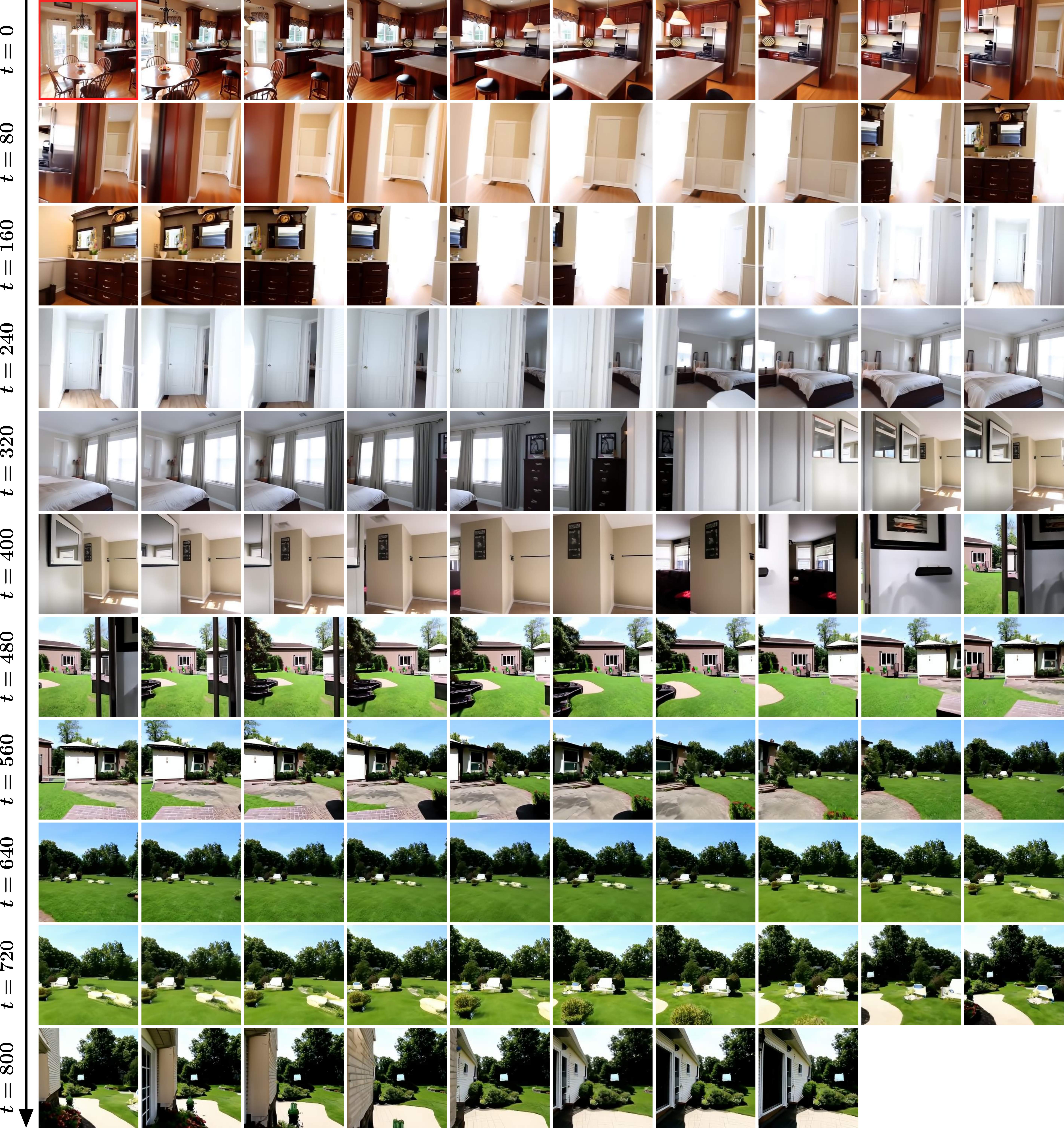}
        \vskip +0.05in
        \caption{
            \textbf{Long navigation video generated by \mtd with \HG.} \# frames = 862.
        }
        \label{fig:navigation_1}
    \end{subfigure}
    \vskip -0.5in
\end{figure}
\begin{figure}[t]\ContinuedFloat
    \begin{subfigure}[t]{\textwidth}
        \includegraphics[width=\textwidth]{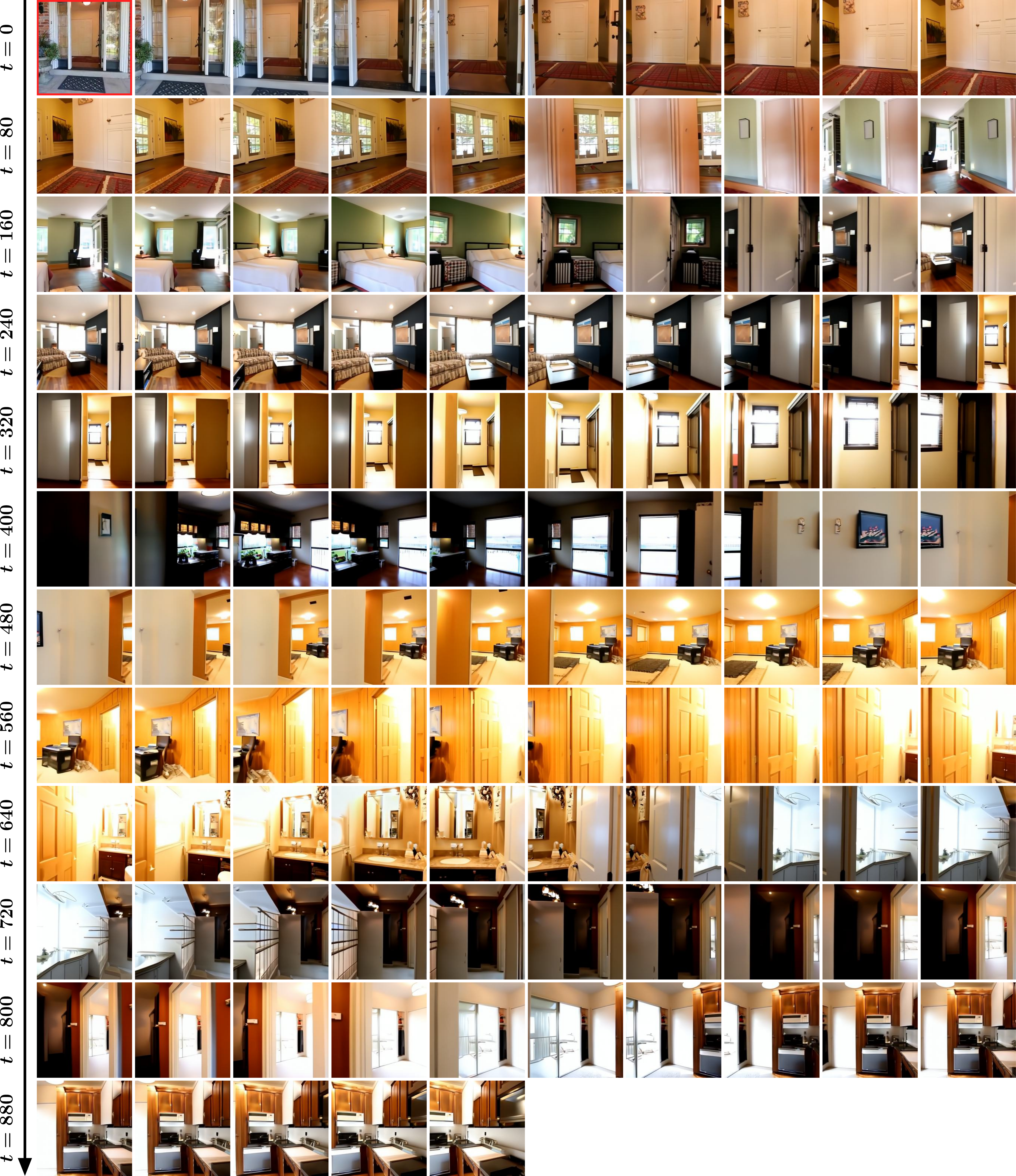}
        \vskip +0.05in
        \caption{
            \textbf{Long navigation video generated by \mtd with \HG.} \# frames = 917.
        }
        \label{fig:navigation_2}
    \end{subfigure}
\end{figure}
\begin{figure}[t]\ContinuedFloat
    \begin{subfigure}[t]{\textwidth}
        \includegraphics[width=\textwidth]{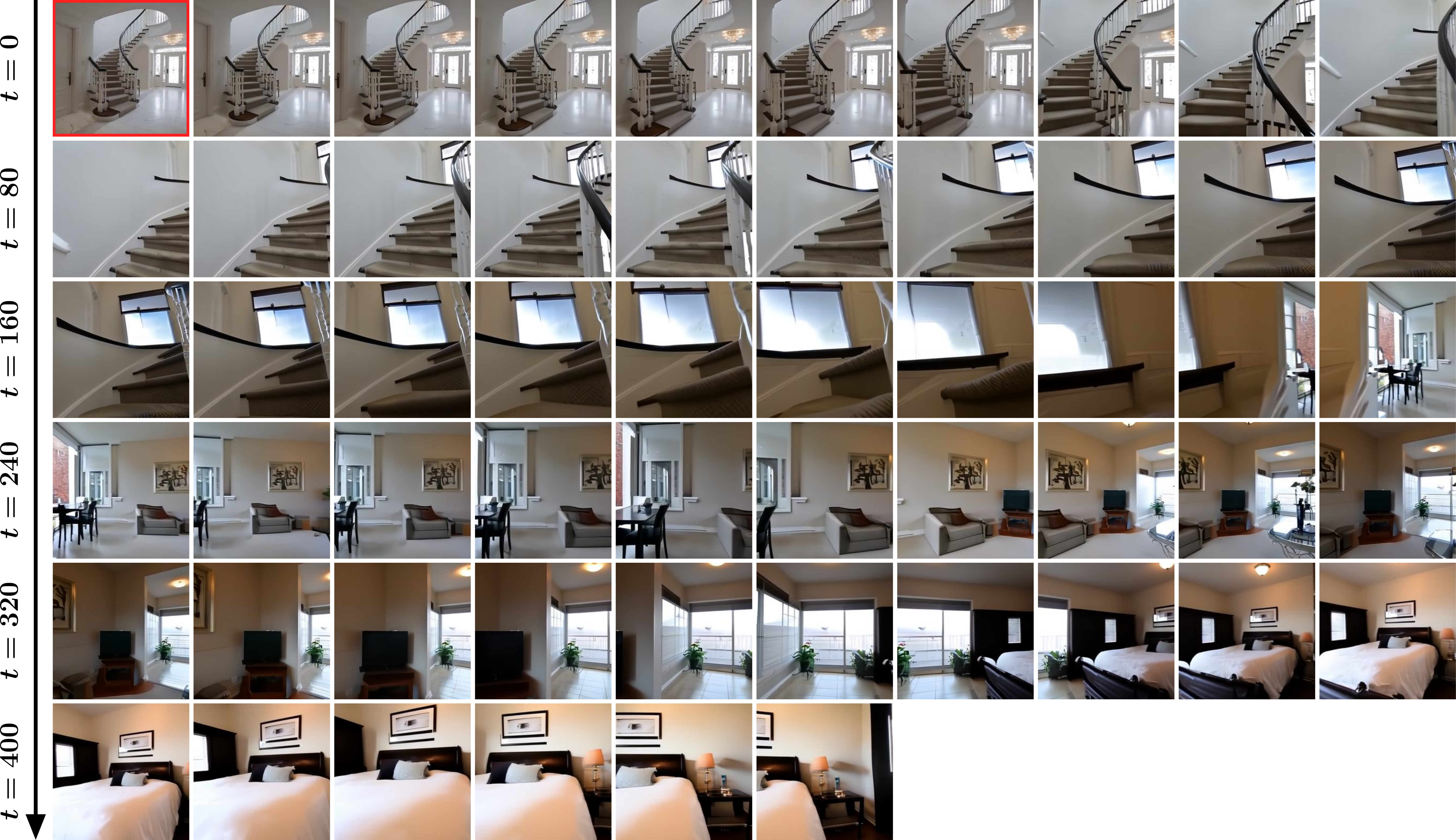}
        \vskip +0.05in
        \caption{
            \textbf{Long navigation video generated by \mtd with \HG.} \# frames = 442.
        }
        \label{fig:navigation_3}
    \end{subfigure}
    \vskip 0.1in
    \begin{subfigure}[t]{\textwidth}
        \includegraphics[width=\textwidth]{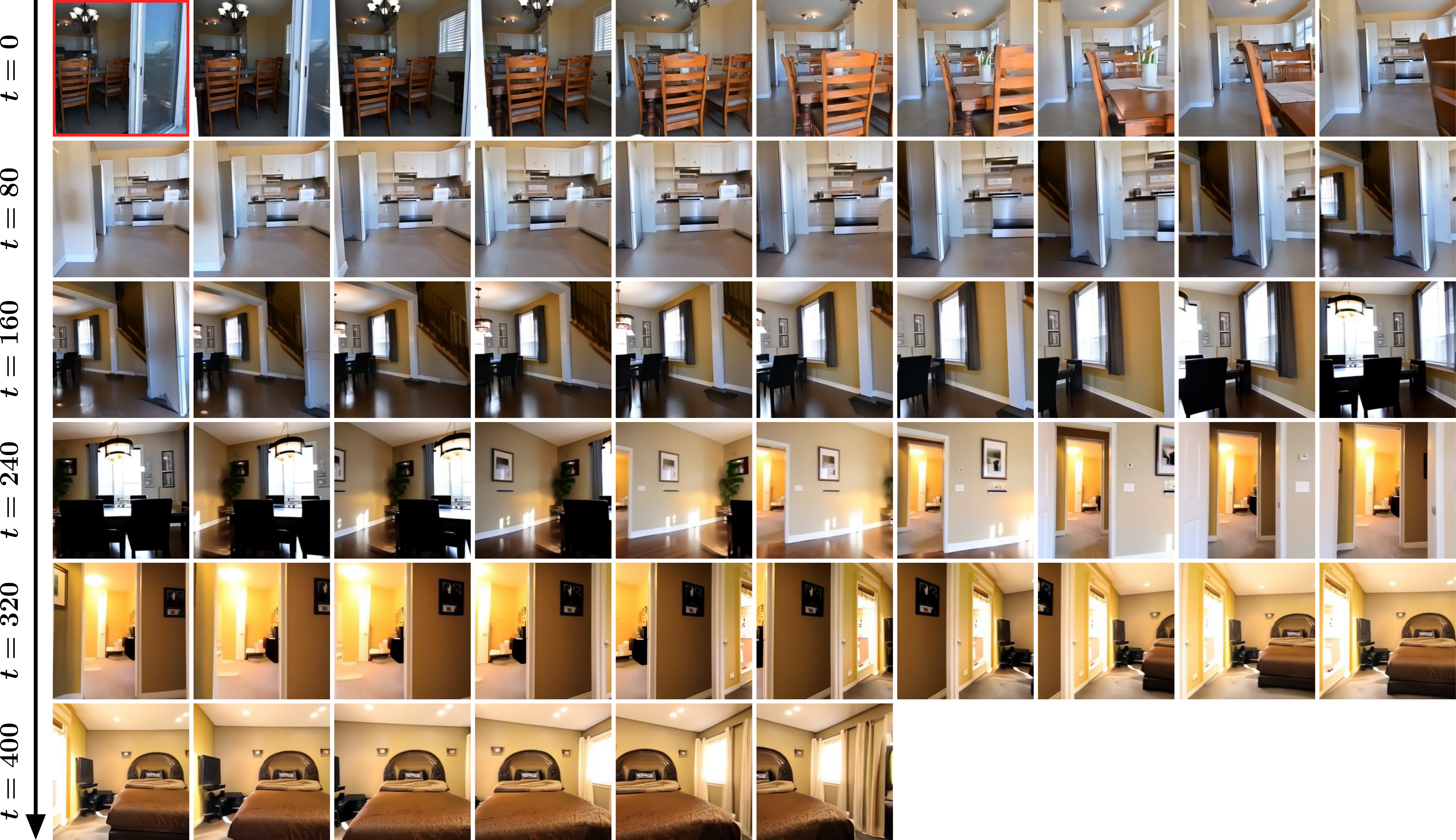}
        \vskip +0.05in
        \caption{
            \textbf{Long navigation video generated by \mtd with \HG.} \# frames = 442.
        }
        \label{fig:navigation_4}
    \end{subfigure}
    \vskip -0.07in
    \caption{
        \textbf{Long navigation videos generated by \mtd with \HGv and \HGf, from a \textcolor{xred}{\setlength{\fboxsep}{1pt}\setlength{\fboxrule}{1pt}\textcolor{xred}{\fbox{\textcolor{black}{single history frame}}}} on RealEstate10K.} We subsample with a stride of 8 frames for visualization. The videos exhibit consistent transitions navigating while through diverse indoor and outdoor scenes, maintaining high stability over hundreds of frames. This is enabled by the improved sample quality and consistency from \HG, along with \mtd's flexibility that allows both interpolation and extrapolation.
    }
    \label{fig:navigation}
\end{figure}

\setfancyheader{\textbf{\mtd vs. SD: Long Rollout Comparison on RealEstate10K, \emph{with \HGv and \HGf}}}
\begin{figure}[t]
    \centering
    \includegraphics[width=\textwidth]{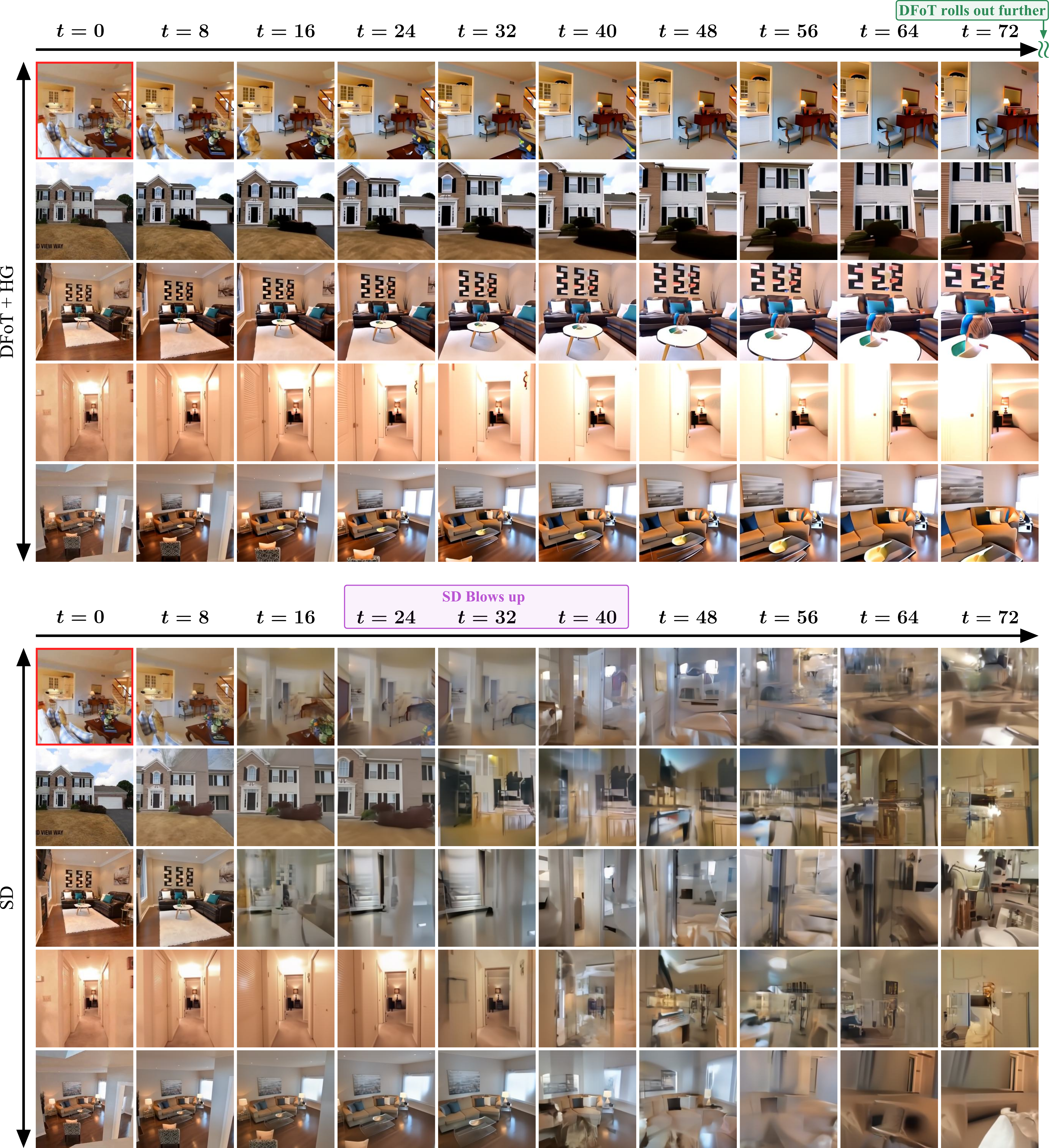}
    \vskip +0.05in
    \caption{
        \textbf{Qualitative comparison of \mtd with \HG vs. SD on long video generation.} Given a \textcolor{xred}{\setlength{\fboxsep}{1pt}\setlength{\fboxrule}{1pt}\textcolor{xred}{\fbox{\textcolor{black}{single history frame}}}}, we task both models to generate videos of moving straight ahead and visualize them with a stride of 8 frames. While SD quickly diverges after $t \approx 30$ frames, \mtd with \HG maintains high stability until $t = 72$ and can roll out further.
    }
    \label{fig:navigation_comparison}
\end{figure}
\setfancyheader{\textbf{Robustness to Out-of-Distribution History on RealEstate10K, \emph{with \HGt}}}
\begin{figure}[b]
    \centering
    \begin{subfigure}[t]{\textwidth}
        \centering
        \includegraphics[width=\textwidth]{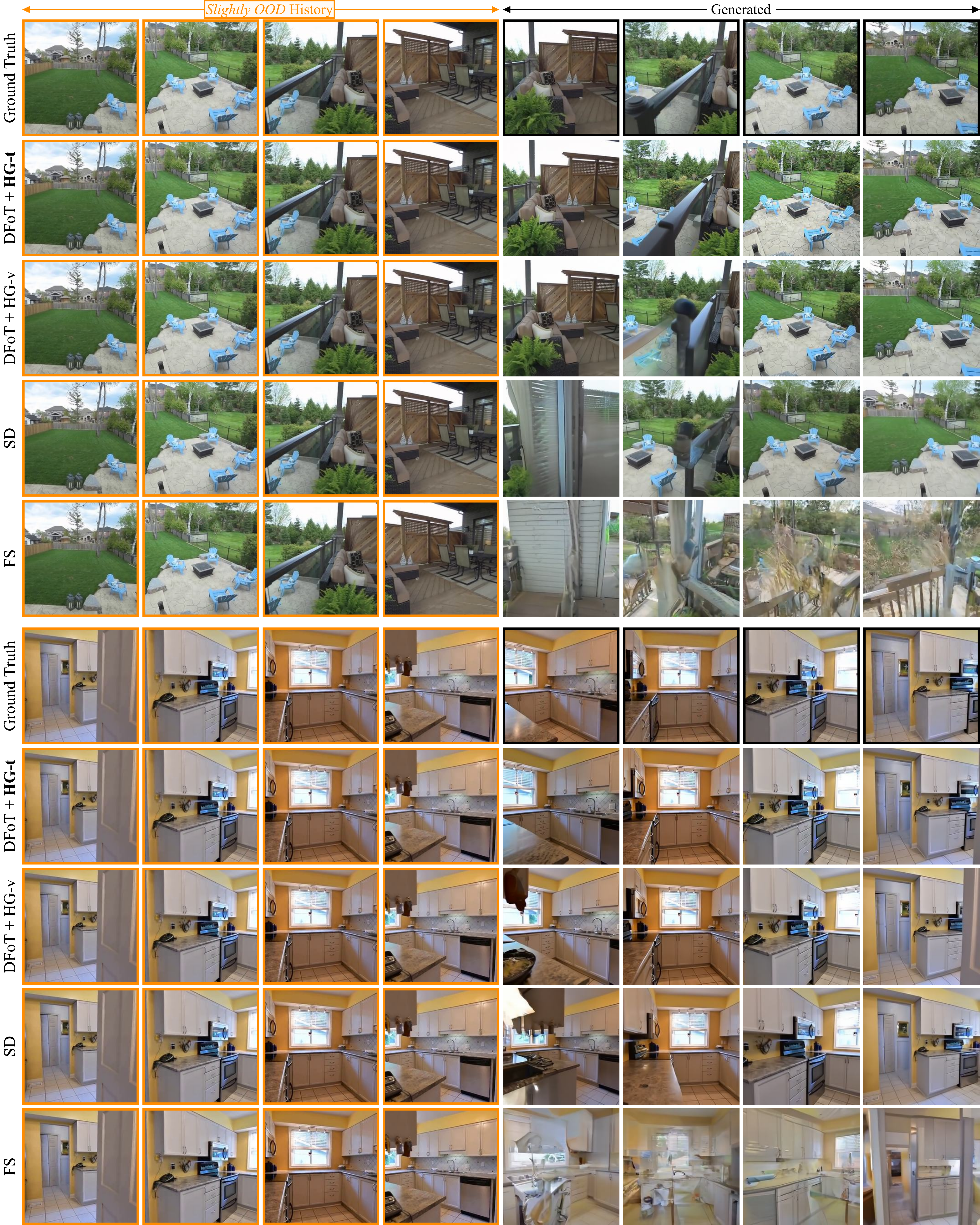}
        \caption{
            Given \slightlyoodregion history with rotation angles in $[120^\circ, 130^\circ]$, baselines and \mtd with \HGv generate inconsistent frames with artifacts. In contrast, \mtd with \HGt generates consistent videos that highly resemble the ground truth. This is the region where \HGt starts showing its generalization gap with other methods.
        }
        \label{fig:ood_history_qualitative_slightly_ood}
    \end{subfigure}
\end{figure}
\begin{figure}[t]\ContinuedFloat
    \begin{subfigure}[t]{\textwidth}
        \centering
        \includegraphics[width=\textwidth]{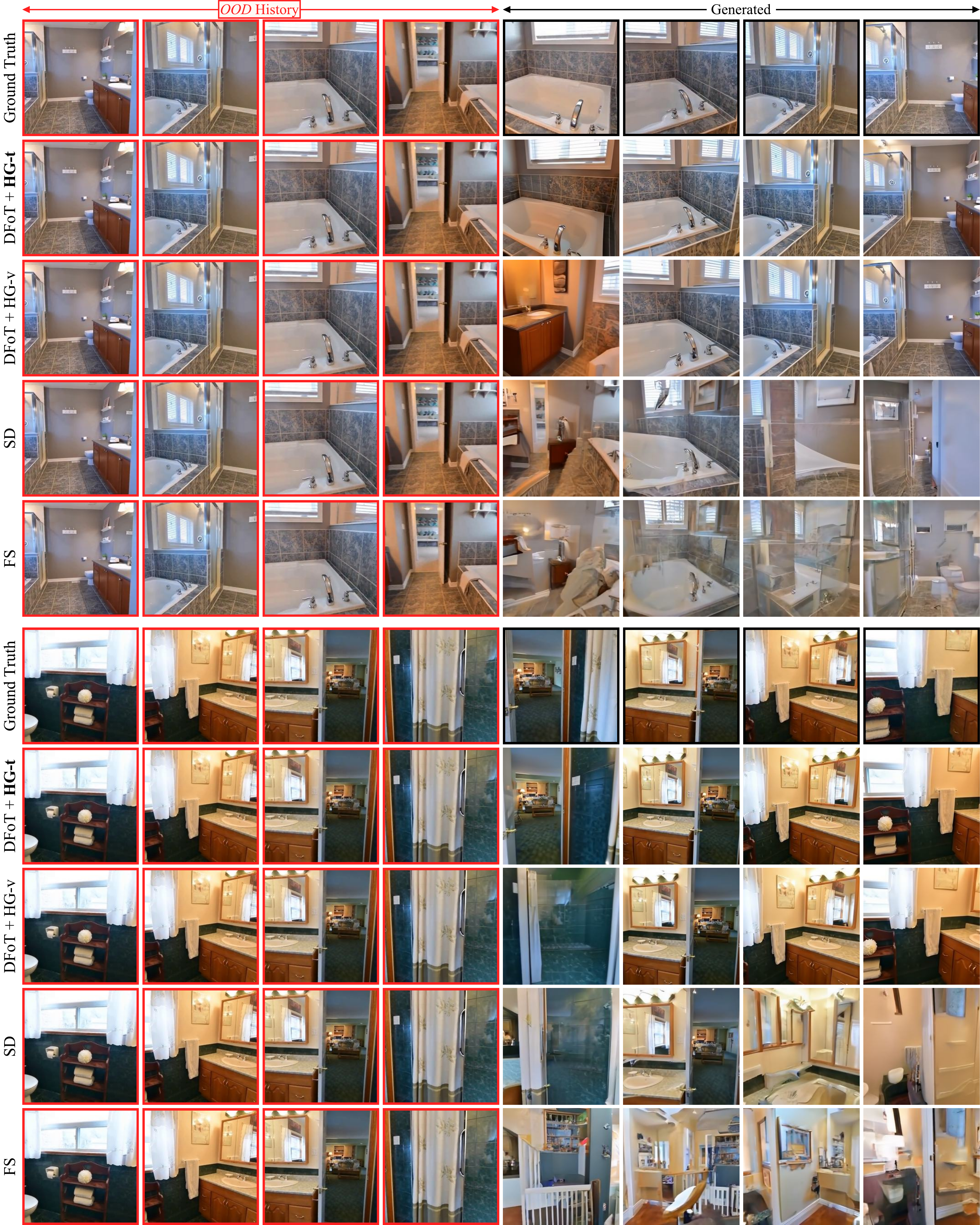}
        \caption{
            Given \emph{\oodregion} history, all baselines completely fail yet \mtd with \HGt still manages to generate high-quality, accurate videos.
        }
        \label{fig:ood_history_qualitative_ood}
    \end{subfigure}
    \vskip -0.1in
    \caption{
        \textbf{Qualitative results of testing robustness to out-of-distribution history on RealEstate10K.} We provide wide-angle, 4-frame history and task the models to generate the next 4 frames that interpolate between the history frames. As the angle increases, the history becomes more out-of-distribution, and thus we split the results into \slightlyoodregion and \oodregion depending on the angle range. 
    }
    \label{fig:ood_history_qualitative_full}
\end{figure}
\setfancyheader{\mtd's Flexible Sampling on RealEstate10K, \emph{without HG}}
\begin{figure}[t]
    \centering
    \includegraphics[width=\textwidth]{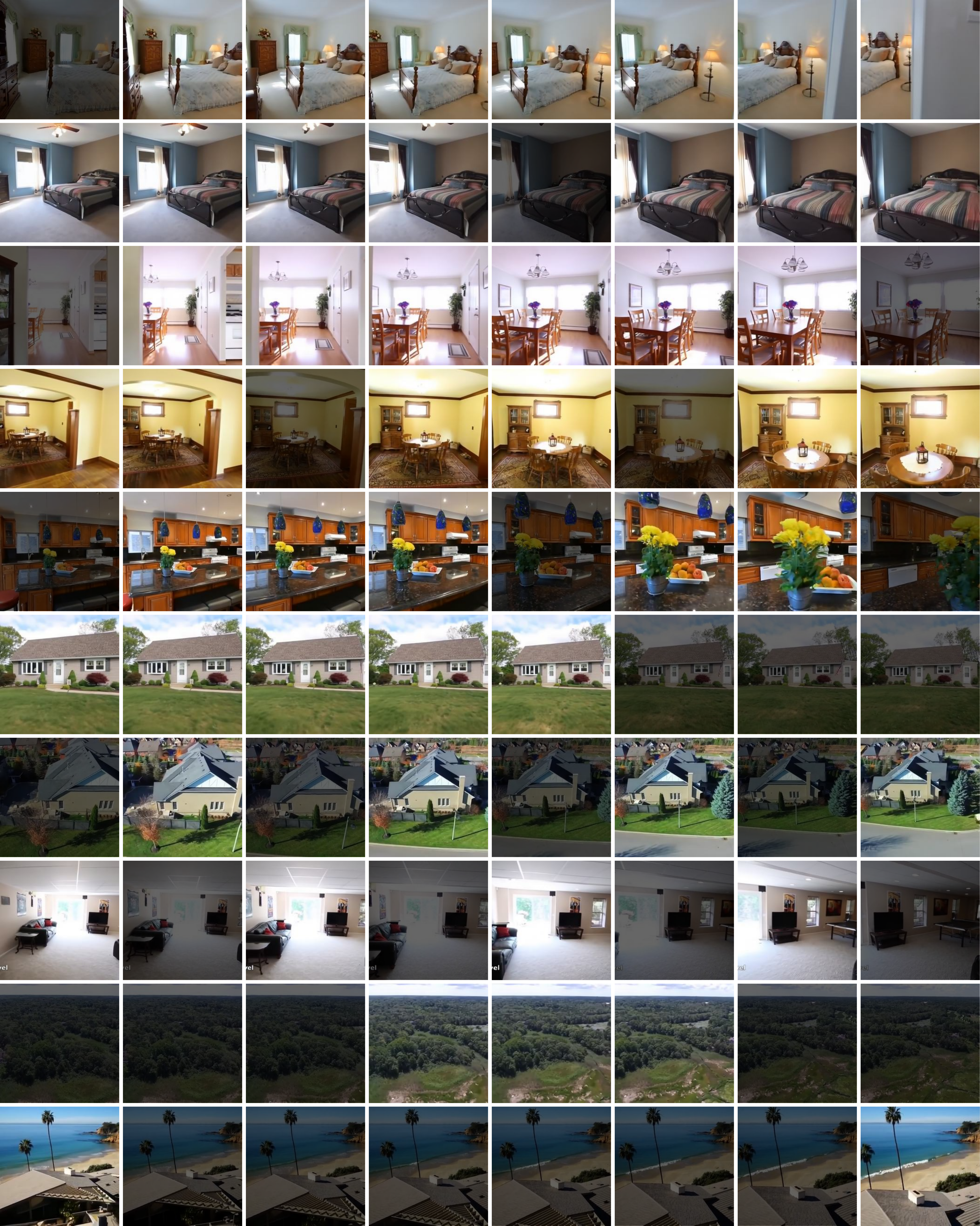}
    \vskip -0.05in
    \caption{
        \textbf{An illustration of the empirical flexibility of \mtd}, showing ten samples from RealEstate10K, where a \emph{single \mtd model} infills the missing frames given different \textcolor{white}{\sethlcolor{black!60}\hl{history}}. \mtd successfully generates consistent samples across ten diverse tasks, each varying in the history length from 1 to 6 frames and at different timestamps.
    }
    \label{fig:flexibility}
\end{figure}

\setfancyheader{\textbf{Improved Video Generation on RealEstate10K, \emph{with \HGv}}}
\definecolor{xcyan}{HTML}{03FFFF}
\begin{figure}[t]
    \centering
    \includegraphics[width=\textwidth]{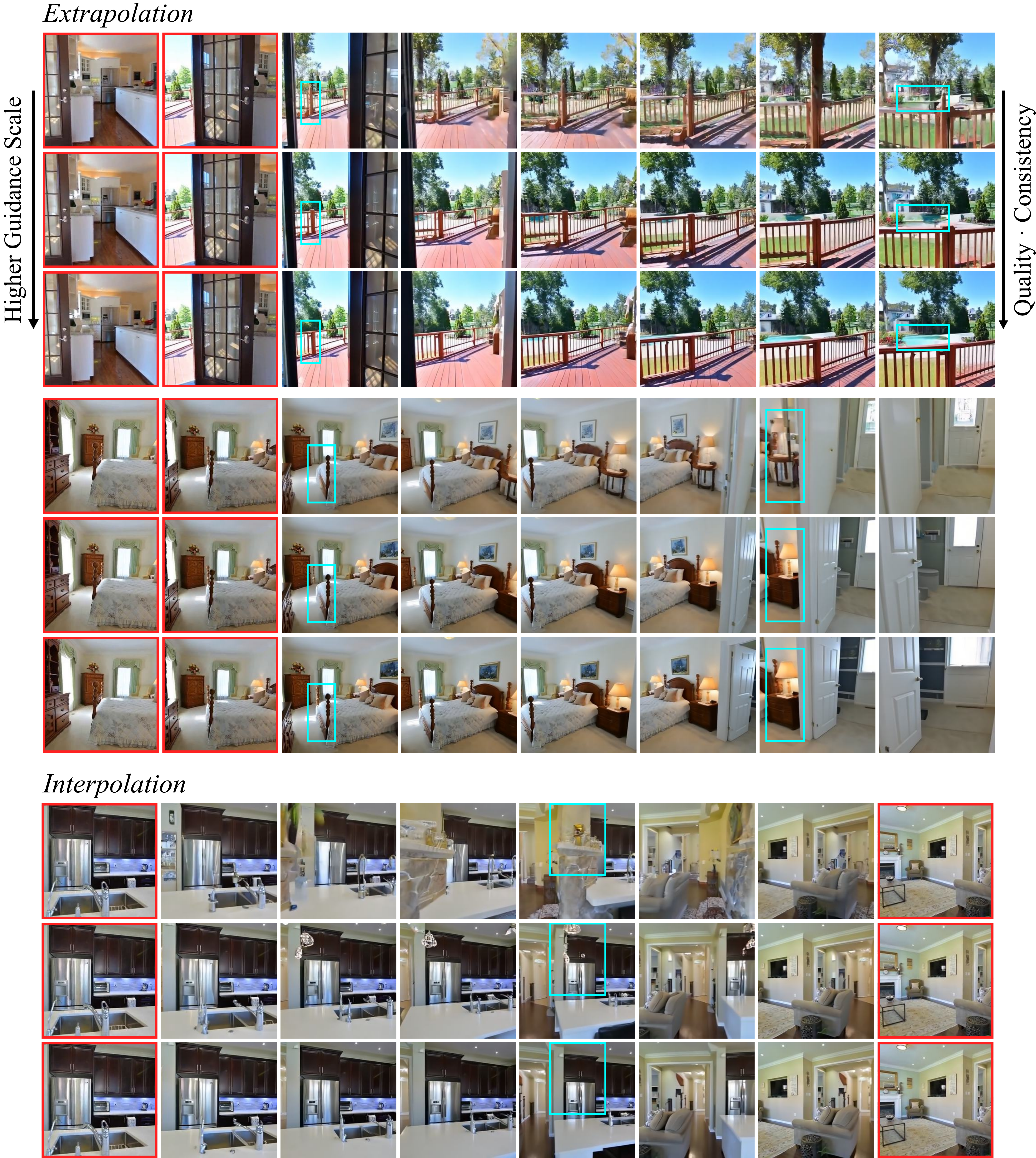}
    \vskip -0.05in
    \caption{
        \textbf{Improved video generation quality with vanilla history guidance on RealEstate10K}, for both \emph{extrapolation} and \emph{interpolation} tasks. \HGv, with an increasing guidance scale, enhances fidelity and consistency while effectively removing artifacts. Videos are sampled conditioned on \textcolor{xred}{\setlength{\fboxsep}{1pt}\setlength{\fboxrule}{1pt}\textcolor{xred}{\fbox{\textcolor{black}{two history frames}}}}, with varying guidance scales $\omega = 1$ (\ul{top, \emph{without \HGv}}), $2$ (middle), and $3$ (bottom). \emph{Zoom into the \textcolor{xcyan}{\setlength{\fboxsep}{1pt}\setlength{\fboxrule}{1pt}\textcolor{xcyan}{\fbox{\textcolor{black}{boxed regions}}}} to see notable differences.}
    }
    \label{fig:vanilla_re10k}
\end{figure}

\setfancyheader{\textbf{Long Context Generation on Minecraft, \emph{with \HGt}}}

\begin{figure}[t]
    \centering
    \includegraphics[width=\columnwidth]{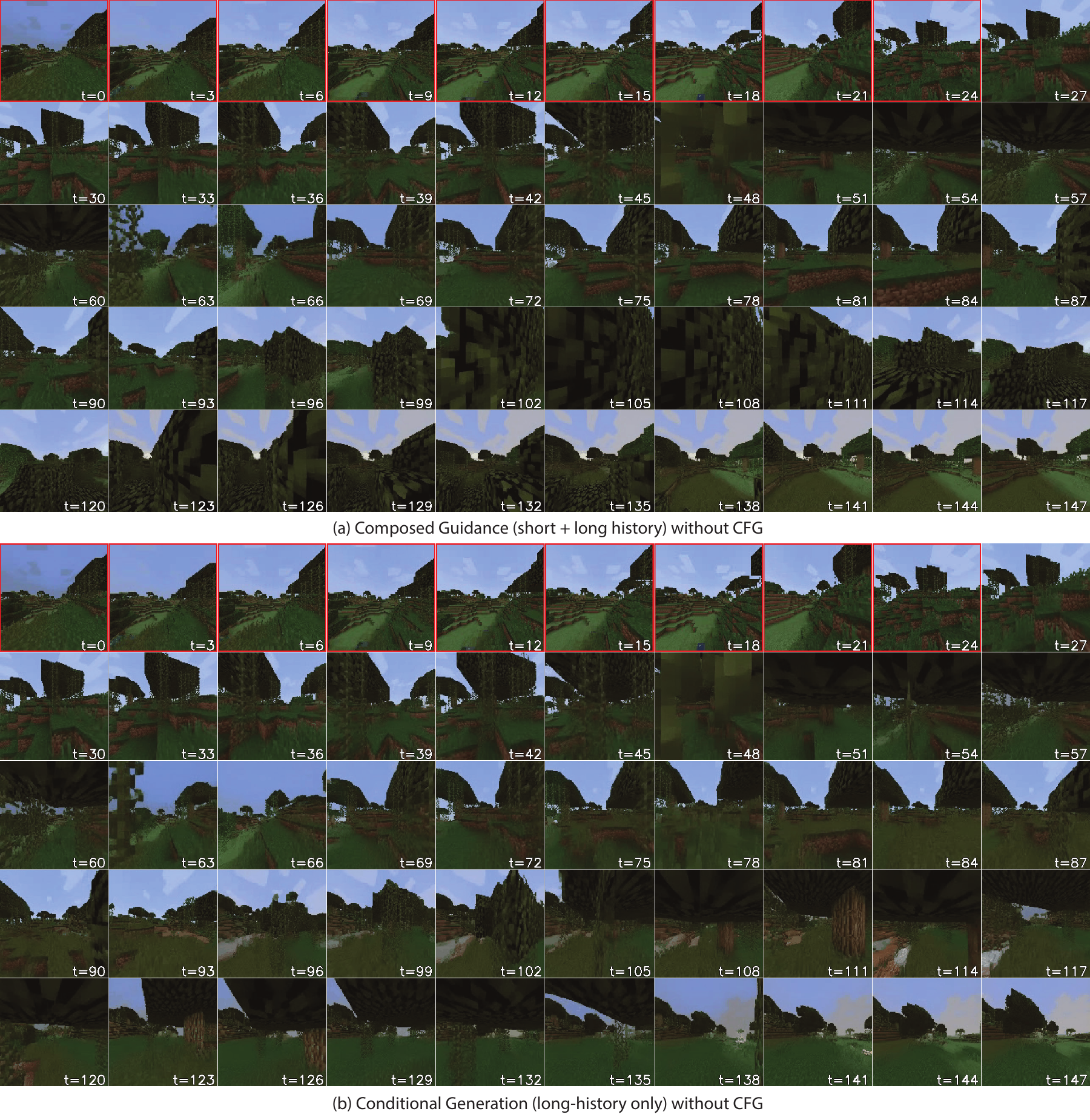}
    \vskip -0.1in
    \caption{
        \textbf{Visualization of long context generation on Minecraft.} We visualize the generation up to the maximum length of the training set. Given $25$ initial frames (red), \mtd with temporal history guidance (upper) can roll out stably without blowing up even without CFG. In contrast, one can clearly see that without temporal history guidance (lower), conditional generation easily becomes blurry in later frames. This is likely because the shorter-context model is less likely to fall out of distribution, using its generation power to compensate for the unconfident, blurry prediction from the longer-context model.
    }
    \label{fig:minecraft_vis}
    \vskip -0.2in
\end{figure}
\setfancyheader{\mtd vs. Baselines on Kinetics-600, \emph{without HG}}
\begin{figure}[t]
    \centering
    \includegraphics[width=\textwidth]{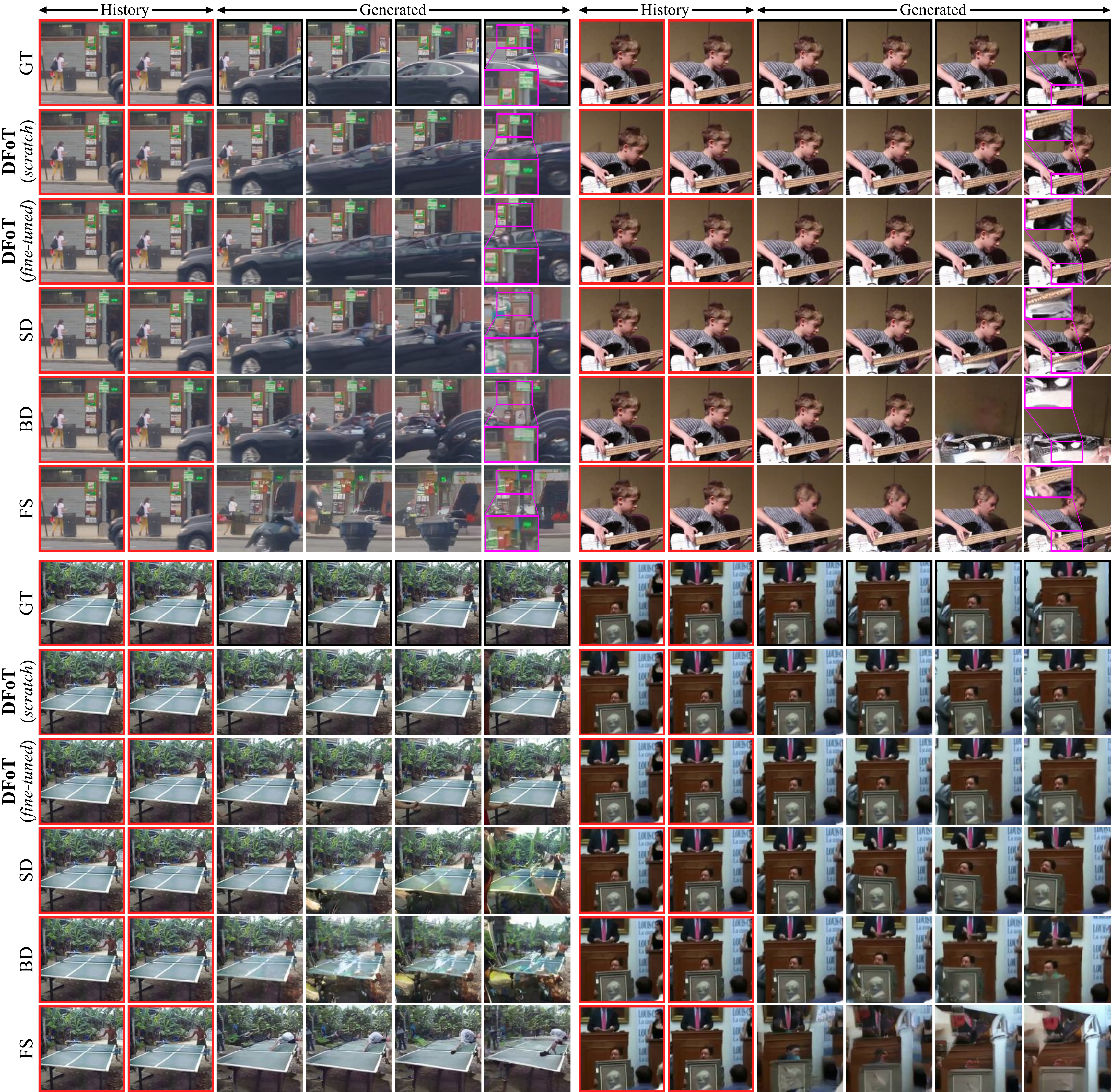}
    \caption{
        \textbf{Additional qualitative comparison on Kinetics-600}. We uniformly
        subsample 6 frames $\{0, 3, 6, 9, 12, 15\}$ from 16-frame videos, conditioned on 5-frame histories. Both \mtd variants, \emph{scratch} and \emph{fine-tuned}, consistently align with the history, generating high-quality samples that closely resemble the ground truth. In contrast, the baselines, typically ordered as SD $>$ BD $>$ FS, struggle to maintain consistency and often exhibit artifacts.
    }
    \label{fig:comparison_qualitative_additional}
\end{figure}
\setfancyheader{}
\fancyhead[C]{\small\bf History-Guided Video Diffusion}
\clearpage
\section{Extended Related Work}
\label{app:related_work}

\subsection{History-conditioned Guidance}

In this section, we discuss how CFG is employed for guiding with history in video diffusion models. The most common case is in \textbf{Image-to-Video Diffusion Models}~\cite{blattmann2023stable,xing2023dynamicrafter,yang2024cogvideox}, where the model uses the \emph{first frame} for guidance. Typically, the conditioning frame is incorporated into the architecture by concatenating it channel-wise with each frame to be generated, and additionally, the CLIP~\cite{radford2021learning} embedding of the conditioning frame is used for cross-attention.

Few \textbf{Conditional Video Diffusion Models} have pushed the boundary by guiding with \emph{fixed set of few frames}. Specifically, VideoLDM~\cite{blattmann2023align} uses the first $\{1, 2\}$ frames for guidance, W.A.L.T.~\cite{gupta2023photorealistic} guides with the first 2 latent tokens, i.e. \{5\} frames, and 4DiM~\cite{watson2024controlling} guides with the first $\{1, 2, 8\}$ frames. Similarly, in \textbf{Multi-view Diffusion Models}, which is similar to video diffusion models but do not differentiate frame order, CAT3D~\cite{gao2024cat3d} guides with the first $\{1, 3\}$ frames. 

Architecturally, these models incorporate history frames in various ways. VideoLDM concatenates a binary mask, indicating whether each history frame is masked, along with all masked history frames, feeding them to every temporal layer using a learnable downsampling encoder. W.A.L.T. simplifies this by directly concatenating the history frames and binary mask to the noisy generation input, omitting the encoder. 4DiM and CAT3D process the entire sequence—both history and generation frames—as a single sequence, with a binary mask concatenated along the channel dimension to indicate whether each frame is masked.

In summary, guiding with history in video models has been explored to a limited extent. While these models differ in how they incorporate history frames into the architecture, they all process history frames separately from generated frames, except for 4DiM and CAT3D, leading to inflexibility of guidance. Additionally, these models are trained using CFG-style random dropout of history frames, which categorizes them as special cases of \emph{Binary-Dropout Diffusion}, shown to be suboptimal. These limitations are highlighted in \cref{sec:history_guidance_challenges}. In contrast, our work enables guiding with arbitrary, variable-length history frames without the need for binary-dropout training, facilitated by our modified training objective and architecture design.
\section{Experimental Details}
\label{app:exp_details}

Below, we provide additional details on datasets, architectures, training, evaluation metrics, and protocols for our experiments.

\subsection{Datasets}
\label{app:exp_details_datasets}

\textbf{Kinetics-600~\cite{kay2017kinetics}} is a widely used benchmark dataset for video generation, featuring 600 classes of approximately 400K action videos. In addition to its role as a standard benchmark, the task is history-conditioned video generation, making it ideal for evaluating our methods. Following prior works, we use a resolution of $128 \times 128$ pixels. Despite the large volume of videos and their low resolution, generating high-quality samples from the Kinetics-600 dataset is challenging even with large models due to the diversity and complexity of the content, and thus qualifies as our primary benchmark.

\textbf{RealEstate10K~\cite{zhou2018stereo}} is a dataset of home walkthrough videos, accompanied by camera pose annotations. While the dataset is predominantly used in novel view synthesis tasks, we utilize it for several reasons: 1) The camera poses allow for a more controlled evaluation of video models; for instance, we can easily switch between highly stochastic and deterministic tasks by altering the camera poses, 2) The dataset's nature enables the examination of the consistency of generated videos at a 3D level, and 3) The dataset's relatively smaller size compared to other text-conditioned video datasets makes it more computationally feasible to train our models, while still providing high-resolution videos. We use a resolution of $256 \times 256$ pixels.

\textbf{Minecraft~\cite{yan2023temporally}} is a dataset of Minecraft gameplay videos, where the player randomly navigates using 3 actions: forward, left, and right. The dataset consists of 200K videos, each with a length of 300 frames, each frame has a corresponding action label. The dataset is designed in a way that good FVD can only be achieved with a long context under action conditioned setting. Specifically, the dataset contains many trajectories where the player turns around and visits areas that it had visited before.  While the original dataset is $128 \times 128$ pixels, we train and evaluate on an upsampled version of $256 \times 256$ pixels, to generate higher-quality samples. 

\textbf{Fruit Swapping} is an imitation learning dataset associated with a fruit rearrangement task adopted from Diffusion Forcing~\cite{chen2024diffusion}. The task involves a tabletop setup where an apple and an orange are randomly put in two of the three empty clots. A single-arm robot is tasked with swapping the two fruits' slots using the third, empty slot as shown in Figure~\ref{fig:robot_generated}. The task requires long-horizon memory since one must remember the initial configuration of the slots to determine the final, target configuration. While the three slots provide a discrete state, each slot has a diameter of 15 centimeters and the fruit can be anywhere in the slot as soon as half of its column resides inside the slot. The task is made even harder when an adversarial human deliberately perturbs the fruit within its slot during the task execution - if there are $10$ possible locations within each slot, there would already be $10^3$ combinations of waypoints. This requires a robot policy to be reactive to the fruit locations rather than memorizing all possible combinations. The dataset contains $300$ expert demonstrations of the entire swapping task collected by a model-based planner, during which no disturbance happens. The robot may move an apple from slot 1 to the center of slot 2, move the orange from slot 3 to the center of slot 1, and then move the apple from slot 2 to slot 3. Notably, it had never seen a situation where the apple changed its location from center to edge during the middle of the manipulation due to adversarial humans. In addition, the dataset features $300$ additional demonstrations of re-grasping, which is a very short recovery behavior when it narrowly misses the fruit. In these re-grasping demonstrations, the robot arm only repositions to grab the missed object without moving it to another slot. Therefore, the dataset contains $300$ demonstrations that involve moving fruits but no regrasping, and $300$ demonstrations of regrasping but no moving fruit. The former has an average length of 540 frames and the later has an average length of around 50 frames.

\subsection{Implementation Details}

\newcommand{\none}{\ding{55}}
\begin{table}[t]
    \caption{
        \textbf{Implementation details for \mtd and baseline models.}
    }
    \label{tab:training_details}
    \vskip 0.1in
    \centering
    \begin{adjustbox}{max width=\linewidth}
    \begin{tabular}{l c c c c}
    \toprule
    & Kinetics-600 & RealEstate10K & Minecraft & Imitation Learning \\
    \midrule
    \emph{VAEs} \\
    Input & $\{1, 4\} \times 128 \times 128$ & \multirow{8}{*}{-} & $1 \times$ 256 $\times 256$ & \multirow{8}{*}{-} \\
    \quad Compression ($f_t, f_s$) & \{1, 4\}, 8 & & 1, 8 \\
    \quad Latent channels & 16 & & 4 \\
    Training steps & 600k & & 50k \\
    \quad Optimizer & Adam & & Adam \\
    \quad Batch size & 64 & & 96 \\
    \quad Learning rate & 1e-4 & & 4e-4\\
    \quad EMA & 0.999 & & \none \\
    \midrule
    \emph{VDMs} \\
    Input & $17 \times 128 \times 128$ & $8 \times$ 256 $\times$ 256 & $50 \times 256 \times 256$ & $21\times 32 \times 32$ \\
    \quad Latent & $5 \times 16 \times 16$ & \none & $50 \times 32 \times 32$ & \none \\
    \quad Frame skip & 1 & 10 $\rightarrow$ Max & 2 & 15 \\
    Backbone & DiT & U-ViT & DiT & Attention UNet\\
    \quad Patch size & 1 & 2 & 2 & 1 \\
    \quad Layer types & Transformer & $\left[\text{ResNet}\times2, \text{Transformer}\times2\right]$ & Transformer & Attention, Conv \\
    \quad Layers & 28 & $\left[3, 3, 6, 20\right]$ & 12 & 8 \\
    \quad Hidden size & 1152 & $\left[128, 256, 576, 1152\right]$ & 768 & 128 \\
    \quad Heads & 16 & 9 & 12 & 4 \\
    Training steps & 640k & 500k & 200k & 100k \\
    \quad Warmup steps & 10k & 10k & 10k & 10k \\
    \quad Optimizer & AdamW & AdamW & AdamW & AdamW \\
    \quad Batch size & 192 & 96 & 96 & 64\\
    \quad Learning rate & 2e-4 & 5e-5 & 1e-4 & 5e-4 \\
    \quad Weight decay & 0 & 1e-2 & 1e-3 & 1e-3 \\
    \quad EMA & 0.9999 & 0.9999 & 0.9999 & \none \\
    Diffusion type & Discrete & Continuous & Discrete & Discrete \\
    \quad Noise schedule & Cosine & Shifted Cosine & Shifted Cosine & Cosine \\
    \quad Noise schedule shift & \none & 0.125 & 0.125 & \none \\
    \quad Parameterization & $\rvv$ & $\rvv$ & $\rvv$ & $\rvx_0$ \\
    \quad Sampler & DDIM & DDIM & DDIM & DDIM \\
    \quad Sampling steps & 50 & 50 & 50 & 50 \\
    \bottomrule
    \end{tabular}
    \end{adjustbox}
\end{table}

We provide a summary of our implementation details in \cref{tab:training_details} and discuss them below.

\textbf{Pixel vs. Latent Diffusion.} In this work, we validate \mtd and \HG using both pixel and latent diffusion models. For Kinetics-600 and Minecraft, we train a latent diffusion model to enhance computational efficiency. Specifically, for Minecraft, we train an ImageVAE \cite{kingma2013auto} from scratch, which compresses $256 \times 256$ images into $32 \times 32$ latents, following the approach of Stable Diffusion \cite{rombach2022high}. For Kinetics-600, we train a chunk-wise VideoVAE that compresses $\{1, 4\} \times 128 \times 128$ video chunks into $16 \times 16$ latents, to more aggressively reduce computational costs. This approach resembles CausalVideoVAE, commonly used in prior works \cite{yu2023language,gupta2023photorealistic}, which compresses an entire $17 \times 128 \times 128$ video into $5 \times 16 \times 16$ latents via causal convolutions. However, we choose to compress every 4 frames separately to preserve \mtd's flexibility. Moreover, this ensures that consistency is influenced solely by the performance of the diffusion model, not the VAE. We implement the VideoVAE and training procedure following Open-Sora-Plan~\cite{lin2024open}. Lastly, for RealEstate10K, we train directly in pixel space, based on the observation that latent diffusion models struggle to correctly follow camera pose conditioning, leading to poor performance on this dataset. Architectures and training details differ significantly between pixel and latent diffusion models, as we discuss in the following sections.

\textbf{Architecture.} We employ the DiT~\cite{peebles2023scalable} and U-ViT~\cite{hoogeboom2023simple, hoogeboom2024simpler} backbones for the latent and pixel diffusion models, respectively. Both are transformer-based architectures; however, the key difference is that DiT's transformer blocks operate at a single resolution, whereas U-ViT incorporates multiple resolutions, with transformer blocks residing at each resolution. Due to this difference, we observe that the U-ViT backbone scales better in the pixel space. For improved scalability and temporal consistency, instead of using factorized attention \cite{ho2022video}, where attention is applied separately to spatial and temporal dimensions, we employ 3D attention that operates on all tokens simultaneously. In addition to this, we incorporate 3D RoPE~\cite{su2023roformer,gervet2023act3d} as relative positional encodings for the $T, H, W$ dimensions.

All conditioning inputs, including noise levels, actions, and camera poses, are injected into the model using an AdaLN layer, following \cite{peebles2023scalable}. For noise levels, since each frame retains independent noise levels in \mtd, an AdaLN layer is applied separately to each token, using the noise level of the corresponding frame. Minecraft actions are converted into one-hot vectors, which are then transformed into embeddings through an MLP layer and added to the noise level embeddings. For camera pose conditioning in RealEstate10K, we compute the relative camera pose with respect to the first frame. Following the methodologies of 3DiM \cite{watson2022novel} and 4DiM \cite{watson2024controlling}, this relative pose is then converted into ray origins and directions, which are then transformed into 180-dimensional positional embeddings, similar to Nerf~\cite{mildenhall2021nerf}. Across the resolutions of U-ViT, the camera pose embeddings are spatially downsampled to match the resolution before being injected into the model.

\textbf{Diffusion.} We use a cosine noise schedule~\cite{nichol2021improved} for all of our diffusion models. For the RealEstate10K and Minecraft models, we shift the noise schedule to be significantly noisier~\cite{hoogeboom2023simple} by a factor of 0.125, which we find markedly enhances sample quality, especially for RealEstate10K. This finding aligns with prior works~\cite{chen2023importance, hoogeboom2023simple} that highlight the importance of adding sufficient noise during training, especially when dealing with highly redundant images, such as those with high resolution. Another important design choice is the parameterization of diffusion models. We employ the $\rvv$-parameterization~\cite{vparameterization} for all models, which has been widely adopted in image and video diffusion models~\cite{ho2022imagen,lin2024common} due to its superior sample quality and quicker convergence, except for the robot model, where we use the $\rvx_0$-parameterization. Lastly, to expedite training, we use min-SNR loss reweighting~\cite{min_snr} for Kinetics and robot learning, and sigmoid loss reweighting~\cite{kingma2023understanding,hoogeboom2024simpler} for RealEstate10K and Minecraft.

\textbf{Training.} We train models \emph{for each dataset} and \emph{for each model class (e.g., \mtd, SD, etc.)}, using the same pipeline within each dataset. We apply a \emph{frame skip}, where training video clips are subsampled by a specific stride: a value of 1 for Kinetics-600, 2 for Minecraft, and 1 for Imitation Learning. For RealEstate10K, we use an increasing frame skip, starting from 10 and extending to the maximum frame skip possible within each video, to help the model learn various camera poses. Throughout all training, We employ the AdamW~\cite{loshchilov2017decoupled} optimizer, with linear warmup and a constant learning rate. Additionally, we utilize fp16 precision for computational efficiency and clip gradients to a maximum norm of 1.0 to stabilize training. For robot imitation learning, we follow the setup in Diffusion Forcing~\cite{chen2024diffusion} where we concatenate actions and the next observation together for diffusion, with the exception that we stack the next 15 actions together for every video frame.

\textbf{Sampling.} For all experiments, we use the deterministic DDIM~\cite{ddim} sampler with 50 steps. Sampling with history guidance, which requires multiple scores at every sampling step, is implemented by stacking the corresponding inputs across the batch dimension to compute the scores in parallel. These scores are then composed to obtain the final score for the DDIM update.

\textbf{Compute Resources.} We utilize 12 H100 GPUs for training all of our video diffusion models, with each model requiring approximately 5 days to train under our chosen batch size. One exception is the Robot model, which is trained on 4 RTX4090 GPUs for 4 hours. We note that most of the video models converge in validation metrics with a fraction of our reported total training steps. However, we chose to train them longer because the industry baselines on these datasets~\cite{yu2023magvit,ruhe2024rolling} are trained for a great number of epochs that are even unmatched by our final training steps. There was no noticeable overfitting throughout the process.

\subsection{Evaluation Metrics.}

\textbf{Fr\'echet Video Distance (FVD, \citet{unterthiner2018towards}).} We employ FVD as the primary evaluation metric for video generation performance. Similar to FID~\cite{heusel2017gans}, FVD computes the Fr\'echet distance between the feature distributions of generated and real videos, with features extracted from a pre-trained I3D network~\cite{carreira2017quo}. Lower FVD scores indicate better video generation performance. Unlike image-wise metrics such as FID, FVD evaluates entire video sequences, capturing temporal consistency and dynamics in addition to quality and diversity, making it the most suitable metric for our video generation tasks. Moreover, FVD is computed for the entire video, including both history and generated frames, to assess the consistency between them.

\textbf{VBench~\cite{huang2024vbench}.} We use VBench, an evaluation suite designed to assess video generation models in a comprehensive manner, when separate evaluation for different aspects of video generation is needed. Among 16 sub-metrics, we focus on 5 metrics to assess three aspects: 1) \emph{Frame-wise Quality}, calculated as the average of \emph{Aesthetic Quality} and \emph{Imaging Quality}, assesses the visual quality of individual frames; 2) \emph{(Temporal) Consistency}, derived as the average of \emph{Subject Consistency} and \emph{Background Consistency}, evaluates the short- and long-term consistency of generated videos; and 3) \emph{Dynamic Degree} assesses the degree of dynamics, i.e., the amount of motion in the generated videos. All metrics are better when higher, evaluate the generated videos independently without comparison to the ground truth, and are computed by averaging over all generated videos.

\textbf{Learned Perceptual Image Patch Similarity (LPIPS, \citet{zhang2018unreasonable}).} We use LPIPS as an alternative metric for highly deterministic tasks, where video-wise metrics may not be as sensitive and accurate. LPIPS computes the perceptual similarity between the generated and corresponding ground truth frames, with lower scores indicating higher similarity. We compute LPIPS only for the generated frames, excluding the history frames, to evaluate whether the generated frames are visually similar to the ground truth frames.

\subsection{Details on Video Generation Benchmark (\cref{sec:exp_ablation})}
\label{app:exp_details_benchmarks}

\textbf{Kinetics-600 Benchmark.} We closely follow the experimental setup of prior works~\cite{ho2022video, yu2023magvit, yu2023language, ruhe2024rolling}. On the test split of the dataset, we evaluate the models on a video prediction task, where the model is conditioned on the first 5 history frames and asked to predict the next 11 frames. Since our models, utilizing VideoVAE, generate 3 future tokens corresponding to 12 frames, we drop the last frame to align with the prediction task. We report the FVD score computed on 50K generated 16-frame videos, using three different random seeds.

\textbf{Resource Comparison Against Industry-Level Literature Baselines.} In \cref{tab:comparison_quantitative}, we show that \mtd not only outperforms generic diffusion baselines trained with the same pipeline but also holds its ground against strong literature baselines, including Video Diffusion~\cite{ho2022video}, MAGVIT~\cite{yu2023magvit}, MAGVIT-v2~\cite{yu2023language}, W.A.L.T~\cite{gupta2023photorealistic}, and Rolling Diffusion~\cite{ruhe2024rolling}. We have selected only the highest-performing baselines from the literature for comparison, omitting others for brevity.

A critical aspect of our evaluation is the comparison of computational resources. Our \mtd is trained with fewer resources compared to these industry-level baselines. Specifically, two primary factors affect the performance of diffusion models: network complexity and training batch size. Our \mtd model is a 673M parameter model with a DiT backbone, trained with a batch size of 196.

\emph{(i) Network Complexity.} As Video Diffusion and Rolling Diffusion have different backbones from ours, we compare the number of parameters; they are billion-parameter models, each with 1.1B and 1.2B, significantly larger than our model.  For MAGVIT, MAGVIT-v2, and W.A.L.T, which are pure transformer models with similar backbones, we use Gflops as a measure of computational complexity, as suggested by \cite{peebles2023scalable}. Our model is of DiT/XL size, whereas the baselines are DiT/L size, making them slightly smaller. In terms of Gflops, our model has $\approx 1.5$ times more Gflops compared to these baselines.

\emph{(ii) Batch Size.} Video Diffusion, MAGVIT, and MAGVIT-v2 are trained with a batch size of 256, while W.A.L.T and Rolling Diffusion are trained with a batch size of 512, which is significantly larger than ours.

When considering both network complexity and training batch size, MAGVIT and MAGVIT-v2 use comparable resources to our model, whereas Video Diffusion, W.A.L.T, and Rolling Diffusion require significantly more resources. Despite this resource disadvantage, \mtd proves to be highly competitive with these strong baselines. It is only slightly behind W.A.L.T, comparable to MAGVIT-v2, and outperforms the rest. This highlights the superior performance of \mtd as a base video diffusion model.

\subsection{Details on History Guidance Experiment (\cref{sec:exp_history_guidance})}
For the Kinetics-600 rollout experiment, the models generate the next 59 frames using sliding windows, given the first 5 history frames. The sliding windows are applied such that the model is always conditioned on the last 2 latent tokens and generates the next 3 latent tokens. As with the Kinetics-600 benchmark, we drop the last frame to align with the task. We assess the FVD and VBench scores on 1,024 generated 64-frame videos.

\textbf{\emph{History Guidance Scheme.}} To investigate the effect of \HGv and \HGf, we vary guidance scales using an equally spaced set of $\omega \in \{1.0, 1.5, 2.0, 2.5, 3.0, 3.5, 4.0\}$ for both methods. For \HGf, we use a fixed fractional masking degree of $k_\cH = 0.8$, which we find to generate videos with sufficient dynamics.

\subsection{Details on OOD History Experiment (\cref{sec:exp_temporal_guidance}, \textbf{Task 1})} In \textbf{Task 1} of \cref{sec:exp_temporal_guidance}, we have shown that video diffusion models easily fail to generalize when the conditioning history is OOD, and temporal history guidance resolves this challenge, through a systematic study on RealEstate10K. Below, we detail the experiment.

\textbf{What makes a history OOD?} As shown in the training data distribution of \cref{fig:ood_history}, we find that the rotation angle of the camera poses within a single training scene is typically small, rarely exceeding 100°. Hence, a history with a wider rotation angle, such as 150°, is considered OOD. Based on this observation, we assign the following tasks to the models: \emph{``Given a 4-frame history, with varying rotation angles, generate 4 frames that interpolates between these frames.''}

\textbf{Evaluation Based on Rotation Angles.} We categorize all scenes based on their rotation angles, into the bins of $[0^\circ, 10^\circ], [10^\circ, 20^\circ], \ldots, [170^\circ, 180^\circ]$. Based on the statistics of the training scenes, we conceptually classify the bins of $[0^\circ, 10^\circ], \ldots, [90^\circ, 100^\circ]$ as \emph{in-distribution}, $[100^\circ, 110^\circ], \ldots, [130^\circ, 140^\circ]$ as \emph{slightly OOD} ($< 500$ training scenes), and $[140^\circ, 150^\circ], \ldots$ as \emph{OOD} ($< 100$ training scenes). We then randomly select 32 test scenes (or less if the bin contains fewer scenes) from each bin. For each scene, we select 4 equally spaced frames from the beginning and end of it as the history, and designate the target frames as those in between. We evaluate by computing the LPIPS between the generated and target frames, and report the average LPIPS score for each bin, as shown in \cref{fig:ood_history}.

\textbf{\emph{History Guidance Scheme.}} From a full history $\cH = \{0, 1, 2, 3\}$, we compose scores conditioned on the following two history subsequences: $\cH_1 = \{0, 1, 2\}$ and $\cH_2 = \{1, 2, 3\}$, each with a guidance scale of $\omega_1 = \omega_2 = 2$. Additionally, we implement an extended version of temporal history guidance discussed in \cref{app:method_details_temporal}, by also composing generation subsequences: $\cG_1 = \{4, 5, 6\}$ and $\cG_2 = \{5, 6, 7\}$ chosen from the full generation $\cG = \{4, 5, 6, 7\}$. For the baseline using vanilla history guidance, we apply a guidance scale of $\omega = 2$ to the full history $\cH$.

\subsection{Details on Long Context Generation (\cref{sec:exp_temporal_guidance}, \textbf{Task 2}).}
We train a $50$-frame \mtd model that can condition on history up to a length of $25$ following the simplified objective Appendix ~\ref{app:method_details_objective_causal}. Note that this is equivalent to $100$ frames under the original video with a frameskip of $2$, or one-third of the maximum video length. We sample an initial context of $25$ from the dataset and use our trained model to auto-regressively diffuse the next $25$ frames conditioned on the previous $25$. We roll out $5$ times, or 125 frames in total, converging the maximum video length in the dataset.

\textbf{\emph{History Guidance Scheme.}}
During sampling, we compose the scores from one long-context model and one short-context model, with context lengths of $25$ and $4$ respectively. Subtracting the unconditioned score doesn't play a significant role on this dataset so we proceed to compose the above two scores only, with a simple weighting of $50\%$ each. 

\subsection{Details on Robot Imitation Learning (\cref{sec:exp_temporal_guidance}, \textbf{Task 3}).} 

\textbf{Baselines.}
We compare against other diffusion-based imitation learning methods using our same architecture and implementation. First, we compare against a typical Markovian model, which diffuses the next few actions only based on current observation. Then, we use a variant of this Markovian model, which can see the previous two frames as a short history but still no long-term memory. Notice that these two short history lengths represent the current mainstream approaches ~\cite{chi2023diffusion}. In addition, we have a third baseline trained to condition on the entire history so far, representing a family of decision-making as sequence generation methods. For the convenience of notation, we will refer to these baselines as Markov model, 2-frame model, and full-history model. All baselines are trained to diffuse actions and next observations jointly.

\textbf{The Need to Compose Subtrajectories.}
As we mentioned in the dataset description, robot imitation learning is a sequence task that requires both long-term memory and local reactive behavior. While both are important to the final task's success, a short-context model will trivially fail most of the time since it won't remember which final state to proceed to. Therefore we focus on our experiment design on exploiting the failure mode of long-context models. One predominant failure mode is overfitting - since the imitation learning dataset is extremely small, a long-context model can attribute an action to any coincidental features. For example, all swapping trajectories in the dataset feature the behavior of putting the first fruit in the very center of the initially empty slot and coming back later to move it away from that center location. How should the model determine where it should pick up this fruit? There is little guarantee for it to determine correctly that it shall proceed to move its gripper right above that fruit versus just blindly going to the center. Whenever a human perturbs this fruit from the very center of the slot to the edge of the slot, an overfitted model will still move to the very center and proceed to grasp air, ignoring the actual location of that fruit. Therefore, theoretically, a full-history model would never be able to react to such perturbation, since it had never seen a trajectory with such perturbation and a successful trajectory would be out-of-distribution. Instead, it needs to mix in some behavior from a local reactive policy to perform the task, leveraging the fact that whenever a long history is out-of-distribution, you can always fall back to a shorter context model and imitate relevant sub-trajectories. Therefore, the only way to solve this task under the adversarial human is to stitch sub-trajectories together while keeping a long-term memory. 

\textbf{\emph{History Guidance Scheme.}}
To achieve the aforementioned stitched behavior, we compose three diffusion models with a context of $1$ frames, $4$ frames, and full history. We assign the full-history model with a small weight of $0.2$, the $1$ frame model, and the $4$ frame model with a weight of $0.45$ each. Like Minecraft, we didn't find subtracting unconditioned score super important in this task so we omitted it. The frames here refer to the bundle of the next $15$ actions and the single future video frame after that as we mentioned earlier in implementation details. 

\subsection{Details on Ultra Long Video Generation (\cref{sec:exp_long_navigation}).}
\label{appendix:long_rollout_details}
We provide additional details on generating long navigation videos on RealEstate10K, incorporating all advanced techniques associated with \mtd and history guidance. The generation of long navigation videos is divided into two phases: (i) a rollout phase, where the model generates a long video using a sliding window approach, and (ii) an interpolation phase, where the generated frames are further interpolated to create a smooth video. The process is detailed below.

\textbf{(i) Rollout Phase.} During the rollout phase, starting with a \emph{single image} randomly selected from the dataset, the model generates a long video using a sliding window, where it is conditioned on the last 4 frames to generate the next 4 frames. The first iteration is an exception, where the model is conditioned on the single image and generates the next 7 frames. Importantly, navigation cannot rely on the ground truth camera poses for two reasons: 1) videos in the dataset are relatively short (less than 300 frames), so we quickly exhaust available camera poses, and 2) the navigation task is highly stochastic, meaning the ground truth camera poses may not align with the generated frames (e.g., moving straight into a wall). To address this, we have developed a simple navigation UI, allowing a \emph{user to navigate freely in the scene by providing inputs} after each sliding window iteration. Specifically, the user can specify the horizontal and vertical angles, relative to the current frame, for the desired navigation direction, as well as the movement distance. This input is converted into a sequence of camera poses, which are then used as conditioning input for the model to sample the next set of frames. This process is repeated until the desired video length is achieved. 

\textbf{(ii) Interpolation Phase.} Next, in the interpolation phase, leveraging \mtd's flexibility which supports interpolation, we interpolate between the generated frames by a factor of 7. Specifically, using every pair of consecutive generated frames as history, we interpolate 6 frames between them. Camera poses for the interpolated frames, which should be given as input to the model, are computed by linearly interpolating the camera poses of the frames at both ends. More specifically, rotation matrices are interpolated using SLERP~\cite{shoemake1985animating}, and translation vectors are linearly interpolated.

\textbf{\emph{History Guidance Scheme.}} Finally, we discuss how history guidance is utilized throughout the navigation task. During the sliding window rollout, the default \HG scheme is \HGf, which we find to be extremely stable during long rollouts. Specifically, we apply \HGf with a guidance scale of $\omega = 4$ with a fractional masking degree of $k_\cH = 0.4$, chosen to ensure optimal stability. Additionally, we switch to \HGv with a guidance scale of $\omega = 4$ for more challenging situations, such as when the model needs to ``extrapolate'' to new areas. This is because \HGv performs better in such challenging scenarios, although it is less stable than \HGf, and thus is used sparingly. This switch is triggered when the model is asked to change the direction by more than 30°, or when the model is asked to move further than a certain distance. During the interpolation phase, we apply \HGv with a small guidance scale of $\omega = 1.5$, to ensure the interpolated video is smooth and consistent.

\textbf{Stabilization.} As an additional techinique, we also employ the stabilization technique proposed in Diffusion Forcing~\cite{chen2024diffusion}, where the previously generated frames are marked to be slightly noisy at a level of $k=0.02$, to prevent error accumulation, thereby further stabilizing the long rollout.

\section{Additional Experimental Results}
\label{app:exp_results}

In this section, we present additional experimental results to (i) answer potential questions that may provide further insights into our proposed \mtd and \HG, and (ii) further elaborate and provide additional samples for \cref{sec:experiments}.

\subsection{Additional Results on Fine-tuning to \mtd}
\label{app:exp_finetune}

\begin{figure}[t]
    \begin{subfigure}[t]{0.49\textwidth}
        \centering
        \includegraphics[width=\textwidth]{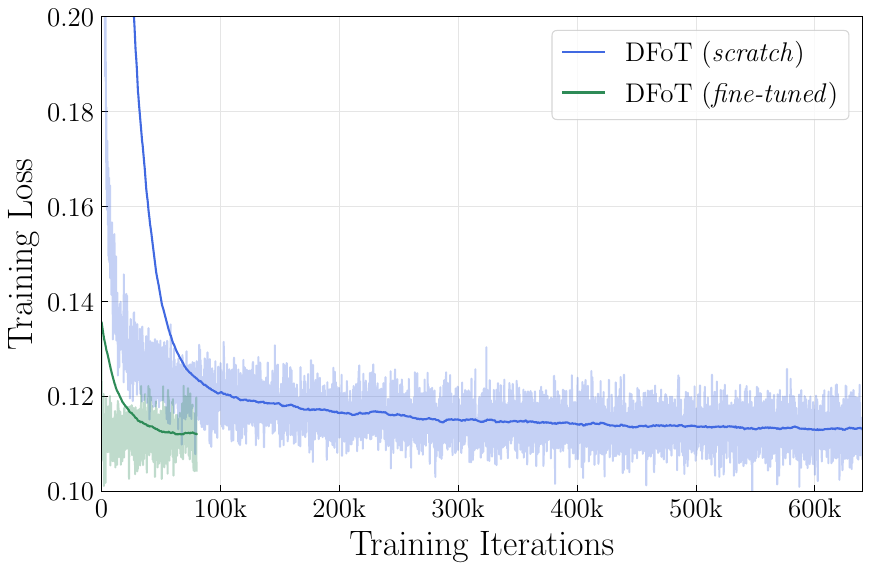}
        \vskip -0.02in
        \caption{
            A comprehensive view of the training loss curves. \xgreen{DFoT \emph{(fine-tuned)}} achieves a low training loss early in the iterations and converges significantly faster than \xblue{DFoT \emph{(scratch)}}.
        }
        \label{fig:training_curve_full}
    \end{subfigure}
    \hfill
    \begin{subfigure}[t]{0.49\textwidth}
        \includegraphics[width=\textwidth]{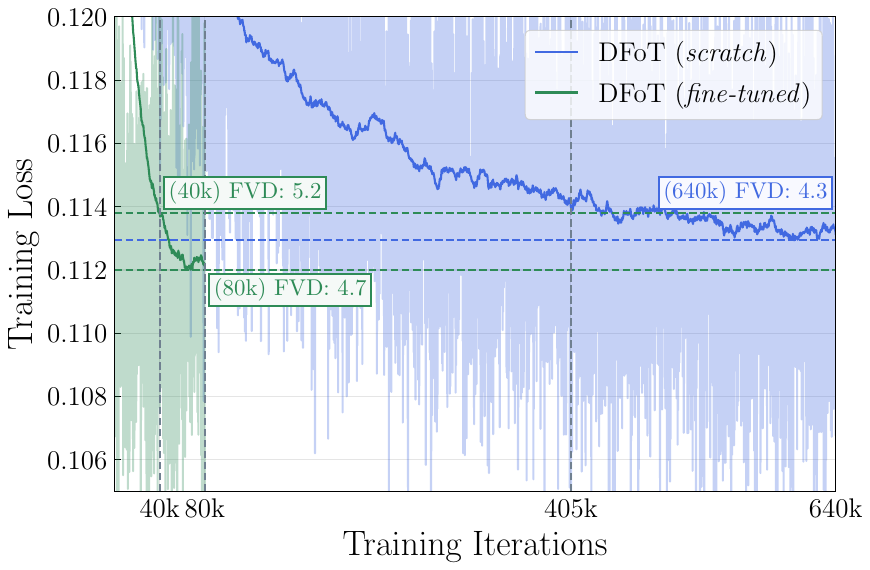}
        \vskip -0.02in
        \caption{
            A zoomed-in view of the training loss curves. Only after 80k iterations, \xgreen{DFoT \emph{(fine-tuned)}} displays a lower training loss than \xblue{DFoT \emph{(scratch)}} trained for 640k iterations.
        }
        \label{fig:training_curve_zoomed}
    \end{subfigure}
    \vskip -0.05in
    \caption{
        \textbf{Training loss curves for \mtd, \xblue{trained from \emph{scratch}} and \xgreen{\emph{fine-tuned} from the pre-trained FS model}, on Kinetics-600.}
    }
    \label{fig:training_curve}
\end{figure}

Below we provide detailed results on fine-tuning a pre-trained full-sequence (FS) model to \mtd, both from training and sampling perspectives.

\textbf{Training Dynamics.} We show the training loss curves of for two variants of \mtd, one trained from \emph{scratch} for 640k iterations, and the other \emph{fine-tuned} from the pre-trained FS model for 80k iterations, in \cref{fig:training_curve}. We observe that the pre-trained model already provides a good initialization for \mtd, as the model starts with a low training loss and converges rapidly in the early iterations, in \cref{fig:training_curve_full}. Surprisingly, the fine-tuned model achieves a lower training loss than the model trained from scratch after only 80k iterations, as shown in \cref{fig:training_curve_zoomed}. Moreover, after 40k iterations, the fine-tuned model exhibits a training loss comparable to the model trained from scratch for 405k iterations, which is $\sim$10x speedup. This highlights the superior efficiency and ease of training \mtd by fine-tuning from a pre-trained model. While this opens up the possibility of fine-tuning large foundational video diffusion models to \mtd with small computational cost, we leave this as future work.

\textbf{FVD Metric Evolution.} In contrast to the training loss, \cref{fig:training_curve_zoomed} (or \cref{tab:comparison_quantitative}) shows that the fine-tuned model achieves a slightly higher FVD score than the model trained from scratch, although being highly competitive even after 40k iterations. We attribute this discrepancy to the use of EMA, which is commonly employed in diffusion models to enhance sample quality~\cite{ddpm,dhariwal2021diffusion}. By default, we use an EMA decay of 0.9999, and thus the model weights used for sampling are affected by the last tens of thousands of training iterations. Therefore, the fine-tuned model's superior training loss does not immediately translate to a lower FVD score, but we expect it to outperform the model trained from scratch after an additional short training period. While one may consider simply fine-tuning the model without EMA to speed up, EMA is crucial for sample quality; for example, at 80k iterations, FVD without EMA is 7.3, significantly higher than the 4.7 with EMA. This suggests that choosing a smaller EMA decay that still guarantees sample quality, through sophisticated strategies such as post hoc EMA tuning~\cite{karras2024analyzing}, may be a promising direction for future work.

\subsection{Ablation Study on Binary-Dropout Diffusion with Vanilla History Guidance}

\begin{figure}[t]
    \begin{subfigure}[t]{0.49\textwidth}
        \centering
        \includegraphics[width=\textwidth]{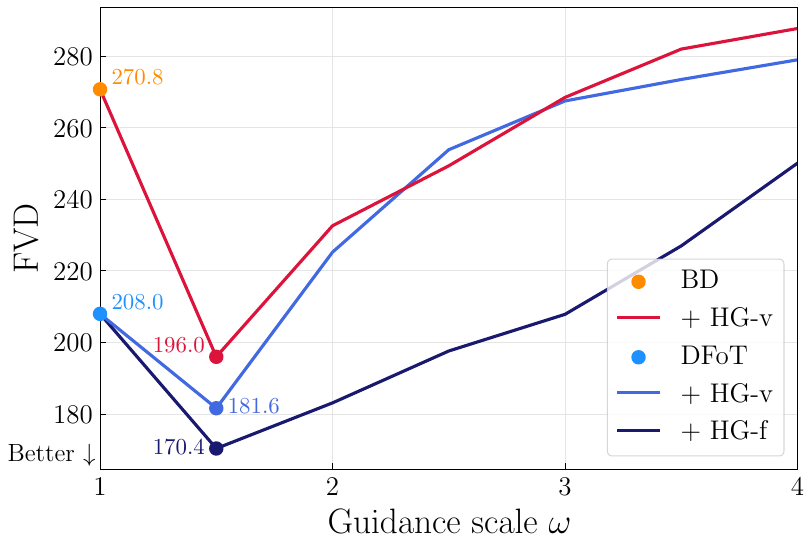}
        \vskip -0.02in
        \caption{
            FVD as a function of guidance scale $\boldsymbol{\omega}$ for \mtd and BD using \HG. Both with \HGv, \mtd yields better FVD-$\omega$ curves than BD and thus achieves a lower best FVD score. Applying \HGf, which is specific to \mtd, enlarges the performance gap.
        }
        \label{fig:binary_guidance_quantitative}
    \end{subfigure}
    \hfill
    \begin{subfigure}[t]{0.49\textwidth}
        \raisebox{0.09in}{
            \includegraphics[width=\textwidth]{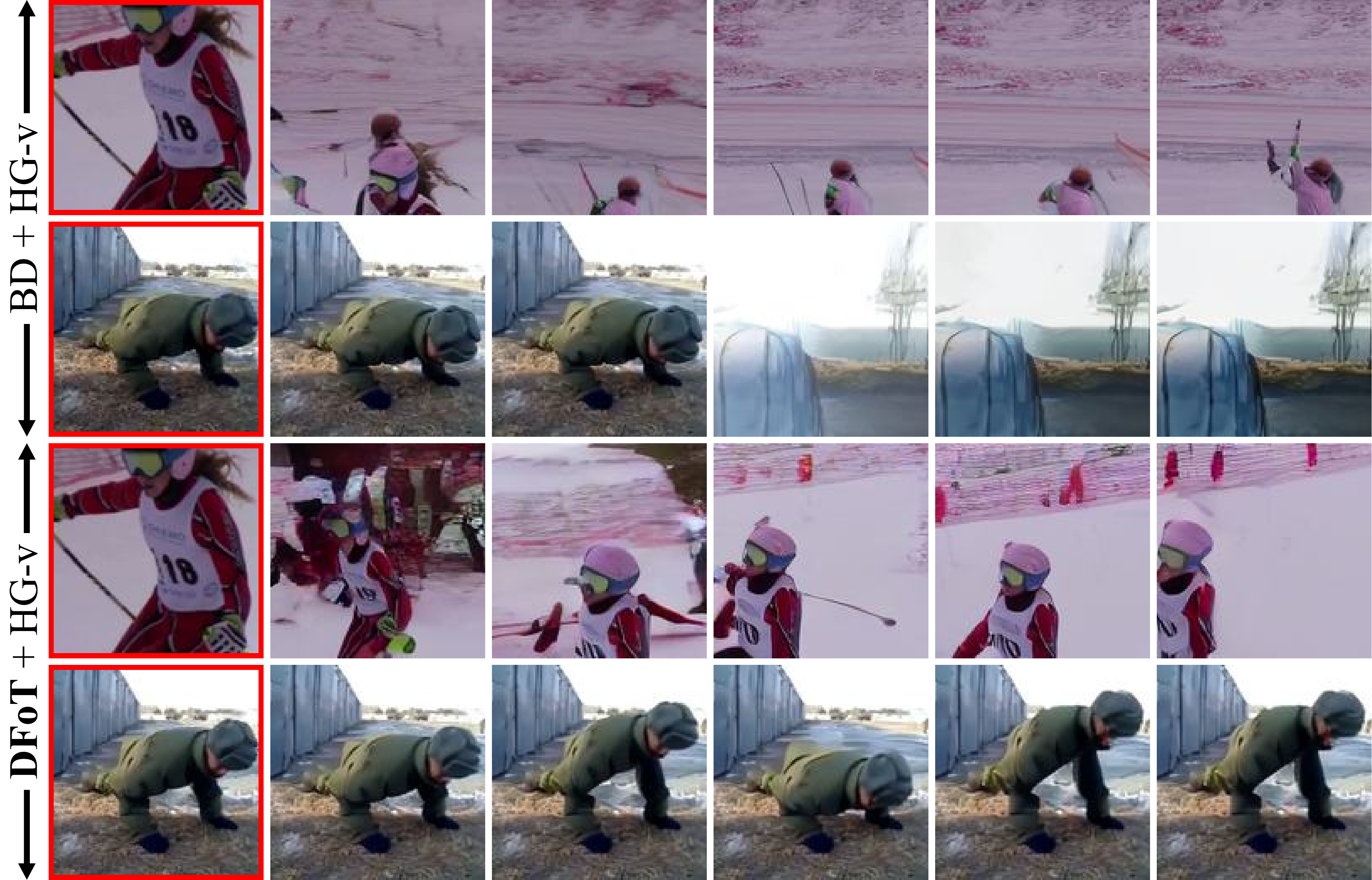}
        }
        \vskip -0.02in
        \caption{
            Qualitative comparison of \mtd and BD using \HGv with optimal guidance scales $\omega = 1.5$. While \mtd generates consistent, high-quality samples, BD struggles to remain consistent with the history frames and produces artifacts. \textcolor{red}{\setlength{\fboxsep}{1.5pt}\textcolor{red}{\fbox{\textcolor{black}{Red box}}}} = history frames.
        }
        \label{fig:binary_guidance_qualitative}
    \end{subfigure}
    \vskip -0.05in
    \caption{
        \textbf{History Guidance works better with \mtd than with Binary-Dropout Diffusion (BD).}
    }
    \label{fig:binary_guidance}
    \vskip -0.1in
\end{figure}

While we have shown that Binary-Dropout Diffusion (BD) performs poorly as a base model (\textbf{Q2} of \cref{sec:exp_ablation}), BD still can implement vanilla history guidance due to its binary dropout training. As such, a natural question is: \emph{How does BD perform with \HGv, compared to \mtd?} To answer this question, we repeat the Kinetics-600 rollout experiment in \cref{sec:exp_history_guidance} using BD with \HGv, comparing against \mtd with \HG. See \cref{fig:binary_guidance} for the results. We observe that \mtd consistently outperforms BD across all guidance scales except for $\omega = 2.5$, as shown in \cref{fig:binary_guidance_quantitative}. Under their optimal guidance scales of $\omega = 1.5$, \mtd achieves a lower FVD score of 181.6 compared to BD's 196.0, and qualitatively, generates more consistent, high-quality samples, as shown in \cref{fig:binary_guidance_qualitative}. When using \HGf, which is only applicable to \mtd, \mtd further outperforms BD, achieving an FVD score of 170.4. These results highlight that \mtd is a better base model for implementing history guidance, both in performance and in a variety of guidance methods that can be applied.

\subsection{Detailed Results on Long Context Generation (\cref{sec:exp_temporal_guidance}, \textbf{Task 2})}
\label{app:exp_results_minecraft}
We calculate the FVD on $1024$ samples across all $125$ generated frames. A simple conditional diffusion model with context full context achieves an FVD of 97.625 while our temporal guidance achieves an FVD of 79.19 (lower is better). We note that while traditionally FVD is a bad metric for videos with high intrinsic variance, it's well-suited for our benchmark since both action-conditioning and the dataset design constrain the possible variance. We visually observe that \method's prediction aligns well with the ground truth semantically over the majority of the frames in a video, showing the variance is well-warranted. We visualize one randomly picked sample in Figure~\ref{fig:minecraft_vis}, showing that temporal guidance can maintain high-quality details far into the future even without CFG. In the meanwhile, the long-context model without temporal guidance can suffer from the high dimensional context, which makes it much more likely to see out-of-distribution frames in its history.

\subsection{Detailed Results on Long-horizon yet Reactive Imitation Learning (\cref{sec:exp_temporal_guidance}, \textbf{Task 3})}
\label{app:exp_results_robot}
\begin{figure}[t]
    \centering
    \includegraphics[width=\columnwidth]{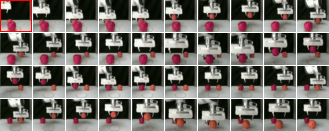}
    \vskip -0.1in
    \caption{
        \textbf{Visualization of the fruit-swapping task through a \mtd generated video.} Two fruits are randomly put within two random slots. The robot is tasked with swapping its slots using the third slot and moving one fruit at a time. This task requires long-horizon memory because it needs to remember the initial location of the fruit for the task completion, but also react to different fruit locations within each slot, which is combinatorically impossible form the dataset.
    }
    \label{fig:robot_generated}
    \vskip -0.1in
\end{figure}

We examine the success rate of robot imitation learning quantitatively by randomizing the environment 100 times before testing the temporal guidance model as well as its baselines. We found that the Markov baseline fails to perform the task completely as expected since it has trouble sticking to a specific plan - it would move away from fruit and then move back halfway since it has no memory. The $4$-frame model suffers from the same issue and cannot finish the task. It does react well to perturbations on the object and picks up the fruit from time to time, showing short context indeed prevents overfitting from temporal locality. We found that the full-history model, with the maximum possible memory, performs well whenever there is no human perturbation. However, as soon as the adversarial human perturbs the fruit during the task execution, this policy often blindly goes to the very center of the third slot while the object is already moved to the edge of the slot. The policy will then proceed to close its gripper, holding nothing, and then move to the next slot, thinking it has something in its hand. There are occasional cases when this doesn't happen and the model actually reacts to the adversarial perturbation, although infrequently and only happens to the case then perturbation from the slot center isn't too big. Overall this shows that using a full-context model naively can make the model suffer from overfitting and one may want to manually emphasize the temporal locality prior. Finally, we tested \mtd composed guidance and found it to achieve a much higher success rate of $83\%$, showing that it's actually stitching the subtrajectories to make decisions, or at least simultaneously borrowing the memory from the full-context model while staying locally reactive using the short-context model. In addition, we attempted a few stronger perturbations such that the adversarial human will deliberately knock off the fruit from the robot's gripper when it's closing. We found that temporal guidance can even react to this by regrasping and eventually finishing the whole swapping task. However, even temporal guidance achieves only $28\%$ to this strong perturbation since it's way too out-of-distribution and may require more data. Qualitatively, we visualize a generated robot trajectory with an unseen configuration in Figure~\ref{fig:robot_generated}. 

\subsection{Additional Qualitative Results}
\label{app:exp_results_additional}

We present additional qualitative results to supplement our main findings in \cref{sec:experiments}. Please refer to \cref{fig:comparison_qualitative_additional,fig:flexibility,fig:vanilla_re10k,fig:ood_history_qualitative_full,fig:navigation_comparison,fig:navigation} for detailed visual comparisons, which are discussed below.

\textbf{\mtd vs. Baselines (\cref{sec:exp_ablation}, Q1).} We present additional qualitative comparisons of \mtd against baselines in \cref{fig:comparison_qualitative_additional}, as an extension to the qualitative results shown in \cref{fig:comparison_qualitative}. Consistent with the quantitative findings in \cref{tab:comparison_quantitative}, \mtd produces more consistent and higher-quality samples compared to all baselines.

\textbf{Empirical Flexibility of \mtd (\cref{sec:exp_ablation}, Q3).}
As evidence of the empirical flexibility of \mtd, we present additional qualitative results on RealEstate10K in \cref{fig:flexibility}. Our \mtd model successfully generates consistent samples, given histories that vary both in length and timestamps. This highlights the effectiveness of our new training objective, which transforms \mtd into a flexible multi-task model, uniformly achieving high performance across diverse tasks.

\textbf{Improving Video Generation via History Guidance (\cref{sec:exp_history_guidance}).} In addition to the results shown in \cref{fig:vanilla_guidance} for Kinetics-600, we present further qualitative results on RealEstate10K in \cref{fig:vanilla_re10k}, highlighting the effectiveness of vanilla history guidance in improving video generation. With increasing guidance scales, the generated samples exhibit significantly higher frame quality and consistency, likewise to the results on Kinetics-600. This behavior is consistent across different tasks—extrapolation and showcasing the broad applicability of history guidance in any history-conditioned video generation task.

\textbf{Robustness to Out-of-Distribution (OOD) History (\cref{sec:exp_temporal_guidance}, Task 1).} We provide additional qualitative results for \textbf{Task 1} from \cref{sec:exp_temporal_guidance}, as illustrated in \cref{fig:ood_history_qualitative_full}. These results demonstrate that \HGt enables \mtd to \emph{uniquely} remain robust to OOD history. Failure cases clearly observed in baselines show that typically, video diffusion models only perform well when the history is in-distribution. By composing in-distribution short history windows, \HGt can effectively approximate strictly OOD histories that were unseen during training.

\subsection{Detailed results on Ultra Long Video Generation (\cref{sec:exp_long_navigation}).}
\label{app:exp_results_navigation}

We present extended results from \cref{sec:exp_long_navigation} below.

\textbf{DFoT vs. SD on Long Rollout.} To begin with, we highlight the significant challenges of generating long navigation videos using the RealEstate10K dataset. Specifically, we investigate the performance of SD, the most conventional and competitive baseline. To mitigate the stochastic nature of navigation that complicates comparisons, we evaluate \mtd with \HG and SD on a simple navigation task of moving straight, which is almost deterministic. We avoid using interpolation—applicable only to \mtd—to ensure a fair comparison. The results, shown in \cref{fig:navigation_comparison}, indicate that SD struggles to maintain consistency with the history frame, failing around frame $\sim$30. We attribute this to SD's inferior quality and consistency, along with its inability to recover from small errors during generation. In contrast, \mtd with \HG succeeds to stably roll out beyond frame 72. Alongside the qualitative comparison, we note that 4DiM~\cite{watson2024controlling}, an SD model that, to our knowledge, produces the longest and highest-quality videos on RealEstate10K among the methods in the literature, generates videos with a maximum length of 32 frames, which is significantly shorter than our long navigation videos.

\textbf{More Samples.} We present four samples of long navigation videos generated by \mtd with \HG in \cref{fig:navigation_1,fig:navigation_2,fig:navigation_3,fig:navigation_4}. These samples demonstrate the capability of \mtd with \HG to stably generate extremely long videos. The generated videos are notably longer than those in the training dataset, which primarily cover a single room or small area, rather than multiple connected rooms or areas.
\end{document}